\begin{document}
%
\title{Decentralized Federated Learning: Balancing Communication and Computing Costs}
%
%
%

\author{Wei Liu, Li Chen, and Wenyi Zhang,~\IEEEmembership{Senior Member,~IEEE}
}
\maketitle

\begin{abstract}
	Decentralized stochastic gradient descent (SGD) is a driving engine for decentralized federated learning (DFL). The performance of decentralized SGD is jointly influenced by inter-node communications and local updates. In this paper, we propose a general DFL framework, which implements both multiple local updates and multiple inter-node communications periodically, to strike a balance between communication efficiency and model consensus. 
	It can provide a general decentralized SGD analytical framework. We establish strong convergence guarantees for the proposed DFL algorithm without the assumption of convex objectives. The convergence rate of DFL can be optimized to achieve the balance of communication and computing costs under constrained resources. For improving communication efficiency of DFL, compressed communication is further introduced to the proposed DFL as a new scheme, named DFL with compressed communication (C-DFL). The proposed C-DFL exhibits linear convergence for strongly convex objectives. Experiment results based on MNIST and CIFAR-10 datasets illustrate the superiority of DFL over traditional decentralized SGD methods and show that C-DFL further enhances communication efficiency. 
\end{abstract}

\begin{IEEEkeywords}
	Compressed communication, decentralized machine learning, federated learning.
\end{IEEEkeywords}

%
\IEEEpeerreviewmaketitle

\section{Introduction}

Recently, thanks to the explosive growth of the powerful individual computing devices worldwide and the rapid advancement of Internet of Things (IoT), data generated at device terminals are experiencing an exponential increase \cite{chiang2016fog}. Data-driven machine learning is developing into an attractive technique, which makes full use of the tremendous data for predictions and decisions of future events. As a promising
data-driven machine learning variant, Federated Learning (FL) provides a communication-efficient approach for processing voluminous distributed data and grows in popularity nowadays. 

As a critical technique of distributed machine learning, FL was first proposed in \cite{mcmahan2017communication} under a centralized form, where edge clients perform local model training in parallel and a central server aggregates the trained model parameters from the edge without transmission of raw data from edge clients to the central server. For convergence analysis of FL, the work in \cite{wang2019adaptive} provided a theoretical convergence bound of FL based on gradient-descent. The area of FL has been studied in the literature from both theoretical and applied perspectives \cite{kairouz2019advances,li2020federated,konevcny2016federated}. Because of the property of transmitting model parameters instead of user data and the distributed network structure that an arbitrary number of edge nodes are coordinated through one central server, 
FL plays a key
role in supporting privacy protection of user data and deploying in complex environment with massive intelligent terminals access to the network center.

In order to improve learning performance with guaranteeing communication efficiency of FL,
extended methods for executing FL have been proposed and developed. As a fundamental means of achieving FL, parallel SGD has been studied and developed in previous works \cite{dean2012large} \cite{lian2015asynchronous}.
In original parallel SGD \cite{yu2019parallel}, each round consists of one local update (computing) step and one global aggregation (communication) step. 
To improve communication efficiency of distributed learning,
Federated Averaging (FedAvg), as a variant of parallel SGD, was proposed in \cite{mcmahan2017communication}. Multiple local updates are executed on each client between two communication steps in FedAvg. 
FedAvg was generalized to non-i.i.d. cases  \cite{li2019convergence,sattler2019robust,wang2020optimizing}; momentum FL was proposed for convergence acceleration \cite{liu2020accelerating}; FedAvg with a Proximal Term (FedProx) was studied for handling heterogeneous federated learning environment \cite{li2018federated}; federated multi-task learning was proposed to improve learning performance under unbalance data \cite{smith2017federated}; robust FL handles the noise effect during model transmission in wireless communication \cite{ang2020robust}.

The aforementioned centralized FL framework requires a central server for data aggregation. As an aggregation center of information from clients, the server could potentially incur a single point of failure, such as suffering from high bandwidth and device battery costs \cite{vanhaesebrouck2017decentralized}. It may also become the busiest node incurring a high communication cost when a large number of clients are deployed in a federated learning system \cite{lian2017can}. 
When there is no central server, the DFL paradigm is rapidly gaining popularity, where edge clients exchange their local models or gradient information with their neighboring clients to achieve model consensus. This avoids a single point of failure, and reduces the communication traffic jam due to the central server \cite{hu2019decentralized}.
DFL can be traced back to decentralized optimization \cite{nedic2009distributed,wu2017decentralized,wei2012distributed,li2019communication,duchi2011dual,yang2019byrdie,zhang2021newton,lu2020computation} and decentralized SGD \cite{yuan2016convergence,sirb2018decentralized,lian2017can,jiang2017collaborative,wang2019cooperative,li2019communicationn} where gossip-based model averaging was applied for model consensus evaluation \cite{tsianos2012communication}.
For decentralized optimization, subgradient descent method \cite{nedic2009distributed}, decentralized consensus optimization with asynchrony and delays \cite{wu2017decentralized}, alternating direction method of multipliers (ADMM) \cite{wei2012distributed} \cite{li2019communication}, dual averaging method \cite{duchi2011dual}, Byzantine-resilient distributed
coordinate descent \cite{yang2019byrdie} and Newton tracking method \cite{zhang2021newton} were studied to solve the decentralized optimization problem, and convex optimization problem with a sum of smooth convex functions and a non-smooth regularization term was solved in \cite{lu2020computation}.
As a driving engine, decentralized SGD provides a means of realizing DFL. A convergence analysis of decentralized gradient descent was provided
in \cite{yuan2016convergence}; the convergence of decentralized approach with delayed gradient information was analyzed \cite{sirb2018decentralized}; a comparison between decentralized SGD and centralized ones was made, illustrating the advantage of decentralized approaches \cite{lian2017can}.
Decentralized SGD performing one local update step and one inter-node communication step in a round (D-SGD) was proposed and analyzed in \cite{jiang2017collaborative}. 
In order to improve communication efficiency of decentralized SGD, the work in \cite{wang2019cooperative} first proposed a framework named Cooperative SGD (C-SGD), where the edge clients perform multiple local updates before exchanging the model parameters.
The subsequent work in \cite{li2019communicationn} studied how to efficiently combine local updates and decentralized SGD based on the decentralized framework by performing several local updates and several decentralized SGDs. 

Note that
decentralized SGD in the literature mainly focuses on improving communication efficiency by performing multiple local update steps and one inter-node communication step in a round \cite{wang2019cooperative}.  Besides communication efficiency, model consensus also has an important influence on the convergence performance of DFL. Model consensus, which means the removal of discrepancy of model parameter averagings among neighboring nodes, improves the convergence performance of DFL by executing multiple inter-node communications in the DFL framework. However, to the best of our knowledge, a joint consideration of communication efficiency and model consensus in DFL framework has not been addressed. Only considering communication efficiency incurs inferior model consensus, and vice versa. 
Motivated by the above observation, we consider a unified DFL framework where each node performs \textit{multiple} local updates and \textit{multiple} inter-node communications in a round alternately. The proposed framework can balance communication efficiency and model consensus with convergence guarantee.
We prove the global convergence and derive a convergence bound of the proposed DFL, which indicates that multiple inter-node communications effectively improve the convergence performance. 
In order to improve communication efficiency of DFL, we further introduce compression into DFL, and the DFL framework with compressed communication is termed C-DFL. The convergence rate of C-DFL is derived and linear convergence is exhibited for strongly convex objectives.   
The main contributions of this paper are summarized as follows:
\begin{itemize} 
	\item \textbf{Design of DFL:} To consider the joint influence of communication efficiency and model consensus, we design a novel DFL framework to balance the allocation of communication and computation resources for optimizing the convergence performance. In DFL, each node performs multiple local update steps and multiple inter-node communication steps in a round. 
	
	\item \textbf{Convergence Analysis of DFL:} We establish the convergence of DFL and derive a convergence bound that incorporates network topology and the allocation  of computation and communication steps in a round. It shows that the convergence performance of DFL outperforms that of traditional frameworks.
	
	\item \textbf{C-DFL Design and Convergence:} In order to improve communication efficiency of DFL, we further introduce compressed communication to DFL as a new scheme, termed C-DFL, where inter-node communication is achieved with compressed information to reduce the communication overhead.
	For C-DFL, the convergence performance is based on the synergistic influence of compression and allocation of communication and computation steps in a round. We provide a convergence analysis of C-DFL and establish its linear convergence property.
	
	
\end{itemize} 
\begin{figure}[!t]
	\centering
	\includegraphics[scale=0.3]{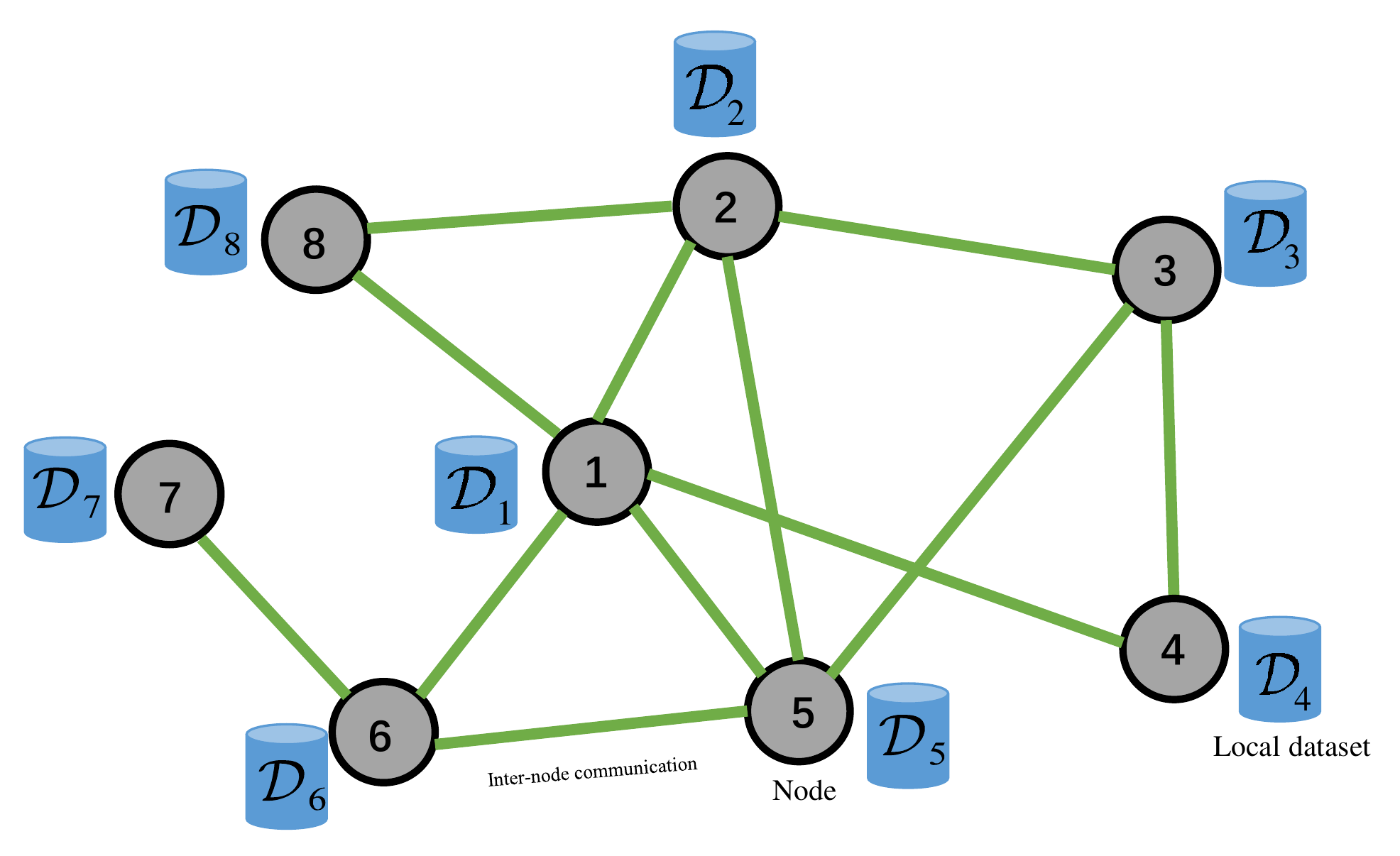}        
	\caption{System model of decentralized federated learning}
	\label{fig1}
\end{figure}

The remaining part of this paper is organized as follows. We introduce the system model for solving the distributed optimization problem in DFL in Section \uppercase\expandafter{\romannumeral2}. 
The design of DFL is described in detail in Section \uppercase\expandafter{\romannumeral3} and the convergence analysis of DFL is presented in Section \uppercase\expandafter{\romannumeral4}. Then we introduce C-DFL and present its convergence analysis in Section \uppercase\expandafter{\romannumeral5}. We present simulation results in Section \uppercase\expandafter{\romannumeral6}. The conclusion is given in Section \uppercase\expandafter{\romannumeral7}.

\subsection{Notations}
In this paper, we use $\textbf{1}$ to denote vector $\left[1, 1, ..., 1\right]^\top$, and define
the consensus matrix $\textbf{J}\triangleq \textbf{1}\textbf{1}^\top/(\textbf{1}^\top \textbf{1})$, which means under the topology represented by $\textbf{J}$, DFL can realize model consensus. All vectors in this paper are column vectors. We assume that the decentralized network contains $N$ nodes.
So we have $\textbf{1}\in \mathbb{R}^N$ and $\textbf{J}\in \mathbb{R}^{N\times N}$. 
We use $\left\|\cdot\right\|$, $\rm\left\|\cdot\right\|_{F}$, $\rm\left\|\cdot\right\|_{op}$ and $\|\cdot\|_2$ to denote the $\ell_2$ vector norm, the Frobenius matrix norm, the operator norm and the $\ell_2$ matrix norm, respectively.

\section{System Model}
Considering a general system model as illustrated in Fig. \ref{fig1}, 
we describe the decentralized federated learning structure. The system model is made of 
$N$ edge nodes.
The $N$ nodes have distributed datasets $\mathcal{D}_1,\mathcal{D}_2,...,\mathcal{D}_i,...,\mathcal{D}_N$ with $\mathcal{D}_i$ owned by node $i$. 
We use $\mathcal{D}=\{\mathcal{D}_1, \mathcal{D}_2,
..., \mathcal{D}_N\}$ to denote the global dataset of all nodes.
Assuming $\mathcal{D}_i\cap\mathcal{D}_j=\emptyset$ for $i\neq j$, we define $D\triangleq|\mathcal{D}|$ and $D_i\triangleq|\mathcal{D}_i|$ for $i=1,2,...,N$, where $|\cdot|$ denotes the size of set, and use $\textbf{w}\in\mathbb{R}^{d}$
to denote the model parameter. 

Then we introduce the loss function of the decentralized network as follows. Let $(\textbf{x}_j,y_j)$ denote a data sample, where $\textbf{x}_j$ is regarded as an input of the learning model and $y_j$ is the label of the input $\textbf{x}_j$, which denotes the desired output of the model. 
The same learning models are embedded in all different nodes. 
We use $f(\cdot)$ to denote the loss function based on one data sample. 
Furthermore, we
present the definition of loss function based on local dataset $\mathcal{D}_i$ and the global dataset $\mathcal{D}$, which are denoted by $F_i(\cdot)$ and $F(\cdot)$, respectively.
We define
\begin{align*}
&F_i(\textbf{w})\triangleq\frac{1}{D_i}\sum_{j\in D_i}f(\textbf{w},\textbf{x}_j,y_j),\\
&F(\textbf{w})\triangleq\frac{1}{D}\sum_{i=1}^{N}D_i F_i(\textbf{w}).
\end{align*}

In order to learn about the characteristic information hidden in the distributed datasets,
the purpose of DFL is to obtain optimal model parameters of all nodes. 
We present the learning problem of DFL. 
The targeted optimization problem is the minimization of the global loss function to get the optimal parameter $\textbf{w}^*$ as follows:
\begin{align}
\label{min_problem}\textbf{w}^*=\mathop {\arg \min}_{\textbf{w}\in\mathbb{R}^d} F(\textbf{w}).
\end{align}
Because of the intrinsic complexity of most machine learning models and common training datasets, finding a closed-form solution of the optimization problem \eqref{min_problem} is usually intractable.
Thus, gradient-based methods are generally used to solve this problem.
We let $\textbf{w}_t$ denote the model parameter at the $t$-th iteration step, $\eta$ denote the learning rate, $\xi^{(i)}_{t}\subset\mathcal{D}_i$ denote the mini-batches randomly sampled on $\mathcal{D}_i$, and $$g(\textbf{w},\xi^{(i)})\triangleq\frac{1}{|\xi^{(i)}|}\sum_{(\textbf{x}_j,y_j)\in\xi^{(i)}}\nabla f(\textbf{w},\textbf{x}_j,y_j)$$ be the stochastic gradient of node $i$.
The iteration of a gradient-based method can be expressed as
\begin{align}
\label{update_rule}\textbf{w}_{t+1}=\textbf{w}_{t}-\eta\left[\frac{1}{N}\sum_{i=1}^{N}g(\textbf{w}_t,\xi^{(i)}_t)\right].	
\end{align}
We will use $g(\textbf{w})$ to substitute $g(\textbf{w},\xi^{(i)})$ in the remaining part of the paper for convenience.

Finally, the targeted optimization problem can be collaboratively solved by local updates and model averagings. In a local update step, each node uses the gradient-based method to update its local model.
In a model averaging step, each node communicates its local model parameters with its neighboring nodes in parallel. 


\section{Design of DFL}\label{section3}
In this section, we introduce the design of DFL to solve the optimization problem as presented in \eqref{min_problem}. Note that there is no central server in the system model of DFL as shown in Fig. \ref{fig1}. We first elaborate the motivation of our work. Then we present
the design of DFL in detail. Finally, we introduce D-SGD and C-SGD as the special cases of DFL.

\begin{figure}[!t]
	\centering
	\includegraphics[scale=0.3]{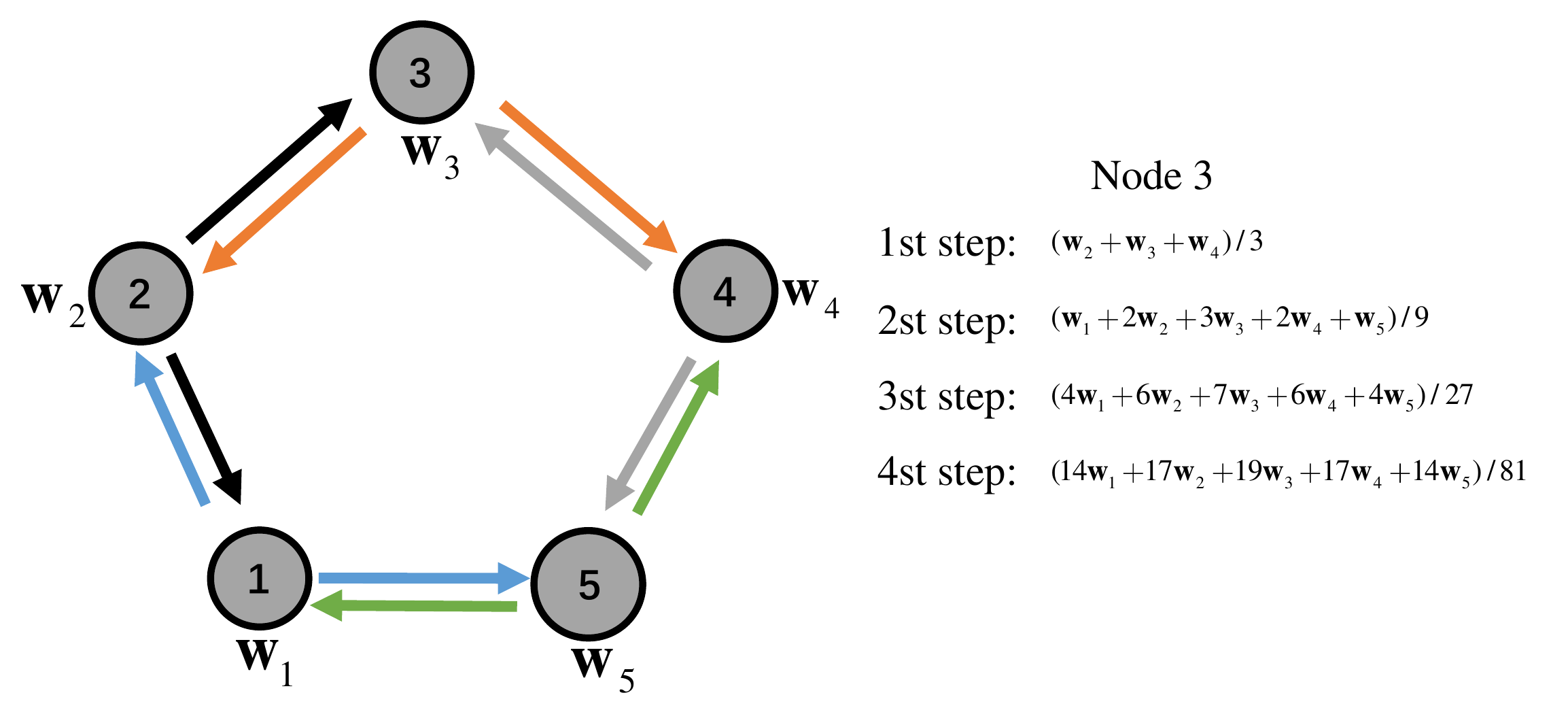}        
	\caption{Illustration of DFL with 5 nodes. Focusing on Node $3$, we can find its model parameter evolution with communication steps.}
	\label{fig2}
\end{figure}
\begin{figure}[!t]
	\centering
	\includegraphics[scale=0.3]{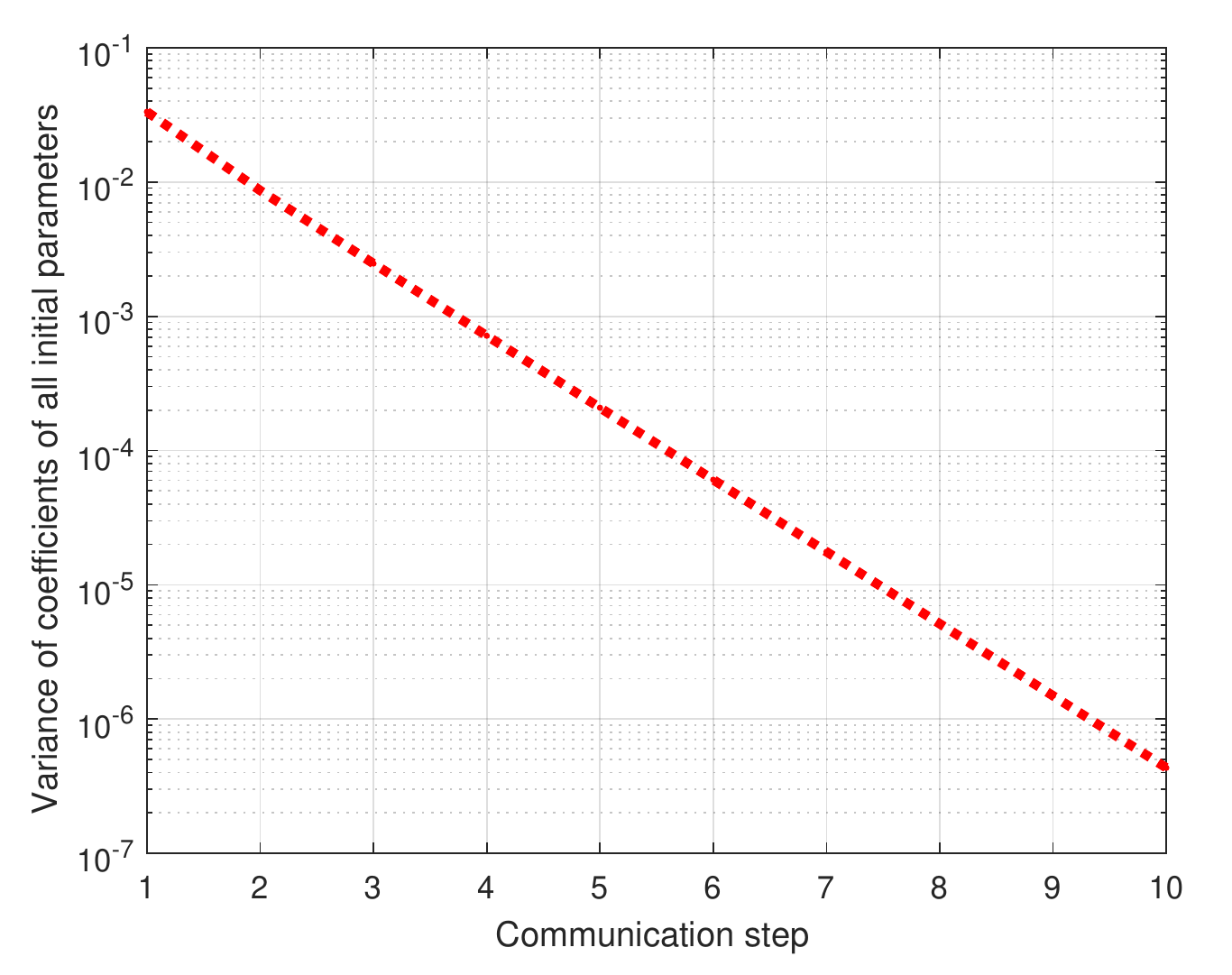}        
	\caption{Illustration of how the variance of coefficients of all initial parameters decreases monotonically with communication step at Node $3$ in Fig. 2.}
	\label{fig3}
\end{figure}
\subsection{Motivation}

Considering the system model,
D-SGD \cite{lian2017can} \cite{jiang2017collaborative}, where each edge node performs one local update and one inter-node communication alternately in a round, was proposed to solve the problem in \eqref{min_problem}. It will incur a low communication efficiency.
To improve communication efficiency of the model, C-SGD \cite{wang2019cooperative} was proposed, where each node performs multiple local updates and one inter-node communication. 
However, in C-SGD, communication efficiency is merely considered by performing multiple local updates in a round. It can incur local drift and inferior model consensus because successive gradient updates in a round make the descent direction locally optimal (but not globally optimal). Furthermore, merely considering model consensus by performing multiple inter-node communications will lead to a high communication cost. 
Therefore,
the existing works do not jointly consider communication efficiency and model consensus on decentralized learning system. We propose to
consider that each node performs multiple inter-node communications between two stages of successive local update steps.

Motivated by the above observation, we describe a general framework, termed DFL, where each node performs multiple local updates and multiple inter-node communications in a round. 
DFL can balance communication efficiency and model consensus by collaboratively allocating computing and communication steps. 
We illustrate the effectiveness of multiple inter-node communications in model consensus as follows.


Without loss of generality,  we set a simplified decentralized network for illustration. As shown in Fig. \ref{fig2}, the model contains five nodes which form a ring topology. After local updates, all nodes get their initial parameters $\textbf{w}_1, ..., \textbf{w}_5$. 
Fig. \ref{fig3} shows the evolution trajectory of coefficients variance of initial parameters with communication steps at Node $3$. We find that the variance monotonically decreases with communication steps, which means more communications better approach the average model and enhance the model consensus. A theoretical explanation will be provided in Proposition \ref{pro1}. 



\begin{figure}[!t]
	\centering
	\includegraphics[scale=0.28]{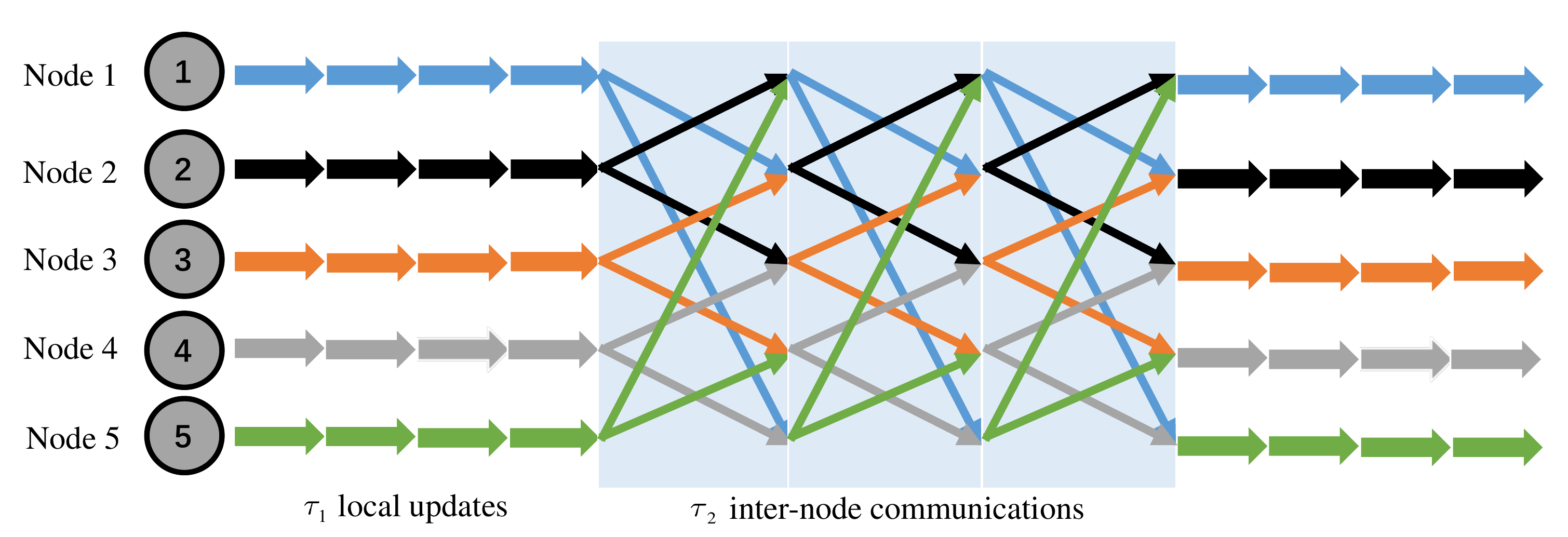}        
	\caption{Illustration of DFL learning strategy. In the decentralized model with 5 nodes in Fig. 2, DFL performs $\tau_2$ inter-node communication steps after $\tau_1$ local update steps.}
	\label{fig4}
\end{figure}

\subsection{DFL}

The DFL learning strategy consists of two stages, which are local update and inter-node communication (communication with neighboring nodes), respectively. Firstly, in local update stage, each node computes the current gradient of its local loss function and updates the model parameter by performing SGD multiple times in parallel.  After finishing local update, each node performs multiple inter-node communications. In inter-node communication stage, each node sends its local model parameters to the nodes connected with it, and receives the parameters from its neighboring nodes. Then model averaging based on the received parameters is performed by the node to obtain the updated parameters. The whole learning process is the alternation of the two stages.

We define $\textbf{C}\in\mathbb{R}^{N\times N}$ as the confusion matrix which captures the network topology, and it is doubly stochastic, i.e., $\textbf{C}\textbf{1}=\textbf{1},\textbf{C}^{\top}=\textbf{C}$. Its element
$c_{ji}$ denotes the contribution of node $j$ in model averaging at node $i$. 
We let DFL perform $\tau_1$ local updates in parallel at each node during the local updates stage. After $\tau_1$ local updates, each node needs to perform  $\tau_2$ inter-node communications. We call $\tau_1$ \textit{the computation frequency} and $\tau_2$ \textit{the communication frequency} in the sequel.
In Fig. \ref{fig4}, we illustrate the learning strategy of DFL. 
Then we define $\tau\triangleq\tau_1+\tau_2$ in the DFL framework and we name the combination of $\tau_1$ local updates and $\tau_2$ inter-node communications as an iteration \textit{round}. Thus the $k$-th iteration round is $[(k-1)\tau, k\tau)$ for $k=1, 2, ...$. We divide an iteration round into $[(k-1)\tau, (k-1)\tau+\tau_1)$
and $[(k-1)\tau+\tau_1, k\tau)$ and define $[k]_1\triangleq[(k-1)\tau, (k-1)\tau+\tau_1)$ and $[k]_2\triangleq[(k-1)\tau+\tau_1, k\tau)$ for convenience.
Thus round $[k]_1$ and $[k]_2$ correspond to the iterative period of local update and inter-node communication, respectively.
We use $\textbf{w}^{(i)}_{t}$ and $g(\textbf{w}^{(i)}_{t})$ to denote the model parameter and the gradient of node $i$ at the $t$-th iteration step, respectively.

\newcommand{\tabincell}[2]{
	\begin{tabular}{@{}#1@{}}
		#2
	\end{tabular}
}
\begin{table*}[!t]  
	\caption{Comparison of different distributed SGD methods}
	\centering
	\label{table-comparison}
	\renewcommand\arraystretch{1.5}
	\begin{threeparttable}
		\begin{tabular}{ccccc}  
			
			\toprule[1pt]   
			
			\textbf{Method} & \tabincell{c}{\textbf{Steps in a round}\\ \textbf{(local update steps, inter-node communication steps)}} & \tabincell{c}{\textbf{Local drift due to local}\\ \textbf{computation}} & \tabincell{c}{\textbf{Local drift due to inter-node }\\ \textbf{communication}} & \textbf{Central server} \\  
			
			\midrule   
			
			FL \cite{wang2019adaptive}&  ($\tau$, Null)\tnote{1} & \multicolumn{1}{c}{High} & \multicolumn{1}{c}{Null\tnote{2}}&Required\\  
			
			D-SGD \cite{jiang2017collaborative}&  ($1$, $1$)  & \multicolumn{1}{c}{Low} & \multicolumn{1}{c}{High}&Not needed\\    
			
			C-SGD \cite{wang2019cooperative}& ($\tau$, $1$) & \multicolumn{1}{c}{High}& \multicolumn{1}{c}{High}&Not needed\\
			
			DFL & ($\tau_1$, $\tau_2$)& \multicolumn{1}{c}{High}& \multicolumn{1}{c}{Low}&Not needed\\
			
			\bottomrule[1pt]  
			
		\end{tabular}
		\begin{tablenotes}
			\footnotesize
			\item[1] In FL, there is no inter-node communication. A central parameter server is used for collecting local models.
			\item[2] FL achieves model consensus by taking a weighted average of all local models. Thus, local drift from inter-node communication dose not exist.
		\end{tablenotes}
	\end{threeparttable}
\end{table*}

\textbf{The learning strategy} of DFL can be described as follows.

\textbf{Local update:} When $t\in[k]_1$, local update is performed at each node in parallel. At node $i$, the updating rule can be expressed as 
$$\textbf{w}_{t+1}^{(i)}=\textbf{w}_{t}^{(i)}-\eta g(\textbf{w}_{t}^{(i)}),$$
where $\eta$ is the learning rate.
Note that each node performs SGD individually based on its local dataset.

\textbf{Inter-node communication:} When $t\in[k]_2$, inter-node communication is performed. Each node 
communicates with the connected nodes to exchange model parameters. After the parameters from all the connected nodes are received, model averaging
is performed by
$$\textbf{w}_{t+1}^{(i)}=\sum_{j=1}^{N}c_{ji}\textbf{w}_{t}^{(j)}.$$

For convenience, we rewrite the learning strategy into matrix form.
We first define \textit{the model parameter matrix} $\textbf{X}_t$ and \textit{the gradient matrix} $\textbf{G}_t\in \mathbb{R}^{d\times N}$ as follows:
\begin{align*}
\textbf{X}_t&=\left[\textbf{w}^{(1)}_{t}, \textbf{w}^{(2)}_{t} ..., \textbf{w}^{(N)}_{t}\right],\\
\textbf{G}_t&=\left[g(\textbf{w}^{(1)}_{t}), ..., g(\textbf{w}^{(N)}_{t})\right].
\end{align*}
Therefore, the learning strategy in matrix form can be rewritten as follows.

\textbf{Local update:} $\textbf{X}_{t+1}=\textbf{X}_{t}-\eta \textbf{G}_{t}$ for $t\in[k]_1$.

\textbf{Inter-node communication:} $\textbf{X}_{t+1}=\textbf{X}_{t}\textbf{C}$ for $t\in[k]_2$.

We further combine the two update rules into one to facilitate the following theoretical analysis.
Before describing the update rule of DFL, we define two time-varying matrices $\textbf{C}_t$, $\textbf{G}^{\prime}_t$ as follows:
\begin{align}
\label{W-t}
\textbf{C}_t&\triangleq
\begin{cases}
\textbf{I},&t\in[k]_1 \\
\textbf{C},&t\in[k]_2,
\end{cases}\\
\label{G-t}
\textbf{G}^{\prime}_t&\triangleq
\begin{cases}
\textbf{G}_{t},&t\in[k]_1 \\
\textbf{0},&t\in[k]_2,
\end{cases}
\end{align}
where $\textbf{I}\in\mathbb{R}^{N\times N}$ is the identity matrix and $\textbf{0}\in\mathbb{R}^{d\times N}$ is the zero matrix. According to the definitions of \eqref{W-t} and \eqref{G-t},
\textbf{the global learning strategy} of DFL can be described as 
\begin{align}
\label{global-update-rule}
\textbf{X}_{t+1}=(\textbf{X}_t-\eta \textbf{G}^{\prime}_t)\textbf{C}_t.
\end{align}

Assuming that all nodes have limited communication and computation 
resources, we use a finite number $T$ to denote the total iteration steps. Then the corresponding number of the iteration round is denoted by $K$ with $K=\lfloor\frac{T}{\tau}\rfloor$\footnote{Note that we set the condition $T-K\tau\leq\tau_1$ in this paper for simplified analysis. The other case $T-K\tau>\tau_1$ can be similarly treated based on the former.}. We present the pseudo-code of DFL in Algorithm \ref{alg1}.

A comparison among different distributed SGD methods is shown in Table \ref{table-comparison}. From this table, we can find that DFL is beneficial for alleviating local drift caused by inter-node communication.
\begin{algorithm}[!t]
	\caption{\textbf{DFL}} 
	\label{alg1}
	\begin{algorithmic}[1]
		\REQUIRE ~~\\ 
		Learning rate $\eta$,
		total number of steps $T$,
		computation frequency $\tau_1$ and communication frequency $\tau_2$ in an iteration round
		\STATE Set the initial value of $\textbf{X}_0$. 
		\label{ code:fram:extract }
		\FOR{$t=1,2,...,K\tau$}
		\label{code:fram:trainbase}
		\IF{$t\in [k]_1$ where $k=1,...,K$} 
		\STATE	$\textbf{X}_{t+1}=\textbf{X}_{t}-\eta \textbf{G}_{t}$ \hfill$//$\textit{local update}
		\ELSE 
		\STATE $\textbf{X}_{t+1}=\textbf{X}_{t}\textbf{C}$ \hfill$//$\textit{inter-node communication}
		\ENDIF
		\ENDFOR
		\FOR{$t=K\tau+1,...,T$}
		\STATE $\textbf{X}_{t+1}=\textbf{X}_{t}-\eta \textbf{G}_{t}$
		\ENDFOR
	\end{algorithmic}
\end{algorithm}

\subsection{Special Cases of DFL}
When performing inter-node communication only once, DFL can degenerate into two special cases, which are D-SGD and C-SGD, respectively. 

\subsubsection{D-SGD}\label{D-SGD-title}

D-SGD algorithm was developed and studied for solving the decentralized learning problem without aggregating local models at a central server  \cite{lian2017can} \cite{jiang2017collaborative}.
The network topology of D-SGD is captured by a confusion matrix $\textbf{C}\in\mathbb{R}^{N\times N}$ whose $(j,i)$-th element $c_{ji}$ represents the contribution of node $j$ 
in model averaging at node $i$. As shown in Fig. \ref{A}, nodes of D-SGD execute \textit{local update} and \textit{inter-node communication} once
alternately.
The learning strategy of D-SGD consists of the two steps as follows.

\begin{figure}[!t]
	\centering
	\subfloat[D-SGD]{\includegraphics[scale=0.34]{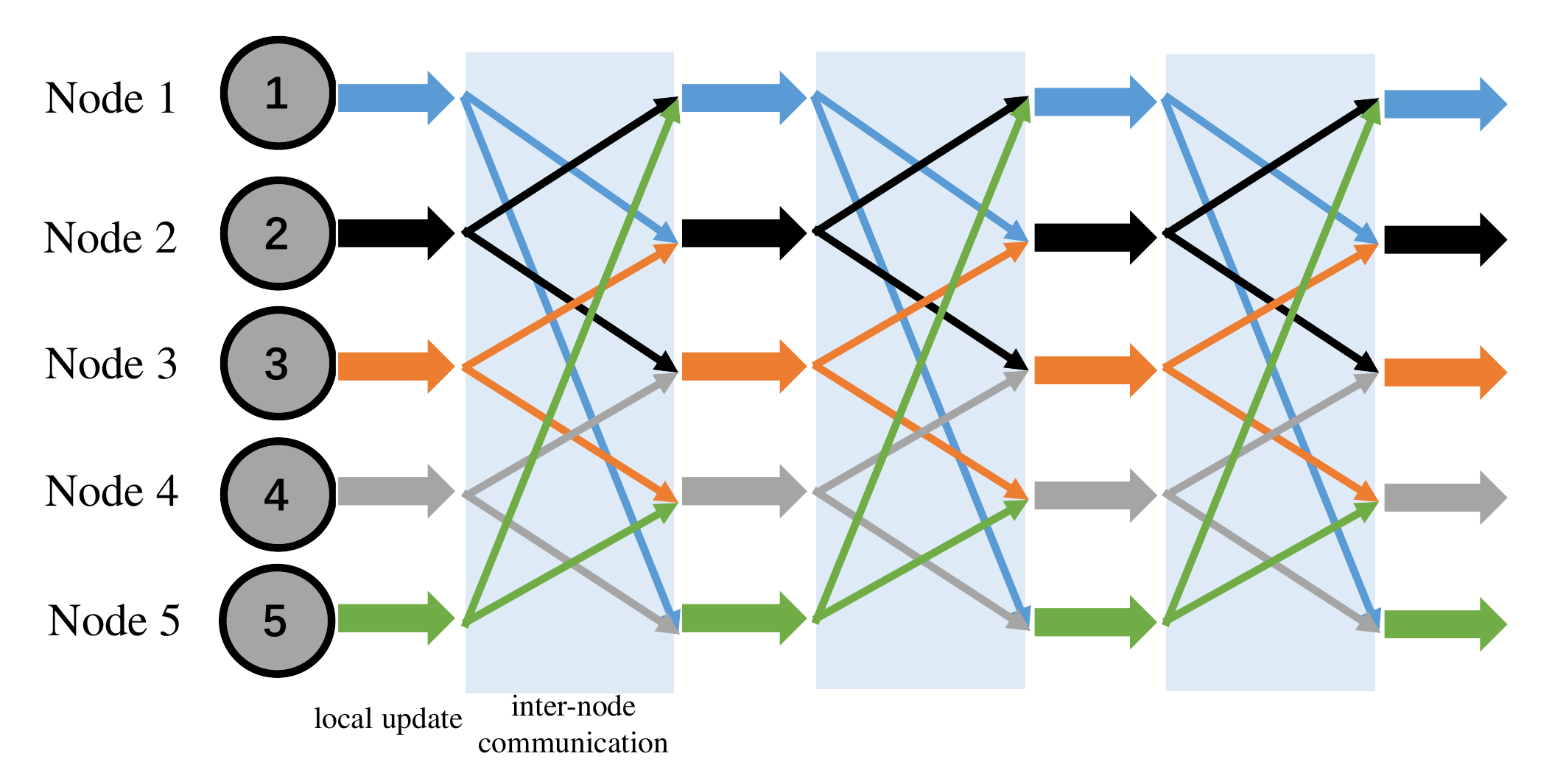}\label{A}}
	\\
	\subfloat[C-SGD]{\includegraphics[scale=0.34]{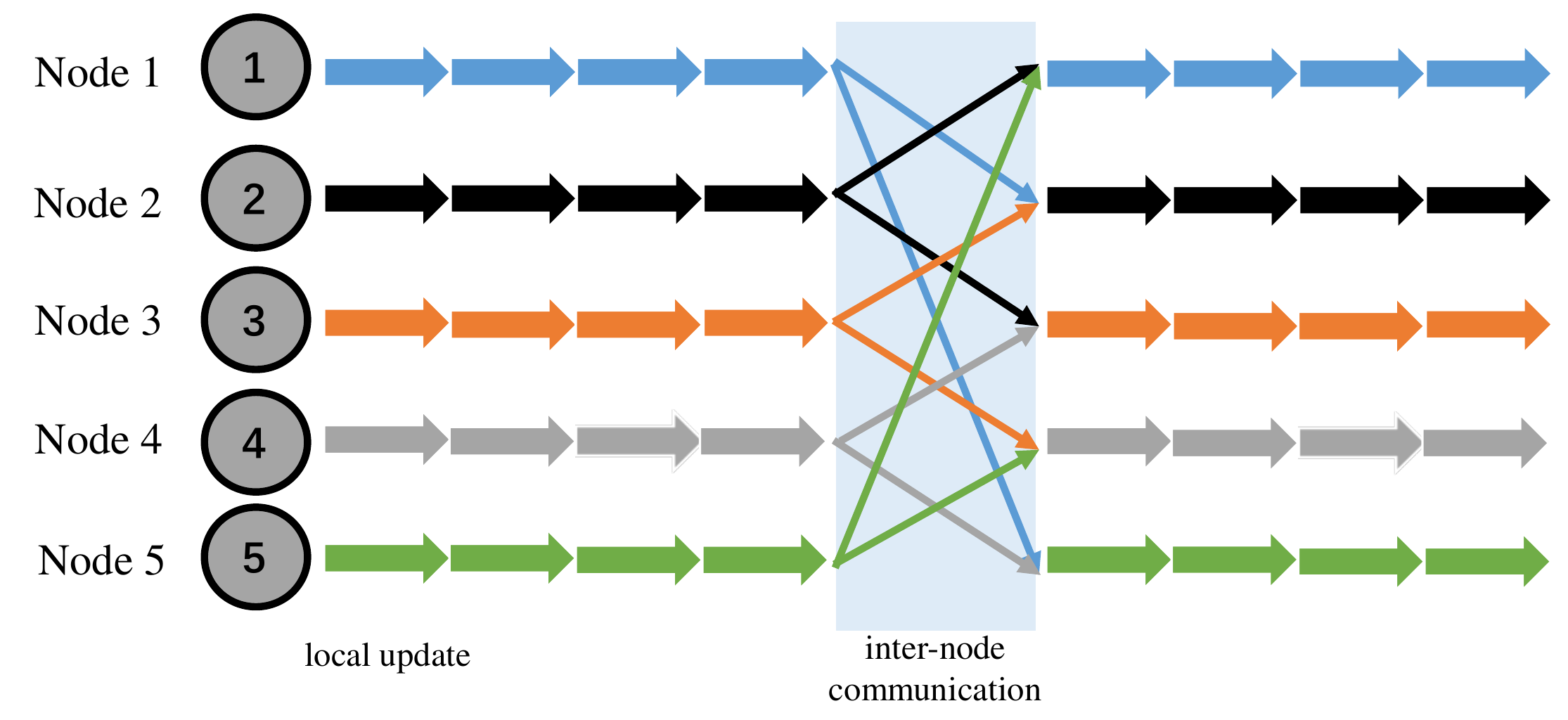}\label{B}}
	\caption{Learning strategies of D-SGD and C-SGD}
	\label{SGD-fig}
\end{figure}

\textbf{Inter-node communication:} For each node $i$, the neighboring nodes that are connected with node $i$ send their current model parameters to node $i$ at the $t$-th iteration step. Then node $i$ takes a weighted average of the received parameters and the local parameter by
\begin{align}
\label{D-SGD}\textbf{w}^{(i)}_{t+\frac{1}{2}}=\sum_{j=1}^{N}c_{ji}\textbf{w}^{(j)}_{t},     
\end{align}
where $\textbf{w}^{(i)}_{t+\frac{1}{2}}$ denotes the parameter of model averaging after inter-node communication.
Note that only the nodes connected with node $i$ contribute to the model average of node $i$. If node $j$ ($j=1,...,N$) is not connected with node $i$, then $c_{ji}=0$; otherwise, $c_{ji}>0$. Of course, $c_{ii}>0$ is necessary.

\textbf{Local update:} After inter-node communication, each node $i$ performs a local update as follows:
\begin{align}
\label{D-SGD-1}\textbf{w}_{t+1}^{(i)}=\textbf{w}^{(i)}_{t+\frac{1}{2}}-\eta g(\textbf{w}^{(i)}_{t}),
\end{align}
where $\eta$ is the learning rate.
The local update follows the SGD method.

At the $t$-th iteration step, \textit{the model parameter matrix} and \textit{the gradient matrix} are denoted by 
$\textbf{X}_t,\textbf{G}_t\in \mathbb{R}^{d\times N}$, respectively.
Then according to \eqref{D-SGD} and \eqref{D-SGD-1}, we have \textbf{the global learning strategy} of D-SGD by combining the two steps as follows:
\begin{align}\label{glo-d-sgd}
\textbf{X}_{t+1}=\textbf{X}_t \textbf{C}-\eta \textbf{G}_t.
\end{align}

\subsubsection{C-SGD}\label{C-SGD-title}
The work \cite{wang2019cooperative} proposed C-SGD which allows nodes to perform multiple local updates and inter-node communication once to improve communication efficiency. 
At $t$-th iteration step, all nodes have different model parameters $\textbf{w}^{(1)}_{t}, ..., \textbf{w}^{(N)}_{t}\in \mathbb{R}^d$. The network topology of C-SGD is captured by the confusion matrix $\textbf{C}\in\mathbb{R}^{N\times N}$ again. The learning strategy of C-SGD has similar steps as D-SGD. As shown in  Fig. \ref{B}, C-SGD performs an inter-node communication after $\tau$ local updates. 
The learning strategy of C-SGD is presented as follows.

\textbf{Local update:} For each node $i$, when the iteration index $t$ satisfies $t \bmod \tau\neq0$, a local update is performed according to
\begin{align}
\textbf{w}_{t+1}^{(i)}=\textbf{w}_{t}^{(i)}-\eta g(\textbf{w}^{(i)}_{t}),
\end{align}
where $\eta$ is the learning rate.

\textbf{Inter-node communication:} After the stage of local update, each node performs inter-node communication with its connected nodes. At the $t$-th
iteration step satisfying $t\bmod\tau=0$,
node $i$ receives the model parameters from its neighbors, and computes a weighted averaging as follows:
\begin{align}
\textbf{w}^{(i)}_{t+1}=\sum_{j=1}^{N}c_{ji}(\textbf{w}_{t}^{(i)}-\eta g(\textbf{w}^{(i)}_{t})).
\end{align} 

Then we present the matrix format of the C-SGD learning process.
The time-varying confusion matrix $\textbf{C}_t\in\mathbb{R}^{N\times N}$ is defined as 
\begin{equation*}
\textbf{C}_t=
\begin{cases}
\textbf{C},&t\bmod\tau=0 \\
\textbf{I},&\mbox{otherwise,}
\end{cases}
\end{equation*}
where $\textbf{I}\in \mathbb{R}^{N\times N}$ is the identity matrix, which means there is no inter-node communication during the local update stage. Then, at the $t$-th iteration step, \textit{the model parameter matrix} and \textit{the gradient matrix} are denoted by 
$\textbf{X}_t,\textbf{G}_t\in \mathbb{R}^{d\times N}$, respectively. 
Thus, \textbf{the global learning strategy} of C-SGD can be rewritten as
\begin{align}
\label{C-SGD_rules}
\textbf{X}_{t+1}=(\textbf{X}_t-\eta \textbf{G}_t)\textbf{C}_t.
\end{align}

Note that the order of inter-node communication and local update differs in D-SGD and C-SGD, resulting in difference of the global learning strategy of the two cases as shown in \eqref{glo-d-sgd} and \eqref{C-SGD_rules}. We call \eqref{glo-d-sgd} communicate-then-compute and \eqref{C-SGD_rules} compute-then-communicate. In fact, communicate-then-compute and compute-then-communicate have equivalent convergence behavior, as revealed by our analysis in Section \ref{equivalentcon}.


\subsubsection{Convergence of D-SGD and C-SGD}\label{equivalentcon}
We present the learning strategies of the two methods as follows:
\begin{align}
\label{prior-communicatio}\text{D-SGD:}\quad\textbf{X}_{t+1}&=\textbf{X}_t \textbf{C}-\eta \textbf{G}_t, \\
\label{prior-computatio}\text{C-SGD:}\quad\textbf{X}_{t+1}&=(\textbf{X}_t -\eta \textbf{G}_t)\textbf{C}_t, 
\end{align} 
where \eqref{prior-communicatio} and \eqref{prior-computatio} denote the learning strategies of communicate-then-compute and compute-then-communicate, respectively.

For facilitating the analysis, we multiply $\textbf{1}/N$ on both sides of \eqref{prior-communicatio} and \eqref{prior-computatio} to obtain the format of model averaging. So we have 
\begin{align}
\label{prior-communicatio-1}\textbf{X}_{t+1}\frac{\textbf{1}}{N}&=\textbf{X}_t \textbf{C}\frac{\textbf{1}}{N}-\eta \textbf{G}_t\frac{\textbf{1}}{N},\\
\label{prior-computatio-1}\textbf{X}_{t+1}\frac{\textbf{1}}{N}&=(\textbf{X}_t -\eta \textbf{G}_t)\textbf{C}_t\frac{\textbf{1}}{N}.
\end{align}
Because the confusion matrix $\textbf{C}$ is doubly stochastic, we have $\textbf{C}\textbf{1}=\textbf{1}$. According to the definition of $\textbf{C}_t$ in C-SGD, we obtain $\textbf{C}_t\textbf{1}=\textbf{1}$. Therefore, we can rewrite \eqref{prior-communicatio-1} and \eqref{prior-computatio-1} as
\begin{align*}
\text{D-SGD:}\quad\textbf{u}_{t+1}=\textbf{u}_{t}-\eta\left[\frac{1}{N}\sum_{i=1}^{N}g(\textbf{w}_t^{(i)})\right],\\
\text{C-SGD:}\quad\textbf{u}_{t+1}=\textbf{u}_{t}-\eta\left[\frac{1}{N}\sum_{i=1}^{N}g(\textbf{w}_t^{(i)})\right],
\end{align*}
respectively, where we define $\textbf{u}_t\triangleq \textbf{X}_t\frac{\textbf{1}}{N}$. Consequently, we find out that the learning strategies of communicate-then-compute and compute-then-communicate have the same update rules based on the averaged model $\textbf{u}_t$. 
Thus, we can conclude that communicate-then-compute and compute-then-communicate lead to an equivalent convergence bound.

\section{Convergence Analysis}
In this section, we analyze the convergence of the proposed algorithm in the DFL framework and further study how 
the choice of $\tau_1$ and $\tau_2$ impacts the convergence bound.
\subsection{Preliminaries}
To facilitate the analysis, we make the following assumptions.

\newtheorem{assumption}{Assumption}
\begin{assumption}\label{ass1}
	we assume the following conditions:\\
	\textup{1)} $F_i(\textup{\textbf{w}})$ is $L$-smooth, i.e., $\|\nabla F_i(\textup{\textbf{x}})-\nabla F_i(\textup{\textbf{y}})\|\leq L\|\textup{\textbf{x}}-\textup{\textbf{y}}\|$ for some $L>0$ and any $\textup{\textbf{x}}$, $\textup{\textbf{y}},i$. Hence, we have $F(\textup{\textbf{w}})$ is $L$-smooth from the triangle inequality\textup{;}\\
	\textup{2)} $F_i(\textup{\textbf{w}})$ is $\mu$-strongly convex\textup{;}\\
	\textup{3)} $F(\textup{\textbf{w}})$ has a lower bound, i.e., $F(\textup{\textbf{w}})\geq F_{\textup{inf}}$ for some $F_{\textup{inf}}>0$\textup{;}\\
	\textup{4)} Gradient estimation is unbiased for stochastic mini-batch sampling, i.e., $\mathbb{E}_{\xi|\textup{\textbf{w}}}[g(\textup{\textbf{w}})]=\nabla F(\textup{\textbf{w}})$\textup{;}\\
	\textup{5)} Gradient estimation has a bounded variance, i.e., $\mathbb{E}_{\xi|\textup{\textbf{w}}}\|g(\textup{\textbf{w}})-\nabla F(\textup{\textbf{w}})\|^2\leq\sigma^2$ where $\sigma^2>0$. Furthermore, for any node $i$, $\mathbb{E}_{\xi|\textbf{\textup{w}}}\|g(\textup{\textbf{w}}^{(i)})-\nabla F_i(\textup{\textbf{w}})\|^2\leq \sigma_i^2$ and $\mathbb{E}_{\xi|\textbf{\textup{w}}}\|g(\textup{\textbf{w}}^{(i)})\|^2\leq G^2$. We use $\bar{\sigma}^2$ to denote the average of $\{\sigma_i^2\}$, i.e., $$\bar{\sigma}^2\triangleq\frac{1}{N}\sum_{i=1}^{N}\sigma_i^2;$$\\
	\textup{6)} $\textup{\textbf{C}}$ is a doubly stochastic matrix, which satisfies $\textup{\textbf{C}}\textup{\textbf{1}}=\textup{\textbf{1}},
	\textup{\textbf{C}}^\top=\textup{\textbf{C}}$. The largest eigenvalue of $\textup{\textbf{C}}$ is always $1$ and the other eigenvalues are strictly less than $1$, i.e., $\max\left\{|\lambda_2(\textup{\textbf{C}})|, |\lambda_N(\textup{\textbf{C}})|\right\}<\lambda_1(\textup{\textbf{C}})=1$, where $\lambda_i(\textup{\textbf{C}})$ denotes the $i$-th largest eigenvalue of $\textup{\textbf{C}}$. For convenience, we define $$\beta\triangleq\|\textup{\textbf{I}}-\textup{\textbf{C}}\|_2\in[0,2],$$
	which is a similar way to analyze communication-compressed D-SGD in \textup{\cite{koloskova2019decentralized}}.
\end{assumption}
\subsection{Update of Model Averaging}\label{model_averaging}
For convenience of analysis, we transform the update rule of DFL in \eqref{global-update-rule} into model averaging. By
multiplying $\textbf{1}/N$ on both sides of \eqref{global-update-rule}, we obtain 
\begin{align}
\textbf{X}_{t+1}\frac{\textbf{1}}{N}=\textbf{X}_{t}\frac{\textbf{1}}{N}-\eta \textbf{G}^{\prime}_{t}\frac{\textbf{1}}{N},
\end{align}
where $\textbf{C}_{t}$ is eliminated because of $\textbf{C}_t \textbf{1}=\textbf{1}$ from the definition of $\textbf{C}_t$ in \eqref{W-t}. 
Thus, we can obtain 
\begin{align}
\textbf{u}_{t+1}=\textbf{u}_{t}-\eta\left[\frac{1}{N}\sum_{i=1}^{N}g(\textbf{w}_t^{(i)})\right]
\end{align}
for $t\in[k]_1$ and $\textbf{u}_{t+1}=\textbf{u}_{t}$ for $t\in[k]_2$. This is because SGD algorithm is only 
performed in local updates. In the remaining analysis, we will concentrate on the convergence of the average model $\textbf{u}_t$,
which is a common way to analyze convergence under stochastic sampling setting of gradient descent \cite{yu2019parallel} \cite{lian2017can} \cite{stich2018local}. 

\newtheorem{remark}{Remark}

Because the global loss function $F(\textbf{w})$ can be non-convex in complex learning platform such as Convolutional Neural Network (CNN), SGD may converge to a local minimum or a saddle point. 
We use the expectation of the gradient norm average of all iteration steps as the indicator of convergence. Similar indicators 
have been used in \cite{jiang2018linear} \cite{yu2019linear}.
The algorithm is convergent if the following condition is satisfied.
\newtheorem{definition}{Definition}
\begin{definition}[\textbf{Algorithm Convergence}]\label{def1}
	The algorithm converges to a stationary point if 
	it achieves an $\epsilon$-suboptimal solution, i.e.,
	\begin{align}\label{defF}
	\mathbb{E}\left[\frac{1}{T}\sum_{t=1}^{T}\|\nabla F(\textup{\textbf{u}}_t)\|^2\right]\leq\epsilon.
	\end{align}
\end{definition} 

\subsection{Convergence Results}
Based on Assumption \ref{ass1} and Definition \ref{def1}, we have the 
following proposition which characterizes the convergence of DFL. 

\newtheorem{proposition}{Proposition}
\begin{proposition}[\textbf{Convergence of DFL}]\label{pro1}
	For DFL algorithm, let the total steps be $T$ and the iteration rounds be $\tau=\tau_1+\tau_2$, where $\tau_1$  and $\tau_2$ are the computation frequency and the communication frequency, respectively. Based on 
	Assumption \ref{ass1} and that all local models have the same initial point $\textup{\textbf{u}}_1$, if the learning rate $\eta$ satisfies 
	\begin{align}
	\label{eta}
	\eta L+\frac{\eta^2 L^2\tau}{1-\zeta^{\tau_2}}\left(\frac{2\tau_1\zeta^{2\tau_2}}{1+\zeta^{\tau_2}}+\frac{2\tau_1\zeta^{\tau_2}}{1-\zeta^{\tau_2}}+\tau-1
	\right)\leq 1,
	\end{align}
	where $\zeta=\max\left\{|\lambda_2(\textup{\textbf{C}})|, |\lambda_N(\textup{\textbf{C}})|\right\}$ with $0\leq \zeta\leq 1$, then the expectation of the gradient norm average after $T$ steps is bounded as follows:
	\begin{align}
	\label{upper-bound}\mathbb{E}\left[\frac{1}{T}\sum_{t=1}^{T}\|\nabla F(\textup{\textbf{u}}_t)\|^2\right]\leq\underbrace{\frac{2[F(\textup{\textbf{u}}_1)-F_{\textup{inf}}]}{\eta T}+\frac{\eta L \sigma^2}{N}}_{\mbox{\footnotesize synchronous SGD}}+\notag\\
	\underbrace{2\eta^2 L^2 \sigma^2 \left(\frac{\tau_1}{1-\zeta^{2\tau_2}}-1\right)}_{\mbox{\footnotesize local drift}}\\
	\label{inftyT}\stackrel{T\rightarrow\infty}{\longrightarrow}\frac{\eta L \sigma^2}{N}+2\eta^2 L^2 \sigma^2 \left(\frac{\tau_1}{1-\zeta^{2\tau_2}}-1\right),
	\end{align}
	where $\textup{\textbf{u}}_t=\textbf{X}_t\frac{\textup{\textbf{1}}}{N}$.
\end{proposition}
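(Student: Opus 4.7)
The plan is to combine a standard non-convex SGD descent inequality for the averaged iterate $\textbf{u}_t$ with a dedicated bound on the consensus error $\mathbb{E}\|\textbf{X}_t(\textbf{I}-\textbf{J})\|_F^2$ that accounts for the mixed sequence of $\tau_1$ local updates and $\tau_2$ communications within each round. First, I would apply the $L$-smoothness of $F$ to the averaged update $\textbf{u}_{t+1} = \textbf{u}_t - \eta \bar g_t$ with $\bar g_t = \frac{1}{N}\sum_{i=1}^{N} g(\textbf{w}_t^{(i)})$ for $t\in[k]_1$, noting that $\textbf{u}_{t+1} = \textbf{u}_t$ for $t\in[k]_2$ so neither progress nor regress occurs during communication. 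Taking conditional expectation over the mini-batches, using unbiasedness together with $\mathbb{E}\|\bar g_t - \mathbb{E}\bar g_t\|^2 \le \sigma^2/N$, applying the identity $2\langle a,b\rangle = \|a\|^2+\|b\|^2-\|a-b\|^2$ to $\langle \nabla F(\textbf{u}_t),\mathbb{E}\bar g_t\rangle$, and invoking $L$-smoothness once more to bound $\|\nabla F(\textbf{u}_t) - \frac{1}{N}\sum_i \nabla F_i(\textbf{w}_t^{(i)})\|^2 \le \frac{L^2}{N}\|\textbf{X}_t(\textbf{I}-\textbf{J})\|_F^2$ would yield a per-step descent inequality
\[
\mathbb{E}[F(\textbf{u}_{t+1}) - F(\textbf{u}_t)] \le -\tfrac{\eta}{2}\mathbb{E}\|\nabla F(\textbf{u}_t)\|^2 + \tfrac{\eta L^2}{2N}\mathbb{E}\|\textbf{X}_t(\textbf{I}-\textbf{J})\|_F^2 + \tfrac{\eta^2 L\sigma^2}{2N},
\]
valid once $\eta L\le 1$ (implied by \eqref{eta}). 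Telescoping over $t=1,\ldots,T$ and rearranging would recover the synchronous-SGD portion of \eqref{upper-bound}, leaving only the residual $\frac{L^2}{NT}\sum_t \mathbb{E}\|\textbf{X}_t(\textbf{I}-\textbf{J})\|_F^2$ to control.

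For that consensus error, the key observation is that $\textbf{C}_t\textbf{J} = \textbf{J}$ for both $\textbf{C}_t = \textbf{I}$ and $\textbf{C}_t = \textbf{C}$, so post-multiplying \eqref{global-update-rule} by $\textbf{I}-\textbf{J}$ gives the clean recursion $\textbf{X}_{t+1}(\textbf{I}-\textbf{J}) = (\textbf{X}_t - \eta \textbf{G}'_t)(\textbf{C}_t-\textbf{J})$. Unrolling this through the $\tau_1$ local updates of round $k$ expresses $\textbf{X}_{(k-1)\tau+\tau_1}(\textbf{I}-\textbf{J})$ as $\textbf{X}_{(k-1)\tau}(\textbf{I}-\textbf{J})$ minus a sum of mini-batch gradients against $(\textbf{I}-\textbf{J})$, and the subsequent $\tau_2$ communication steps post-multiply the result by $(\textbf{C}-\textbf{J})^{\tau_2}$, whose Frobenius action contracts by $\zeta^{\tau_2}$ thanks to $\|\textbf{C}-\textbf{J}\|_2=\zeta$ from Assumption~\ref{ass1}(6). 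Splitting each stochastic gradient into $\nabla F_i(\textbf{w}_t^{(i)})$ plus zero-mean noise, I would exploit the martingale-difference structure so that cross-terms vanish in expectation and derive an across-round recursion of the form $a_{k+1}\le \zeta^{2\tau_2}a_k + \eta^2 N\tau_1 \sigma^2 h(\zeta,\tau_1,\tau_2) + (\text{deterministic-gradient terms})$ for $a_k = \mathbb{E}\|\textbf{X}_{(k-1)\tau+1}(\textbf{I}-\textbf{J})\|_F^2$. Iterating this geometric recursion and then summing the intra-round errors gives a total consensus error on the order of $\eta^2 N \sigma^2 \tau_1/(1-\zeta^{2\tau_2})$, which, substituted back into the descent inequality, reproduces the local-drift term $2\eta^2 L^2\sigma^2(\tau_1/(1-\zeta^{2\tau_2})-1)$ of \eqref{upper-bound}.

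The main obstacle, I expect, will be the bookkeeping needed (i) to sum the intra-round consensus errors at every step rather than only at round boundaries while tracking the partial-sum weights $\zeta^s$ and $\zeta^{\tau_2-s}$ that generate the exact coefficients $2\tau_1\zeta^{2\tau_2}/(1+\zeta^{\tau_2})$ and $2\tau_1\zeta^{\tau_2}/(1-\zeta^{\tau_2})$ appearing in \eqref{eta}, and (ii) to handle the deterministic-gradient contribution to the consensus error, which necessarily couples back to $\|\nabla F(\textbf{u}_t)\|^2$ through $\|\nabla F_i(\textbf{w}_t^{(i)})\|^2 \le 2\|\nabla F(\textbf{u}_t)\|^2 + 2L^2\|\textbf{w}_t^{(i)}-\textbf{u}_t\|^2$. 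The specific form of the step-size condition \eqref{eta} is precisely what is needed to absorb this coupled $\|\nabla F(\textbf{u}_t)\|^2$ contribution back into the negative $-\tfrac{\eta}{2}\mathbb{E}\|\nabla F(\textbf{u}_t)\|^2$ term of the descent step, leaving only the clean $\sigma^2$-dependent drift in the final bound; the $T\to\infty$ limit \eqref{inftyT} then follows immediately by dropping the initial-gap term.
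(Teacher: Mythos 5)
Your proposal follows essentially the same architecture as the paper's proof: a smoothness-based descent inequality for the averaged iterate that isolates the residual $\frac{L^2}{NT}\sum_t\mathbb{E}\|\textbf{X}_t(\textbf{I}-\textbf{J})\|_{\textup{F}}^2$ (the paper's Lemma 1), followed by unrolling $\textbf{X}_t(\textbf{I}-\textbf{J})$ through the mixing products, splitting the result into a zero-mean-noise part and a deterministic-gradient part (the paper's $A_1$ and $A_2$), cancelling cross terms by the martingale structure, contracting geometrically via $\|\textbf{C}^j-\textbf{J}\|_{\mathrm{op}}=\zeta^j$, and absorbing the gradient-coupled remainder with the step-size condition. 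The one place you genuinely deviate is the absorption step: you propose to convert the deterministic-gradient contribution via $\|\nabla F_i(\textbf{w}_t^{(i)})\|^2\le 2\|\nabla F(\textbf{u}_t)\|^2+2L^2\|\textbf{w}_t^{(i)}-\textbf{u}_t\|^2$ and fold it into the negative $-\tfrac{\eta}{2}\mathbb{E}\|\nabla F(\textbf{u}_t)\|^2$ term, whereas the paper retains the separate negative term $-\tfrac{1}{NT}\bigl[1-\eta L(1+\tfrac{\beta}{N})\bigr]\sum_t\mathbb{E}\|\nabla F(\textbf{X}_t)\|_{\textup{F}}^2$ from its descent lemma and absorbs the drift's $\sum_s\|\nabla F(\textbf{X}_s)\|_{\textup{F}}^2$ contributions there directly. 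Your route reinjects $\|\textbf{w}_t^{(i)}-\textbf{u}_t\|^2$ into the consensus bound, so the consensus error appears on both sides and requires a second, self-referential absorption with an extra smallness condition on $\eta$; this is workable but changes the constants and would not reproduce the condition \eqref{eta} in the exact form stated, which is what the paper's bookkeeping is arranged to deliver.
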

\begin{proof}
	The complete proof is presented in Appendix A. 
  We present the proof outline as follows.

	From Definition \ref{def1}, we first derive an upper bound of $\mathbb{E}\left[\frac{1}{T}\sum_{t=1}^{T}\|\nabla F(\textup{\textbf{u}}_t)\|^2\right]$, which is given by Lemma 1 in Appendix A as follows:
	\begin{align}\label{upperboundsdfl}
		\mathbb{E}\left[\frac{1}{T}\sum_{t=1}^{T}\|\nabla F(\textbf{\textup{u}}_t)\|^2\right]\leq \underbrace{\frac{2[F(\textbf{\textup{w}}_1)-F_{\textup{inf}}]}{\eta T}+\frac{\eta L \sigma^2}{N}}_{\mbox{\footnotesize synchronous SGD}}+\notag\\
		\underbrace{\frac{L^2}{T}\sum_{t=1}^{T}\frac{\mathbb{E}\|\textbf{\textup{X}}_t(\textbf{\textup{I}}-\textbf{\textup{J}})\|^2_{\textup{F}}}{N}}_{\mbox{\footnotesize local drift}}.
	\end{align}
	Then considering the second term of \eqref{upperboundsdfl}, we can find an upper bound of $\mathbb{E}\|\textbf{\textup{X}}_t(\textbf{\textup{I}}-\textbf{\textup{J}})\|^2_{\textup{F}}$, i.e.,
	\begin{align*}
	&\mathbb{E}\|\textbf{\textup{X}}_t(\textbf{\textup{I}}-\textbf{\textup{J}})\|^2_{\textup{F}}\\
	\leq&\underbrace{2\eta^2\mathbb{E}\left\|\sum_{r=0}^{j}(\textbf{Y}_r\!-\!\textbf{Q}_r)(\textbf{C}^{(j-r)\tau_2}\!-\!\textbf{J})\right\|^2_{\textup{F}}}_{A_1}\\
	&+\underbrace{2\eta^2\mathbb{E}\left\|\sum_{r=0}^{j}\textbf{Q}_r(\textbf{C}^{(j-r)\tau_2}\!-\!\textbf{J})\right\|^2_{\textup{F}}}_{A_2}.
	\end{align*}
	Finally, we decompose $\mathbb{E}\|\textbf{\textup{X}}_t(\textbf{\textup{I}}-\textbf{\textup{J}})\|^2_{\textup{F}}$ into two parts, denoted as $A_1$ and $A_2$, respectively. 
	For $A_1$, we have
	\begin{align*}
	&\sum_{t=1}^{T}A_1\leq\\
	&2\eta^2 N\sigma^2 T\left(\frac{\tau_1}{1-\zeta^{2\tau_2}}-1 \right)+\frac{2\eta^2\beta\tau}{1-\zeta^{2\tau_2}}\sum_{r=0}^{\frac{T}{\tau}-1}\sum_{s=r\tau+1}^{r\tau+\tau_1}\|\nabla F(\textbf{X}_s)\|^2_{\textup{F}};
	\end{align*}
	and for $A_2$, we have
	\begin{align*}
	&\sum_{t=1}^{T}A_2\leq\\
	&\frac{\eta^2\tau}{1-\zeta^{\tau_2}}\!\!\left(\frac{2\tau_1\zeta^{2\tau_2}}{1+\zeta^{\tau_2}}\!\!+\!\!\frac{2\tau_1\zeta^{\tau_2}}{1-\zeta^{\tau_2}}\!\!+\!\tau\!-\!1\!
	\right)\!\!\sum_{r=0}^{\frac{T}{\tau}-1}\!\sum_{s=r\tau+1}^{r\tau+\tau_1}\!\!\mathbb{E}\|\nabla F(\textbf{X}_{s})\|^2_{\textup{F}}.
	\end{align*}
	From the upper bounds of $A_1$ and $A_2$, we finish the proof of Proposition \ref{pro1}.

	Note that the assumption of $\mu$-strong convexity in Assumption \ref{ass1} is not necessary for the convergence bound in \eqref{upper-bound}.
\end{proof}

Focusing on the right side of the inequality \eqref{upper-bound},
we find out that the convergence upper bound of DFL consists of two parts as indicated by the under braces. The first two terms are the original error bound led by synchronous SGD \cite{bottou2018optimization}, where one local update follows one model averaging covering all nodes, not just the neighbors. The synchronous SGD also corresponds to a special DFL case with $\tau_1=1,\tau_2\to\infty$.
The last term is the error bound brought by local drift, which reflects the effect of performing multiple local updates and insufficient model averaging. The consensus model can not be acquired due to finite communication frequency $\tau_2$. Focusing on \eqref{inftyT}, we can find out that DFL algorithm converges to a stable and static point as $T\rightarrow\infty$, regardless of network topology. The network topology parameter given by $\zeta$ and the frequencies of computation and communication (i.e., $\tau_1$ and $\tau_2$, respectively) will affect the final convergence upper bound.

We further analyze how the computation frequency $\tau_1$, the communication frequency $\tau_2$ and the network topology parameter $\zeta$ affect the convergence of DFL.
\begin{remark}[\textbf{Impact of $\tau_1,\tau_2$}]\label{remark1}
	\textup{
	Focusing on the last term in \eqref{upper-bound}, we can find how the computation frequency $\tau_1$ and the communication frequency $\tau_2$ affect the convergence of DFL. Specifically, the bound increases with $\tau_1$ and 
	decreases with $\tau_2$ monotonically. The former is because more local updates lead to larger difference between the local model and the previous average model over neighboring nodes. Thus the local drift is intensified with $\tau_1$. The latter is because more inter-node communications can facilitate model consensus and make local model approach the ideal average model. Thus the local drift is mitigated by increasing $\tau_2$. DFL with  $\tau_2=1$ is equivalent to C-SGD, which has a worse convergence rate compared to DFL with $\tau_2>1$. The local drift will tend to zero under the ideal condition $\tau_1=1$ and $\tau_2\to\infty$, which is the synchronous SGD.  In contrast, the local drift will become the worst when $\tau_1$ is large and $\tau_2=1$.}
\end{remark}
\newtheorem{corollary}{Corollary}
\begin{corollary}\label{cor1}
	If $\tau_1=1,\tau_2\to\infty$, DFL achieves the optimal convergence bound without local drift which is
	\begin{align}
	\mathbb{E}\left[\frac{1}{T}\sum_{t=1}^{T}\|\nabla F(\textup{\textbf{u}}_t)\|^2\right]\leq\frac{2[F(\textup{\textbf{u}}_1)-F_{\textup{inf}}]}{\eta T}+\frac{\eta L \sigma^2}{N}.
	\end{align}
\end{corollary}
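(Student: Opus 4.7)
The plan is to obtain Corollary \ref{cor1} as a direct specialization of Proposition \ref{pro1}. First I would substitute $\tau_1 = 1$ into the bound \eqref{upper-bound}, so that the local drift term simplifies to
\begin{align*}
2\eta^2 L^2 \sigma^2 \left(\frac{1}{1-\zeta^{2\tau_2}} - 1\right) \;=\; 2\eta^2 L^2 \sigma^2 \cdot \frac{\zeta^{2\tau_2}}{1-\zeta^{2\tau_2}}.
\end{align*}
The first two terms on the right-hand side of \eqref{upper-bound} (the synchronous SGD terms) are independent of $\tau_2$, so they pass through the limit unchanged.

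Next I would invoke Assumption \ref{ass1} part 6, which gives $\zeta = \max\{|\lambda_2(\textbf{C})|, |\lambda_N(\textbf{C})|\} < 1$. This ensures $\zeta^{2\tau_2} \to 0$ as $\tau_2 \to \infty$, so the residual local drift factor $\zeta^{2\tau_2}/(1-\zeta^{2\tau_2}) \to 0$. Hence the full local drift term vanishes in the limit, and the bound collapses to the claimed expression. Operationally this is consistent with the fact that $\textbf{C}^{\tau_2} \to \textbf{J}$ under Assumption \ref{ass1}, i.e. repeated inter-node communication drives the nodes to exact consensus, which is precisely the synchronous SGD regime.

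The main subtlety to address is the learning rate condition \eqref{eta}, since $\tau = 1 + \tau_2$ grows without bound and the left-hand side of \eqref{eta} contains a $\tau(\tau-1)$ contribution from the $\tau-1$ summand. I would handle this by treating the statement as a limit of attainable bounds: for each fixed $\tau_2$, Proposition \ref{pro1} gives a valid bound provided $\eta$ is chosen to satisfy \eqref{eta} (which forces $\eta = O(1/\tau_2)$). For any such admissible sequence of learning rates, the local drift factor $\zeta^{2\tau_2}/(1-\zeta^{2\tau_2})$ still tends to zero independently of $\eta$, and the synchronous SGD terms retain their stated form in $\eta$ and $T$. Therefore the asymptotic envelope of the bounds coincides with the synchronous SGD bound, which is what Corollary \ref{cor1} asserts. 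The hard part, if any, is simply phrasing this limiting argument cleanly rather than any new technical estimate, since all heavy lifting has already been done inside the proof of Proposition \ref{pro1}.
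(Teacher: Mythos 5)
Your proposal is correct and takes essentially the same route as the paper, whose entire proof of Corollary \ref{cor1} is the one-line remark that it follows directly from \eqref{upper-bound}: with $\tau_1=1$ the local drift term reduces to $2\eta^2L^2\sigma^2\,\zeta^{2\tau_2}/(1-\zeta^{2\tau_2})$, which vanishes as $\tau_2\to\infty$ since Assumption \ref{ass1} gives $\zeta<1$. Your additional discussion of the learning-rate condition \eqref{eta} (which does degenerate as $\tau\to\infty$ and is silently ignored by the paper) is a legitimate refinement rather than a deviation, so no further comparison is needed.
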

\begin{proof}
	It directly follows from \eqref{upper-bound}.
\end{proof}
\begin{remark}[\textbf{Impact of $\zeta$}]\label{remark2}
	\textup{
	From \eqref{upper-bound}, the local drift will monotonically increase with $\zeta$. 
	As we defined, $\zeta$ denotes the second largest eigenvalue of $\textup{\textbf{C}}$ and it reflects the exchange rate of all local models during a single inter-node communication step. Thus a larger $\zeta$ means a sparser matrix, resulting to a worse model averaging and aggravating the local drift.
	When each node can not communicate with any other nodes in the network, the confusion matrix $\textup{\textbf{C}}=\textup{\textbf{I}}$ and $\zeta=1$. There does not exist model averaging and it will cause the worst local drift.  When each node can communicate with all nodes in the network, the confusion matrix $\textup{\textbf{C}}=\textup{\textbf{J}}$ and $\zeta=0$. Model consensus is achieved and the local drift brought by insufficient model averaging is eliminated.}
\end{remark}
\begin{corollary}\label{cor2}
	If $\zeta=0,\textup{\textbf{C}}=\textup{\textbf{J}}$, DFL achieved a convergence bound without local drift caused by insufficient model averaging, which is expressed as
	\begin{align}
	\mathbb{E}\left[\frac{1}{T}\sum_{t=1}^{T}\|\nabla F(\textup{\textbf{u}}_t)\|^2\right]\leq\frac{2[F(\textup{\textbf{u}}_1)-F_{\textup{inf}}]}{\eta T}+\frac{\eta L \sigma^2}{N}+\notag\\
	2\eta^2 L^2 \sigma^2 \left(\tau_1-1\right).
	\end{align}
\end{corollary}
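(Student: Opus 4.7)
The plan is to derive this corollary as an immediate specialization of Proposition~\ref{pro1}, since the hypothesis $\textbf{C}=\textbf{J}$ forces the spectral parameter $\zeta$ to collapse to zero. So the proof is essentially a substitution, and the only work is (i) verifying that the step-size condition \eqref{eta} remains satisfiable in this regime, and (ii) simplifying the local-drift term cleanly.

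First I would argue that $\zeta=0$ under the corollary's hypothesis. Since $\textbf{J}=\textbf{1}\textbf{1}^\top/N$ is a rank-one projection matrix, its spectrum is $\{1,0,0,\ldots,0\}$, so $\lambda_1(\textbf{C})=1$ and $|\lambda_i(\textbf{C})|=0$ for $i\geq 2$. By the definition of $\zeta=\max\{|\lambda_2(\textbf{C})|,|\lambda_N(\textbf{C})|\}$ in Assumption~\ref{ass1}.6, this gives $\zeta=0$, and hence $\zeta^{\tau_2}=\zeta^{2\tau_2}=0$ for every $\tau_2\geq 1$.

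Next I would substitute $\zeta=0$ into the convergence statement of Proposition~\ref{pro1}. The learning-rate condition \eqref{eta} reduces to $\eta L+\eta^2 L^2 \tau(\tau-1)\leq 1$, which holds for all sufficiently small $\eta$ and thus imposes no essential new restriction. In the bound \eqref{upper-bound}, the first two terms $\tfrac{2[F(\textbf{u}_1)-F_{\textup{inf}}]}{\eta T}+\tfrac{\eta L\sigma^2}{N}$ do not depend on $\zeta$ and carry over unchanged. For the local-drift term, the factor $\frac{\tau_1}{1-\zeta^{2\tau_2}}$ collapses to $\tau_1$, so the local-drift term becomes $2\eta^2 L^2\sigma^2(\tau_1-1)$, yielding exactly the stated bound.

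There is no genuine obstacle here; the only modest sanity check is interpreting the residual drift $\tau_1-1$: even with fully-connected averaging via $\textbf{J}$, the drift is not zero unless $\tau_1=1$, because during the $\tau_1$ intermediate local-update steps the per-node models still diverge before being perfectly re-averaged at the communication phase. Noting this consistency with Remark~\ref{remark2} (and with Corollary~\ref{cor1}, which additionally sends $\tau_1\to 1$) completes the argument.
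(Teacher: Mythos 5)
Your proposal is correct and takes the same route as the paper, which simply notes that the corollary follows directly from the bound \eqref{upper-bound} by setting $\zeta=0$; your substitution $\frac{\tau_1}{1-\zeta^{2\tau_2}}-1=\tau_1-1$ and the spectral justification that $\textbf{C}=\textbf{J}$ forces $\zeta=0$ are exactly the intended specialization. The additional checks you include (the reduced step-size condition and the interpretation of the residual $\tau_1-1$ drift) are consistent with the paper's Remarks but are not needed beyond the direct substitution.
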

\begin{proof}
	It directly follows from \eqref{upper-bound}.
\end{proof}

\section{Communication-efficient DFL based on compressed communication}
In this section, we further study DFL based on a compressed communication theme to reduce the communication overhead when considering multiple inter-node communication steps in a round. The learning strategy of DFL with compressed communication is introduced and the algorithm is developed. Then we present the convergence analysis of DFL with compressed communication.

\subsection{The Learning Strategy of C-DFL}\label{con-csdfl}
In order to further improve communication efficiency of DFL, we apply the compressed inter-node communication scheme to DFL algorithm. Focusing on the inter-node communication stage, for each node $i$, the gossip type algorithm \cite{xiao2003fast} can exchange the model parameters by iterations of the form
\begin{align}
\textbf{w}_{t+1}^{(i)}=\textbf{w}_{t}^{(i)}+\gamma\sum_{j=1}^{N}c_{ji}\Delta_t^{(ji)},
\end{align}
where $\gamma\in(0,1]$ denotes the consensus step size and $\Delta_t^{(ji)}\in \mathbb{R}^{d}$ denotes a vector sent from node $j$ to node $i$ in the $t$-th iteration step. Note that if  $\gamma=1$ and  $\Delta_t^{ji}=\textbf{w}_t^{(j)}-\textbf{w}_t^{(i)}$, we have $\textbf{w}_{t+1}^{(i)}=\sum_{j=1}^{N}c_{ji}\textbf{w}_t^{j}$, which is consistent with the inter-node communication rule of DFL. We apply the CHOCO-G gossip scheme to DFL with compressed communication to remove the compression noise as $t\to \infty$ and guaranteeing model average-consensus \cite{koloskova2019decentralized}. The scheme is presented as
\begin{align}
\Delta_t^{(ji)}&=\hat{\textbf{w}}_t^{(j)}-\hat{\textbf{w}}_t^{(i)},\\
\hat{\textbf{w}}_{t+1}^{(j)}&=\hat{\textbf{w}}_{t}^{(j)}+Q(\textbf{w}_{t+1}^{(j)}-\hat{\textbf{w}}_t^{(j)}),
\end{align}
where $Q$ is a compression operator and $\hat{\textbf{w}}_t^{(i)}\in\mathbb{R}^d$ denotes additional variables which can be obtained and stored by all neighboring nodes (including node $i$ itself) of node $i$. 

We introduce some examples of compression strategies \cite{koloskova2019decentralized}. 
\begin{itemize}
	\item \textit{Sparsification}: We randomly select $k$ out of $d$ coordinates ($\textup{rand}_k$), or the $k$ coordinates with highest magnitude values ($\textup{top}_k$), which gives $\delta=\frac{k}{d}$.
	\item \textit{Rescaled unbiased estimators}: Assuming $\mathbb{E}_Q Q(\textbf{x})=\textbf{x}$, $\forall \textbf{x}\in \mathbb{R}^d$ and $\mathbb{E}_Q \|Q(\textbf{x})\|^2\leq c \|\textbf{x}\|^2$, we then have $Q'(\textbf{x})=\frac{1}{c}Q(\textbf{x})$ with $\delta=\frac{1}{c}$.
	\item \textit{Randomized gossip}: Supposing $Q(\textbf{x})=\textbf{x}$ with probability $p\in(0,1]$ and $Q(\textbf{x})=0$ otherwise, we have $\delta=p$. 
	\item \textit{Random quantization}: For the quantization levels $s\in\mathbb{N}_{+}$ and $c=(1+\min\{d/s^2,\sqrt{d}/s\})$, the quantization operator can be formulated as $$\textup{qsgd}_s(x)=\frac{\textup{sign}(\textbf{x})\cdot\|\textbf{x}\|}{sc}\cdot\left\lfloor s\frac{|x|}{\|\textbf{x}\|}+\xi\right\rfloor,$$ where the random variable $\xi\thicksim_{\textup{u.a.r.}}[0,1]^d$ satisfies \eqref{Q} with $\delta=\frac{1}{c}$.
\end{itemize}
Note that $\delta$ is the compression rate, which reflects the compression quality for the compression operator $Q$. 

We apply the CHOCO-G scheme to inter-node communication of DFL for $t\in[k]_2$, and local update for $t\in[k]_1$ remains unchanged. The proposed communication-efficient algorithm is named DFL with compressed communication (C-DFL). C-DFL is presented in Algorithm \ref{alg2}. Note that in C-DFL, we set $T=K\tau$ for convenience. According to Algorithm \ref{alg2}, local update step with SGD is in line \ref{localupdatesgd}, iterative update of inter-node communication step is in line \ref{iterationupadte}, compression is performed in line \ref{compression}, the exchange of compressed information is in line \ref{infoexchange}, and the iteration of additional variables is in line \ref{additionalvariableupdate}.

Because multiple inter-node communications with compression and multiple local updates are performed in a round, C-DFL is a more general analytical framework compared with CHOCO-SGD proposed in \cite{koloskova2019decentralized}.
\begin{algorithm}[!t]
	\caption{\textbf{C-DFL}} 
	\label{alg2}
	\begin{algorithmic}[1]
		\REQUIRE ~~\\ 
		Total number of steps $T$, computation frequency $\tau_1$ and communication frequency $\tau_2$ in an iteration round, initial values $\textbf{w}_0^{(i)}$ for all nodes, the learning rate $\eta_k$ changed with $k$, the consensus step size $\gamma$ 
	
		\STATE Set the initial value $\hat{\textbf{w}}_0^{i}=\textbf{0}$ $\forall i$.
		\label{ code:fram:extract }
		\FOR{$t=1,2,...,K\tau$ in parallel for all nodes $i$}
		\label{code:fram:trainbase}
		\IF{$t\in [k]_1$ where $k=1,...,K$} 
		\STATE	$\textbf{w}_{t+1}^{(i)}=\textbf{w}_{t}^{(i)}-\eta_k g(\textbf{w}_{t}^{(i)})$ 
		\label{localupdatesgd}
		\ELSE 
		\STATE $\textbf{w}_{t+1}^{(i)}=\textbf{w}_{t}^{(i)}+\gamma\sum_{j=1}^{N}c_{ji}(\hat{\textbf{w}}_t^{(j)}-\hat{\textbf{w}}_t^{(i)})$  \label{iterationupadte}
		\STATE $\textbf{q}_t^{(i)}=Q(\textbf{w}_{t+1}^{(i)}-\hat{\textbf{w}}_t^{(i)})$
		\label{compression}
		\ENDIF
		\FOR{neighboring nodes $j$ of node $i$ (including node $i$ itself)}
		\STATE Send $\textbf{q}_t^{(i)}$ and receive $\textbf{q}_t^{(j)}$ \label{infoexchange}
		\STATE $\hat{\textbf{w}}_{t+1}^{(j)}=\hat{\textbf{w}}_{t}^{(j)}+\textbf{q}_t^{(j)}$ \label{additionalvariableupdate}
		\ENDFOR
		\ENDFOR
	\end{algorithmic}
\end{algorithm}

\subsection{Convergence Analysis of C-DFL}
In this subsection, we analyze the convergence of Algorithm \ref{alg2}. We make an assumption about the compression operator $Q$ to measure the compression quality of different methods.
\begin{assumption}[Compression operator]\label{ass2}
	We assume that $\forall \textbf{\textup{x}}\in \mathbb{R}^d$, the compression operator $Q$: $\mathbb{R}^d\to \mathbb{R}^d$ satisfies
	\begin{align}\label{Q}
		\mathbb{E}_{Q}\|Q(\textbf{\textup{x}})-\textbf{\textup{x}}\|^2\leq (1-\delta)\|\textbf{\textup{x}}\|^2,
	\end{align}
	for the compression ratio $\delta>0$, where $\mathbb{E}_Q$ denotes the expectation about the internal randomness of operator $Q$. 
\end{assumption}

We define $\textbf{u}^{(k)}\triangleq \textbf{u}_{k\tau}$, which is the average of model parameters after $k$ iteration rounds. With Assumption \ref{ass1} qualifying the loss function $F_i(\cdot)$ and Assumption \ref{ass2} bounding the discrepancy of compression, we obtain the following proposition which states the convergence of C-DFL.
\begin{proposition}[\textbf{Convergence of C-DFL}]\label{convergenceofCSDFL}
	Under Assumption \ref{ass1} and \ref{ass2}, 
	if the learning rate $\eta_k=\frac{4}{\mu(a+k)}$ for parameter $a\geq 16\kappa$, consensus step size $\gamma=\frac{\rho^2\delta}{16\rho+\rho^2+4\beta^2 +2\rho\beta^2-8\rho\delta}$, and the spectral gap $\rho=1-\max\left\{|\lambda_2(\textup{\textbf{C}})|, |\lambda_N(\textup{\textbf{C}})|\right\}\in(0,1]$, then $\mathbb{E}[F(\textbf{\textup{w}}^{(T)}_{avg})]$ is bounded as follows: 
	\begin{align}\label{convergence-csdfl}
		\mathbb{E}[F(\textbf{\textup{w}}^{(T)}_{avg})]\!\!-\!\!F^*\!\!=&\mathcal{O}\left(\!\frac{\tau_1\bar{\sigma}^2}{\mu K N}\!\!\right)\!\!+\!\!\mathcal{O}\left(\!\frac{\kappa\tau_1^4 G^2 (1-p)^{\tau_2}}{\mu K^2[1\!-\!(1+\theta)(1-p)^{\tau_2}]}\!\right)\notag\\
		&+\mathcal{O}\left(\frac{\kappa^3 G^2\tau_1}{K^3\mu}\right),
	\end{align}
	where $\textbf{\textup{w}}^{(T)}_{avg}=\frac{1}{S_K}\sum_{k=0}^{K-1}w_k \textup{\textbf{u}}^{(k)}$ for $w_k=(a+k)^2$, $S_K=\sum_{k=0}^{K-1}w_k\geq \frac{1}{3}K^3$, $\theta<\frac{p}{1-p}$ with $p=\frac{\rho^2\delta}{82}$, and $\kappa=\frac{L}{\mu}$. Note that $F^*$ is the global minimum of $F(\cdot)$.
\end{proposition}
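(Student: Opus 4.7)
The plan is to adapt the CHOCO-SGD style analysis of Koloskova et al.\ to the present setting in which each round is a local-update phase of length $\tau_1$ followed by a compressed-gossip phase of length $\tau_2$. I will track two coupled quantities: the distance of the averaged iterate $\textbf{u}_t = \textbf{X}_t\textbf{1}/N$ to the minimizer $\textbf{w}^*$, and a consensus-error potential of the form $\Xi_t = \mathbb{E}\bigl[\|\textbf{X}_t - \hat{\textbf{X}}_t\|_{\mathrm{F}}^2 + \alpha\,\|\hat{\textbf{X}}_t(\textbf{I}-\textbf{J})\|_{\mathrm{F}}^2\bigr]$, where the coefficient $\alpha$ is chosen, exactly as in CHOCO-SGD, so that one compressed gossip step contracts $\Xi$ by a factor $(1-p)$ with $p = \rho^2\delta/82$ for the prescribed $\gamma$. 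Model averaging is unaffected by the gossip step (since $\textbf{C}\textbf{1}=\textbf{1}$ and the CHOCO-G corrections are mean-preserving), so $\textbf{u}_t$ evolves only during local updates, just as in Section~\ref{model_averaging}.

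First I would derive a one-step descent for $\textbf{u}_t$ during the local-update phase. Using $L$-smoothness, $\mu$-strong convexity, Assumption~\ref{ass1}(4)--(5), and a standard decomposition $\nabla F_i(\textbf{w}_t^{(i)}) - \nabla F(\textbf{u}_t)$, one obtains, for $t\in[k]_1$,
\begin{equation*}
\mathbb{E}\|\textbf{u}_{t+1}-\textbf{w}^*\|^2 \le \left(1-\tfrac{\mu\eta_k}{2}\right)\mathbb{E}\|\textbf{u}_t-\textbf{w}^*\|^2 + \tfrac{\eta_k^2\bar\sigma^2}{N} + \tfrac{L\eta_k}{N}\,\mathbb{E}\|\textbf{X}_t(\textbf{I}-\textbf{J})\|_{\mathrm{F}}^2 - \eta_k\bigl(\mathbb{E}[F(\textbf{u}_t)]-F^*\bigr),
\end{equation*}
and $\textbf{u}_{t+1}=\textbf{u}_t$ for $t\in[k]_2$. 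This is the familiar perturbed-SGD recursion; the only non-standard ingredient is controlling the consensus error term $\mathbb{E}\|\textbf{X}_t(\textbf{I}-\textbf{J})\|_{\mathrm{F}}^2$, which we dominate by $\Xi_t$ up to constants.

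Second, I would establish a one-round recursion for $\Xi$. During the local-update sub-phase, Assumption~\ref{ass1}(5) gives $\mathbb{E}\|g(\textbf{w}_t^{(i)})\|^2\le G^2$, so $\Xi$ can grow by at most $\mathcal{O}(\tau_1^2\eta_k^2 G^2 N)$ over $\tau_1$ steps (telescoping the incremental $\eta_k^2$ additions together with a Young-type inequality with parameter $\theta$). During the compressed-gossip sub-phase, iterating the CHOCO-G contraction $\tau_2$ times yields a factor $(1-p)^{\tau_2}$. Combining with Young's inequality parametrised by $\theta<p/(1-p)$ delivers
\begin{equation*}
\Xi_{(k+1)\tau} \le (1+\theta)(1-p)^{\tau_2}\,\Xi_{k\tau} + \bigl(1+\tfrac{1}{\theta}\bigr)\,C\,\tau_1^{2}\eta_k^{2} G^{2} N,
\end{equation*}
which is a contraction whose stationary level is $\mathcal{O}\!\bigl(\tau_1^2\eta_k^2 G^2 N\,(1-p)^{\tau_2}/[1-(1+\theta)(1-p)^{\tau_2}]\bigr)$. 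Unrolling over rounds and summing the contributions (one extra factor of $\tau_1$ comes from summing the descent inequality over the $\tau_1$ local steps in each round, and another from the weighted averaging) produces the $\tau_1^4$ factor visible in the second term of \eqref{convergence-csdfl}.

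Finally, I would close the argument by the standard weighted-averaging trick of Stich: sum the descent recursion over one round, plug in the $\Xi$ bound, multiply by the weights $w_k=(a+k)^2$ matched to $\eta_k=4/(\mu(a+k))$, and telescope so that the $\mathbb{E}\|\textbf{u}_{k\tau}-\textbf{w}^*\|^2$ terms collapse. Using convexity and Jensen on $\textbf{w}^{(T)}_{avg}=\tfrac{1}{S_K}\sum_{k=0}^{K-1}w_k\textbf{u}^{(k)}$, together with $S_K\ge K^3/3$, yields a $1/(KN)$ variance term, a $1/K^2$ consensus-compression term with the claimed $(1-p)^{\tau_2}$ factor, and a $1/K^3$ initialization term, matching \eqref{convergence-csdfl}.

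The main obstacle is the coupled evolution of $\Xi$ through the hybrid round: designing a single composite potential that simultaneously (i) contracts by $1-p$ under one CHOCO-G step with $p=\rho^2\delta/82$, and (ii) degrades gracefully under $\tau_1$ consecutive SGD steps with bound $G^2$. Choosing $\alpha$, $\gamma$, and $\theta$ to make the two phases compatible — i.e., so that iterating the CHOCO-G contraction $\tau_2$ times still leaves room to absorb the local-update growth via Young's inequality — is the delicate step. Once the one-round recursion is obtained, the rest is the routine but tedious bookkeeping of the Stich-type weighted averaging argument.
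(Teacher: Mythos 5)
Your proposal is correct and follows essentially the same route as the paper's proof: a round-level perturbed strongly-convex SGD recursion on the averaged iterate (Lemma 8 in Appendix B), the CHOCO-G consensus potential contracting at rate $1-p$ with $p=\rho^2\delta/82$ combined with the local-update growth via Young's inequality with parameter $\theta$ (Lemmas 9--11), and Stich's weighted-averaging lemma with $w_k=(a+k)^2$ to telescope (Lemmas 12--13). The only cosmetic differences are that the paper derives the descent inequality directly over a whole round rather than per step and invokes Koloskova et al.'s contraction lemma as a black box rather than re-deriving the composite potential.
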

\begin{proof} The complete proof is presented in Appendix B. 
We give the proof outline as follows.

	We first have 
	\begin{align}\label{outline2}
	&\|\textbf{u}^{(k+1)}-\textbf{u}^*\|^2\notag\\
	=&\left\|\textbf{u}^{(k)}-\textbf{u}^*-\eta_{k}\sum_{j=0}^{\tau_1-1}\frac{1}{N}\sum_{q=1}^{N}g(\textbf{w}_{k\tau+j}^{(q)})\right\|^2\notag\\
	=&\left\|\textbf{u}^{(k)}-\textbf{u}^*-\sum_{j=0}^{\tau_1-1}\frac{\eta_{k}}{N}\sum_{i=1}^{N}\nabla F_i(\textbf{w}_{k\tau+j}^{(i)})+\right.\notag\\
	&\left.\sum_{j=0}^{\tau_1-1}\frac{\eta_{k}}{N}\sum_{i=1}^{N}\nabla F_i(\textbf{w}_{k\tau+j}^{(i)})
	-\sum_{j=0}^{\tau_1-1}\frac{\eta_{k}}{N}\sum_{q=1}^{N}g(\textbf{w}_{k\tau+j}^{(q)})\right\|^2\notag\\	
	\leq&\left\|\textbf{u}^{(k)}-\textbf{u}^*-\sum_{j=0}^{\tau_1-1}\frac{\eta_{k}}{N}\sum_{i=1}^{N}\nabla F_i(\textbf{w}_{k\tau+j}^{(i)})\right\|^2+\eta_{k}^2\tau_1^2\frac{\bar{\sigma}^2}{N}.
	\end{align}
	Then considering the first term of \eqref{outline2}, we obtain
	\begin{align}\label{outline2-1}
	&\left\|\textbf{u}^{(k)}-\textbf{u}^*-\sum_{j=0}^{\tau_1-1}\frac{\eta_{k}}{N}\sum_{i=1}^{N}\nabla F_i(\textbf{w}_{k\tau+j}^{(i)})\right\|^2\notag\\
	=&\|\textbf{u}^{(k)}-\textbf{u}^*\|^2+\eta_{k}^2\underbrace{\left\|\sum_{j=0}^{\tau_1-1}\frac{1}{N}\sum_{i=1}^{N}\nabla F_i(\textbf{w}_{k\tau+j}^{(i)})\right\|^2}_{\mbox{\footnotesize $T_1$}}\notag\\
	&-\underbrace{2\eta_{k}\left\langle\textbf{u}^{(k)}-\textbf{u}^*,\sum_{j=0}^{\tau_1-1}\frac{1}{N}\sum_{i=1}^{N}\nabla F_i(\textbf{w}_{k\tau+j}^{(i)})\right\rangle}_{\mbox{\footnotesize $T_2$}}.
	\end{align}
    Considering $T_1$ in \eqref{outline2-1}, we have
    \begin{align*}
    &T_1\notag\\
    \leq&\frac{2L^2}{N}\tau_1\sum_{j=0}^{\tau_1-1}\sum_{i=1}^{N}\|\textbf{w}_{k\tau+j}^{(i)}-\textbf{u}_{k\tau+j}\|^2+4L\tau_1^2 \alpha_k(F(\textbf{u}^{(k)})-F^*);
    \end{align*} 
    and considering $T_2$ in \eqref{outline2-1}, we obtain 
    \begin{align*}
    &-\frac{1}{\eta_{k}}T_2\notag\\
    \leq&-2\tau_1(F(\textbf{u}^{(k)})-F^*)+\frac{L+\mu}{N}\sum_{j=0}^{\tau_1-1}\sum_{i=1}^{N}\|\textbf{u}^{(k)}-\textbf{w}_{k\tau+j}^{(i)}\|^2\notag\\
    &-\frac{\mu\tau_1}{2}\|\textbf{u}^{(k)}-\textbf{u}^*\|^2.
    \end{align*}
    We can further get
    \begin{align*}
    &\|\textbf{u}^{(k)}-\textbf{w}_{k\tau+j}^{(i)}\|^2\notag\\
    \leq&2\|\textbf{u}_{k\tau+j}-\textbf{w}_{k\tau+j}^{(i)}\|^2+\frac{4\eta_{k}^2 L^2 j}{N}\sum_{p=0}^{j-1}\sum_{i=1}^{N}\|\textbf{w}_{k\tau+p}^{(i)}-\textbf{u}_{k\tau+p}\|^2+\notag\\
    &8\eta_{k}^2 L j^2 \alpha_k(F(\textbf{u}^{(k)})-F^*)+\frac{\eta_{k}^2 j^2 \bar{\sigma}^2}{N}.
    \end{align*}
    Then putting everything together, we obtain Lemma 8 in Appendix B, where a upper bound of $\mathbb{E}\|\textbf{u}_{(k+1)\tau}-\textbf{u}^*\|^2$ is derived as 
	\begin{align}\label{subcon}
		&\mathbb{E}_{\xi_{k\tau}^{(1)},...,\xi_{k\tau}^{(N)}}\|\textbf{\textup{u}}^{(k+1)}-\textbf{\textup{u}}^{*}\|^2\leq\left(1-\frac{\mu\eta_{k}\tau_1}{2}\right)\|\textbf{\textup{u}}^{(k)}-\textbf{\textup{u}}^{*}\|^2 +\notag \\
		&\eta_k^2\tau_1^2\frac{\bar{\sigma}^2}{N} + \frac{(L+\mu)\eta_k^3\bar{\sigma}^2(\tau_1-1)(2\tau_1-1)\tau_1}{6N} +\notag\\ &\!(\!F(\textbf{\textup{u}}^{(k)})\!\!-\!F^*\!)\!\!\left[\!4L\eta_k^2\tau_1^2\alpha_k\!\!-\!\!2\eta_k\tau_1\!\!+\!\!\frac{4}{3}(L\!\!+\!\!\mu)\eta_{k}^3L\alpha_k\!(\tau_1\!\!-\!\!1)\!(2\tau_1\!\!-\!\!1)\tau_1\!\!\right]\notag\\
		&\!+\! \!\frac{2L^2\eta_k^2\tau_1}{N}\!\Phi_k \!\!+\!\! \frac{2(L\!\!+\!\!\mu)\eta_k}{N}\Phi_k\!\!+\!\!\frac{(L\!\!+\!\!\mu)\eta_k}{N}\!\!\sum_{j=0}^{\tau_1-1}\sum_{i=1}^{N}\frac{4\eta_k^2L^2j^2}{N}\Phi_{k,j},
	\end{align}
	where $\Phi_k=\sum_{j=0}^{\tau_1-1}\sum_{i=1}^{N}\|\textbf{\textup{u}}_{k\tau+j}-\textbf{\textup{w}}_{k\tau+j}^{(i)}\|^2$ and $\Phi_{k,j}=\sum_{p=0}^{j-1}\sum_{i=1}^{N}\|\textbf{\textup{u}}_{k\tau+p}-\textbf{\textup{w}}_{k\tau+p}^{(i)}\|^2$.
    Then upper bounds of $\Phi_k$ and $\Phi_{k,j}$ are given in Lemma 11 of Appendix B.
    
    Lemma 12 \citep[Lemma 3.4]{stich2018local} gives an upper bound about $F(\textbf{u}_{k\tau})$ because \eqref{subcon} matches the inequality of Lemma 12. Therefore, we can prove the convergence of C-DFL as presented in Lemma 13. Finally, Proposition \ref{convergenceofCSDFL} follows from Lemma 13  using the inequality $\mathbb{E}\mu\|\textbf{u}_0-\textbf{u}^*\|\leq 2G$ derived in \citep[Lemma 2]{rakhlin2012making}.
\end{proof}

When $K$ and $\bar{\sigma}$ are sufficiently large, the last two terms in \eqref{convergence-csdfl} are negligible compared to $\mathcal{O}\left(\frac{\tau_1\bar{\sigma}^2}{\mu K N}\right)$. Furthermore, when $\tau_1=1$, this dominant term becomes $\mathcal{O}\left(\frac{\bar{\sigma}^2}{\mu K N}\right)$, which is consistent with the result of CHOCO-SGD proposed in \cite{koloskova2019decentralized}. This also shows the linear convergence rate of decentralized SGD methods. Regarding the computation frequency $\tau_1$, it is easy to find that a larger value of $\tau_1$ leads to a larger convergence upper bound. Regarding the communication frequency $\tau_2$, the second term $\mathcal{O}\left(\!\!\frac{\kappa\tau_1^4 G^2 (1-p)^{\tau_2}}{\mu K^2[1-(1+\theta)(1-p)^{\tau_2}]}\!\!\right)$ decreases with $\tau_2$. The result verifies the impact analysis of $\tau_1$ and $\tau_2$ in Remark \ref{remark1}. 
The second term in \eqref{convergence-csdfl} decreases with $\delta$, which illustrates that a smaller compression rate lets the convergence get worse when improving the communication efficiency by compression.

\section{Simulation and discussion}
In this section, we study the DFL and C-DFL frameworks with embedding CNN into each node, and evaluate the convergence performance of DFL and communication efficiency of C-DFL based on MNIST and CIFAR-10 datasets. 

\subsection{Simulation Setup}
Based on Torch module of Python environment, we build a decentralized FL framework where each node can only communicate with its connected nodes. Because sparse topologies are able to improve the convergence of DFL significantly, we use ring topology and quasi-ring topology for experiments, and set 10 nodes in the two ring topologies as shown in Fig. \ref{ring-torus}. For the two topologies, each node updates its model parameter by taking an average of parameters from the connected neighbors and itself. 
Note that for ring topology, the second largest eigenvalue $\zeta=0.87$, and for quasi-ring topology, $\zeta=0.85$. 
We use CNN to train the local model and choose the cross-entropy as the loss function. The specific CNN structure is presented in Appendix C. 
\begin{figure}[!t]
	\centering
	\includegraphics[scale=0.2]{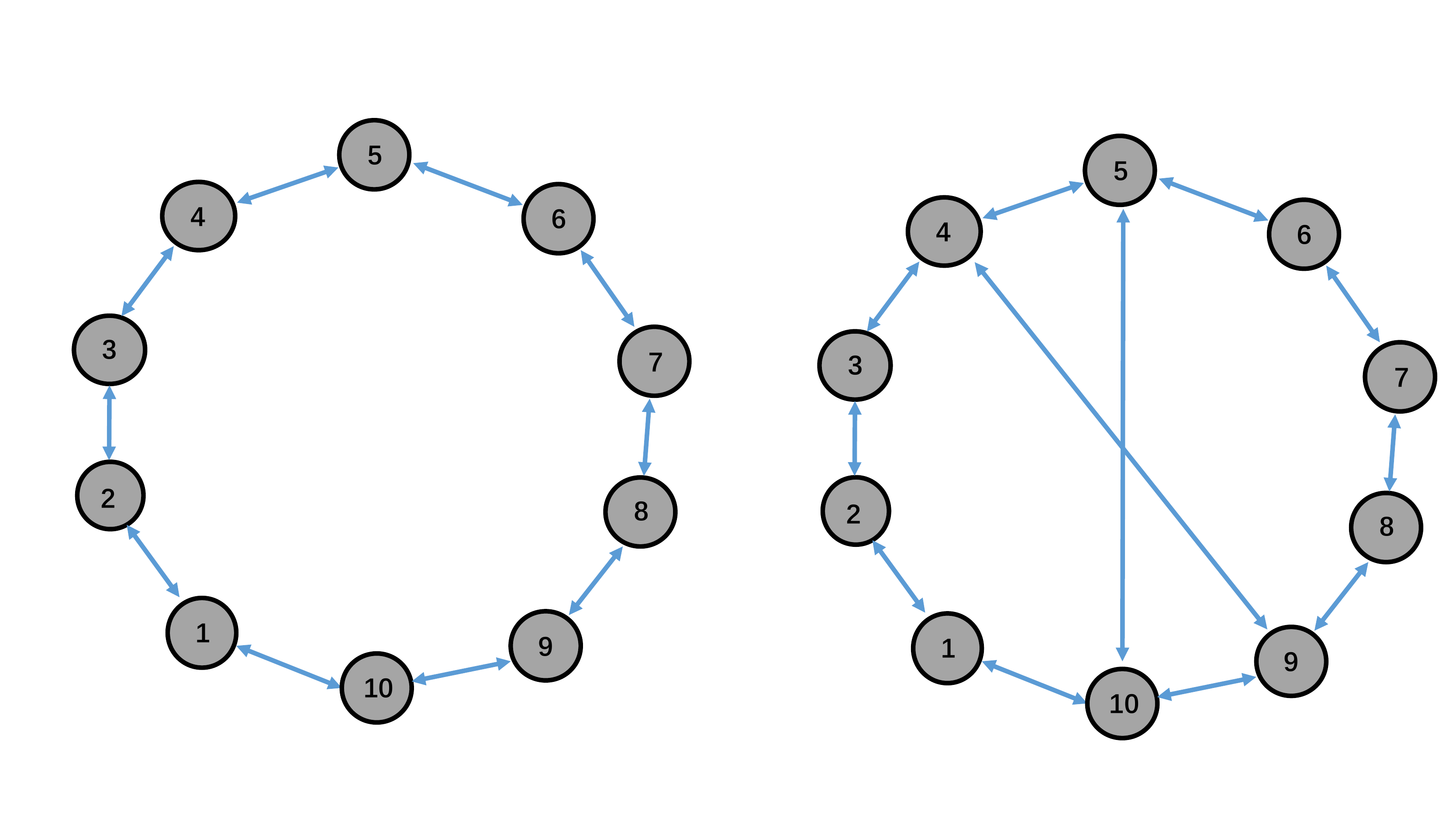}        
	\caption{Ring topology (left) and Quasi-ring topology (right)}
	\label{ring-torus}
\end{figure}

In our experiments, we use MNIST \cite{lecun1998gradient} and CIFAR-10 \cite{krizhevsky2009learning} dataset for CNN training and testing. 
Considering actual application scenarios of DFL where different devices have different distribution of data, statistical heterogeneity is a main feature for simulation.
So we set that the distribution of the training data samples is non-i.i.d. and the distribution of the testing data is i.i.d. 

For experimental setup, 
SGD is used for local update steps.
At the beginning of learning, the same initialization is applied with $\textbf{w}_0^{(1)}=\textbf{w}_0^{(2)}=\cdots=\textbf{w}_0^{(N)}$. In the experiments of MNIST, the learning rate is set as $\eta=0.002$. In the experiments of CIFAR-10, the learning rate is set as $\eta=0.008$.

\subsection{Simulation Evaluation}
In this subsection, we simulate DFL algorithm with CNN and verify the improved convergence of DFL compared with C-SGD. Then we explore the effect of the parameters $\tau_1$, $\tau_2$ and $\zeta$ on the convergence performance. Furthermore, we verify the convergence of C-DFL and illustrate the communication efficiency under different compressed communication themes.

\begin{figure*}[!t]
	\centering
	\subfloat[Ring topology trained with MNIST]{\includegraphics[scale=0.3]{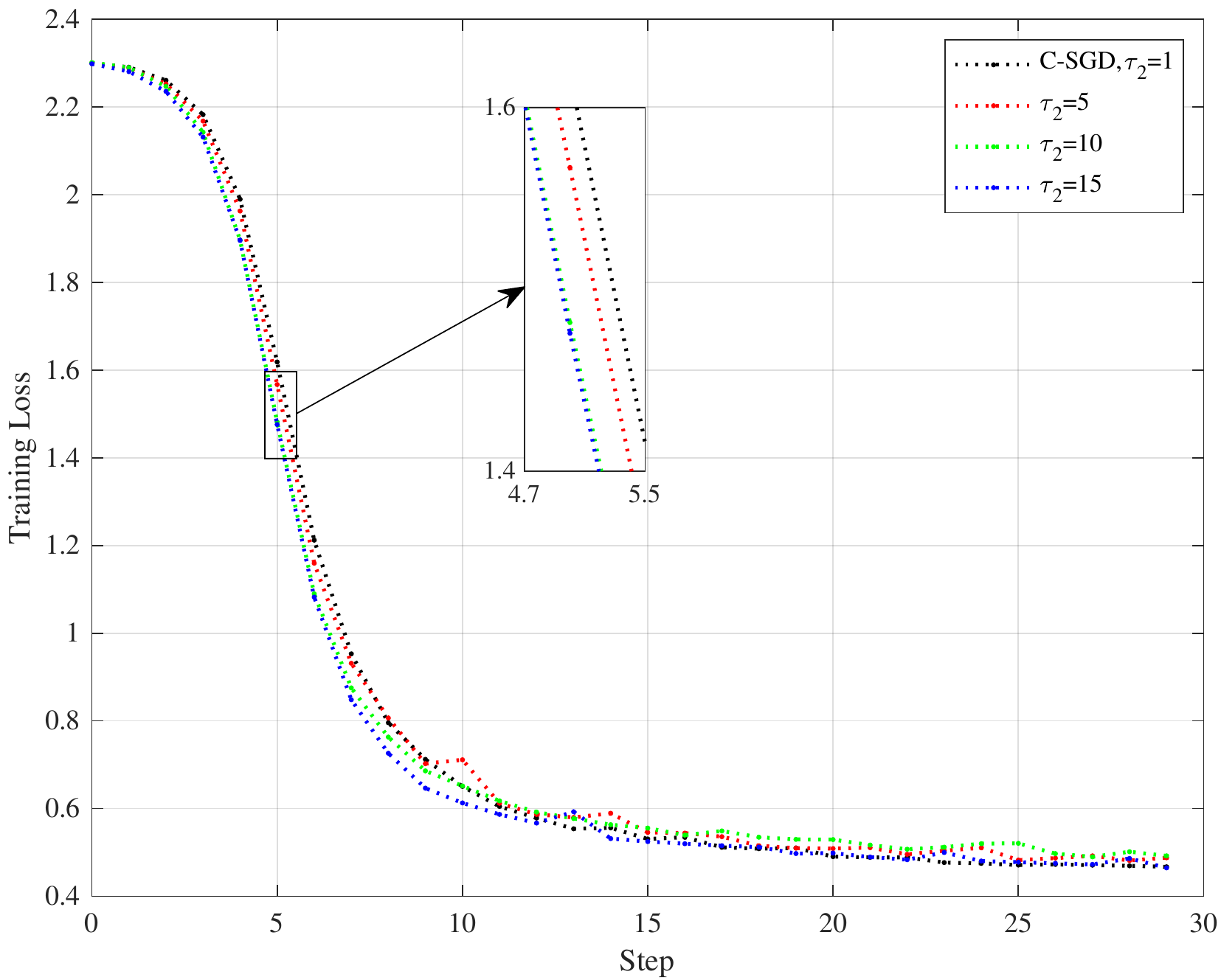}\label{mnist_loss}}
	\subfloat[Ring topology trained with MNIST]{\includegraphics[scale=0.3]{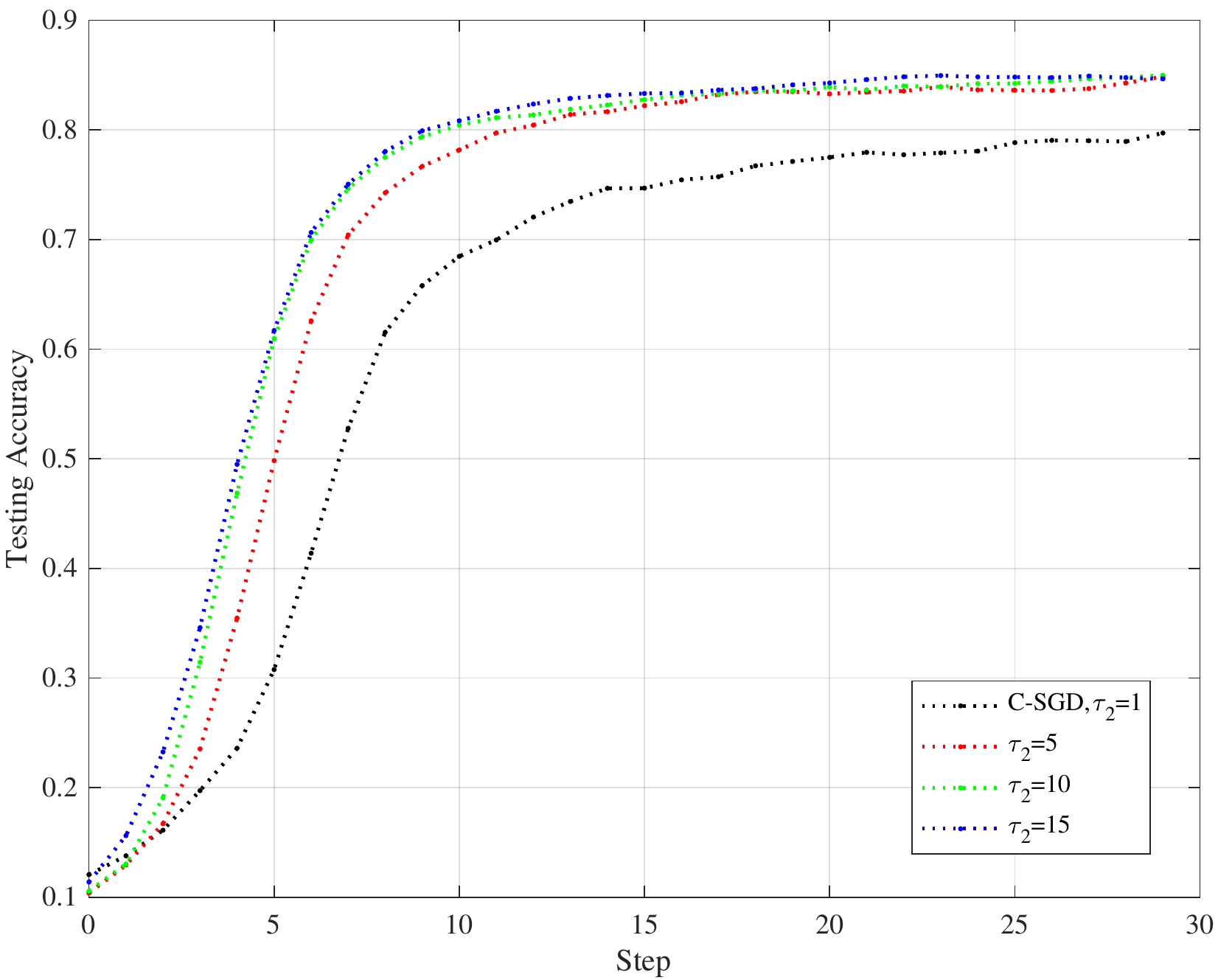}\label{mnist_acc}}
	\subfloat[Ring topology trained with CIFAR-10]{\includegraphics[scale=0.3]{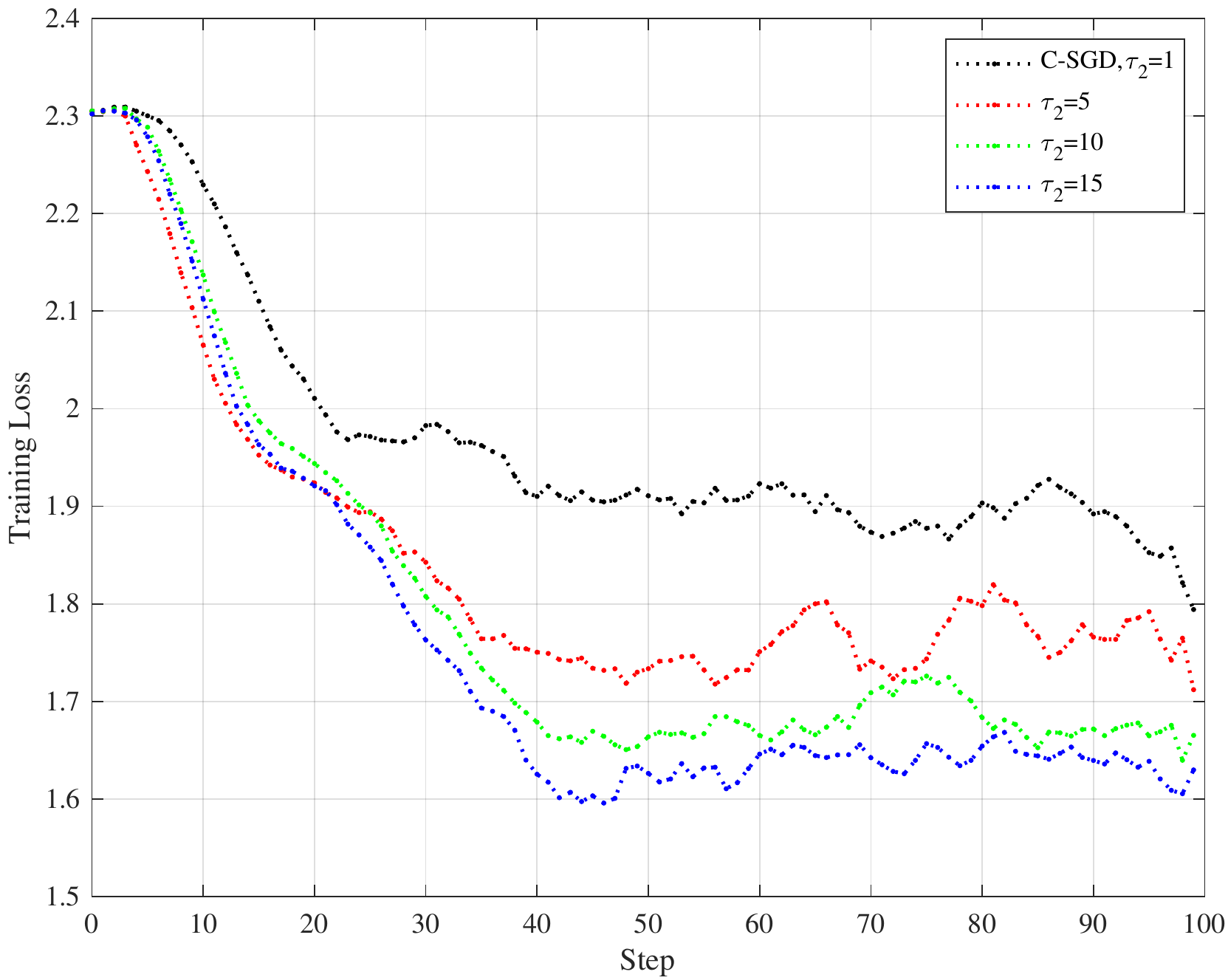}\label{cifar_loss}}
	\\
	\subfloat[Ring topology trained with CIFAR-10]{\includegraphics[scale=0.3]{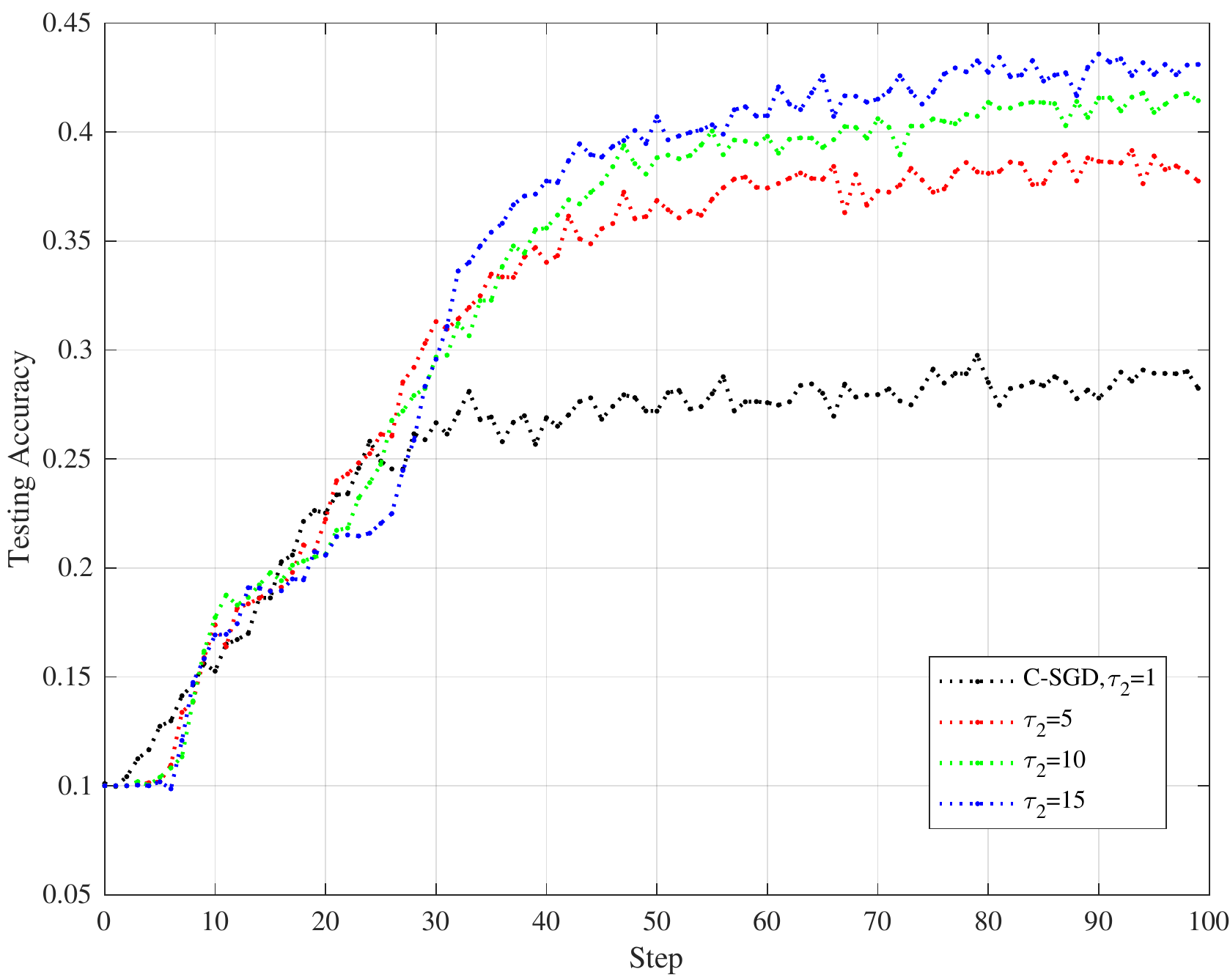}\label{cifar_acc}}
	\subfloat[Quasi-ring topology trained with MNIST]{\includegraphics[scale=0.3]{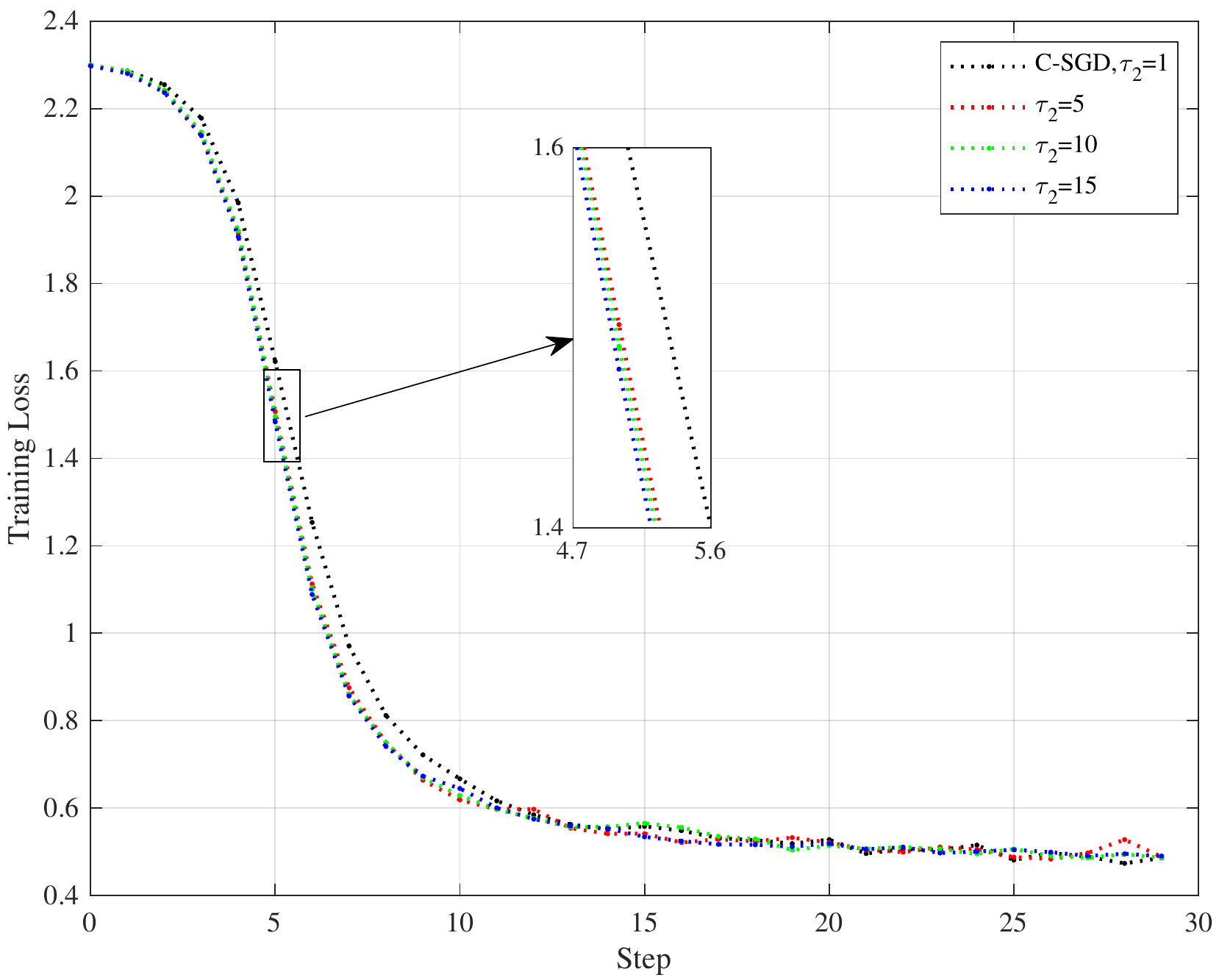}\label{torus-loss}}
	\subfloat[Quasi-ring topology trained with MNIST]{\includegraphics[scale=0.3]{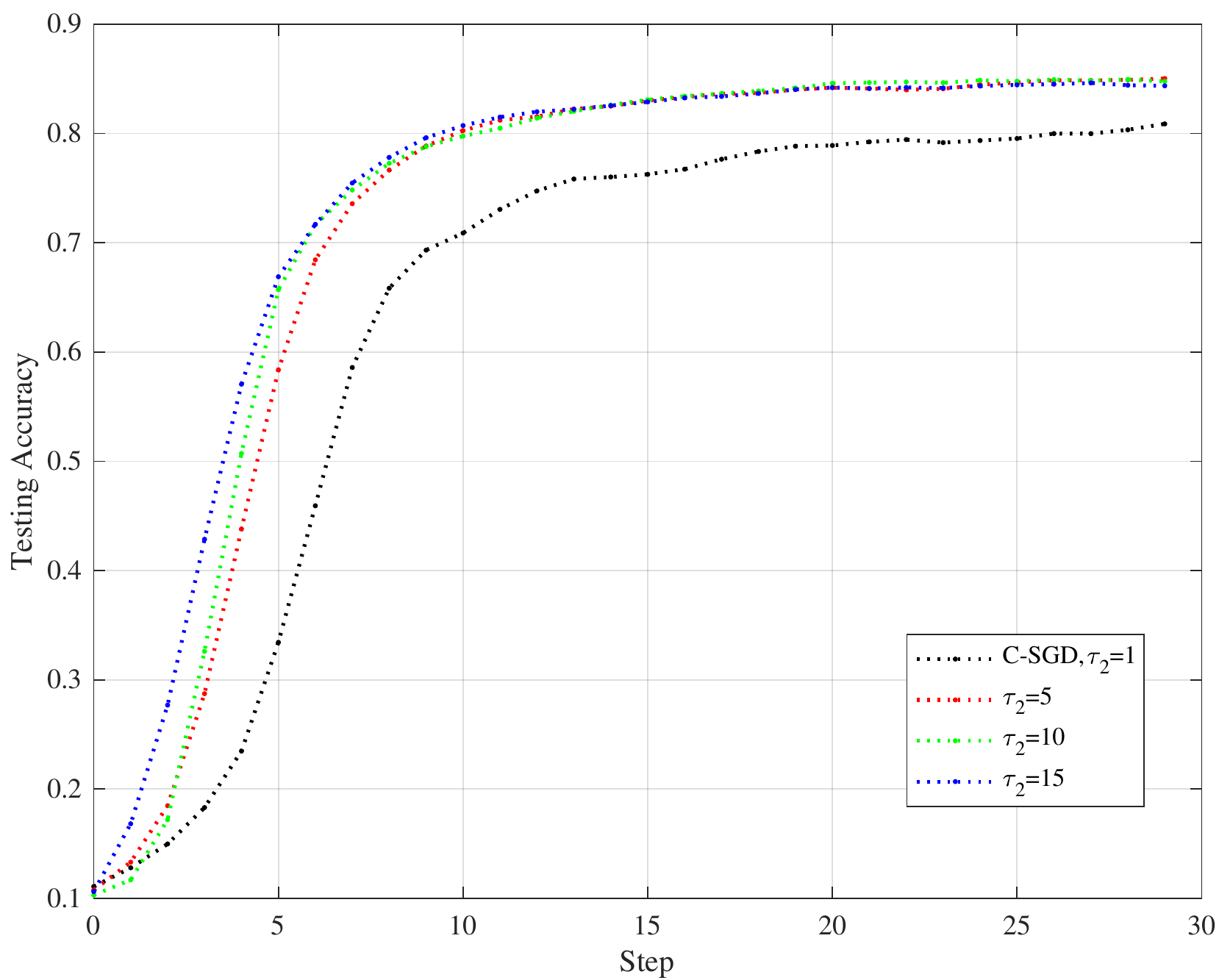}\label{torus-acc}}
	\caption{Experiments on MNIST and CIFAR-10 datasets with CNN under different $\tau_2$. There are 10 nodes for ring topology and quasi-ring topology. We set the computation frequency $\tau_1=4$. Ring topology uses both MNIST and CIFAR-10 as the training dataset, and quasi-ring topology is trained with MNIST. For the training on MNIST, we set the learning rate $\eta=0.002$. For the training on CIFAR-10, we set the learning rate $\eta=0.008$.}
	\label{convergencecurve}
\end{figure*}

\subsubsection{The accelerated convergence of DFL}
We present the accelerated convergence of DFL and use C-SGD as the benchmark, where C-SGD is a special case of DFL with communication frequency $\tau_2=1$. We set the fixed computation frequency $\tau_1=4$ for DFL and C-SGD. For ring topology, both DFL and C-SGD are performed based on two CNN models which use MNIST and CIFAR-10 for training and testing, respectively. For quasi-ring topology, DFL and C-SGD algorithms are only based on the CNN model with MNIST.  Under the same dataset, the global loss functions of the two algorithms are identical.

The curves of training loss and testing accuracy are presented in Fig. \ref{convergencecurve}. We can see that the training loss and testing accuracy curves converge gradually with steps based on the two datasets, thereby validating the efficient convergence of DFL.  We can further find out that the training loss and testing accuracy of DFL outperform these of C-SGD under the same condition. 
This verifies that DFL accelerates the convergence compared with C-SGD. 

\subsubsection{Effect of $\tau_2$}
As shown in Fig. \ref{convergencecurve}, we can find that under the same number of steps, the case of $\tau_2=15$ shows the best convergence of training loss and prediction performance and the case of $\tau_2=1$ shows the worst. A larger $\tau_2$ leads to a better convergence whether in training loss or testing accuracy, thereby verifying our analysis that the convergence upper bound decreases with $\tau_2$ in Remark \ref{remark1}. Thus, more inter-node communications in a round can improve the learning performance of DFL. 

\begin{figure}[!t]
	\centering
	\includegraphics[scale=0.31]{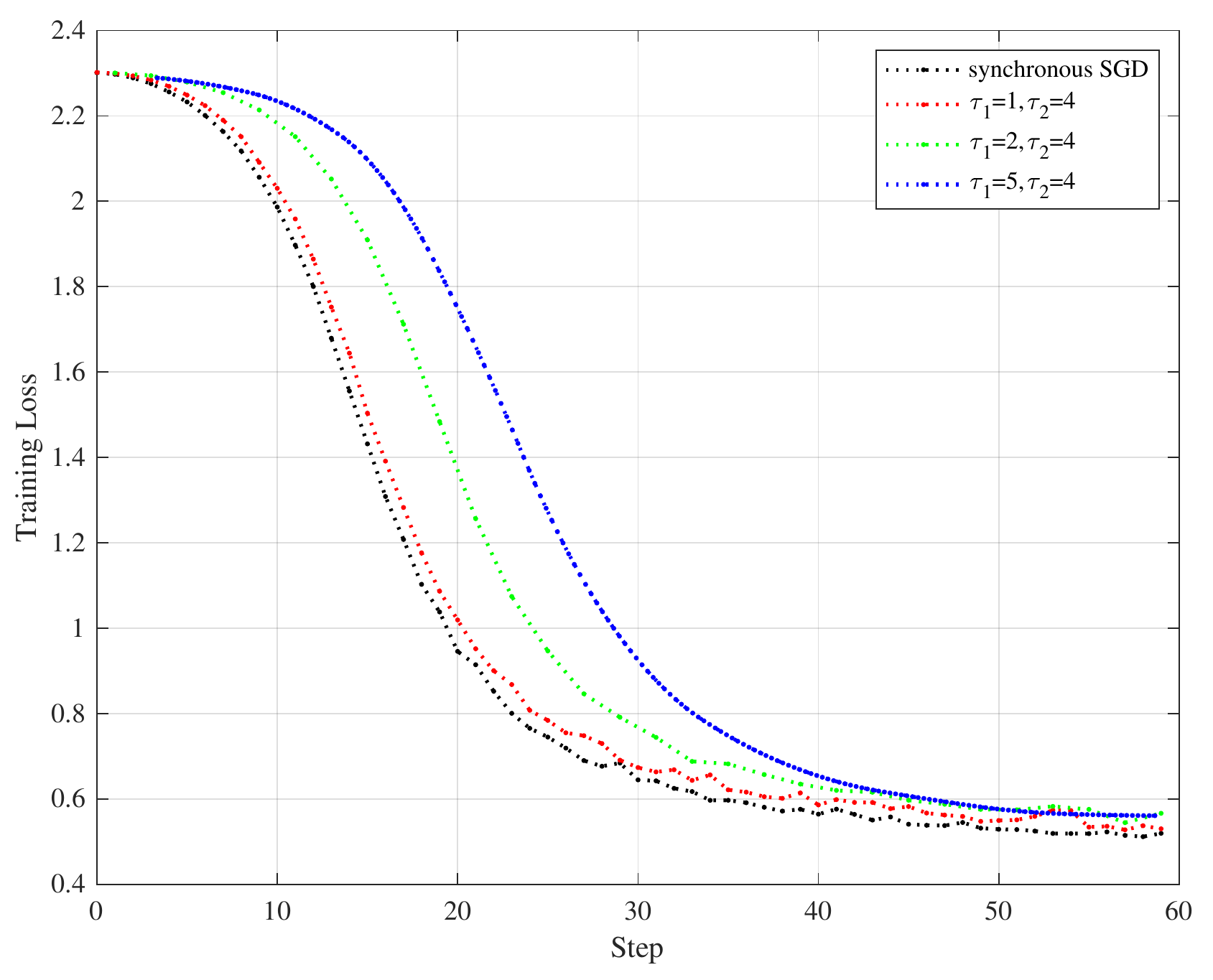}        
	\caption{Training loss curves with different $\tau_1$. The experiment is based on MNIST dataset. There are 10 edge nodes and this experiment is based on ring topology. The learning rate is $\eta=0.002$ and the communication frequency is $\tau_2=4$.}
	\label{tau1}
\end{figure}

\subsubsection{Effect of $\tau_1$}
Based on MNIST dataset, we simulate the training loss curves with different computation frequency $\tau_1$.  This experiment is based on ring topology.
We use synchronous SGD as the benchmark where the computational frequency is set as $\tau_1=1$ and the confusion matrix is set as $\textbf{C}=\textbf{J}$ to achieve complete model averaging.
The experimental result is presented in Fig. \ref{tau1}. We can find out that under the same number of steps, synchronous SGD outperforms DFL, thereby verifying Corollary \ref{cor1} that local update once with sufficient model averaging in a round leads to the optimal convergence of DFL. We can see that as $\tau_1$ goes from $1$ to $10$, the convergence of training loss becomes much worse. This is because more local updates lead to a worse local drift. Therefore, the simulation results confirm that the convergence bound increases with $\tau_1$ as discussed in Remark \ref{remark1}.

\begin{figure}[!t]
	\centering
	\includegraphics[scale=0.31]{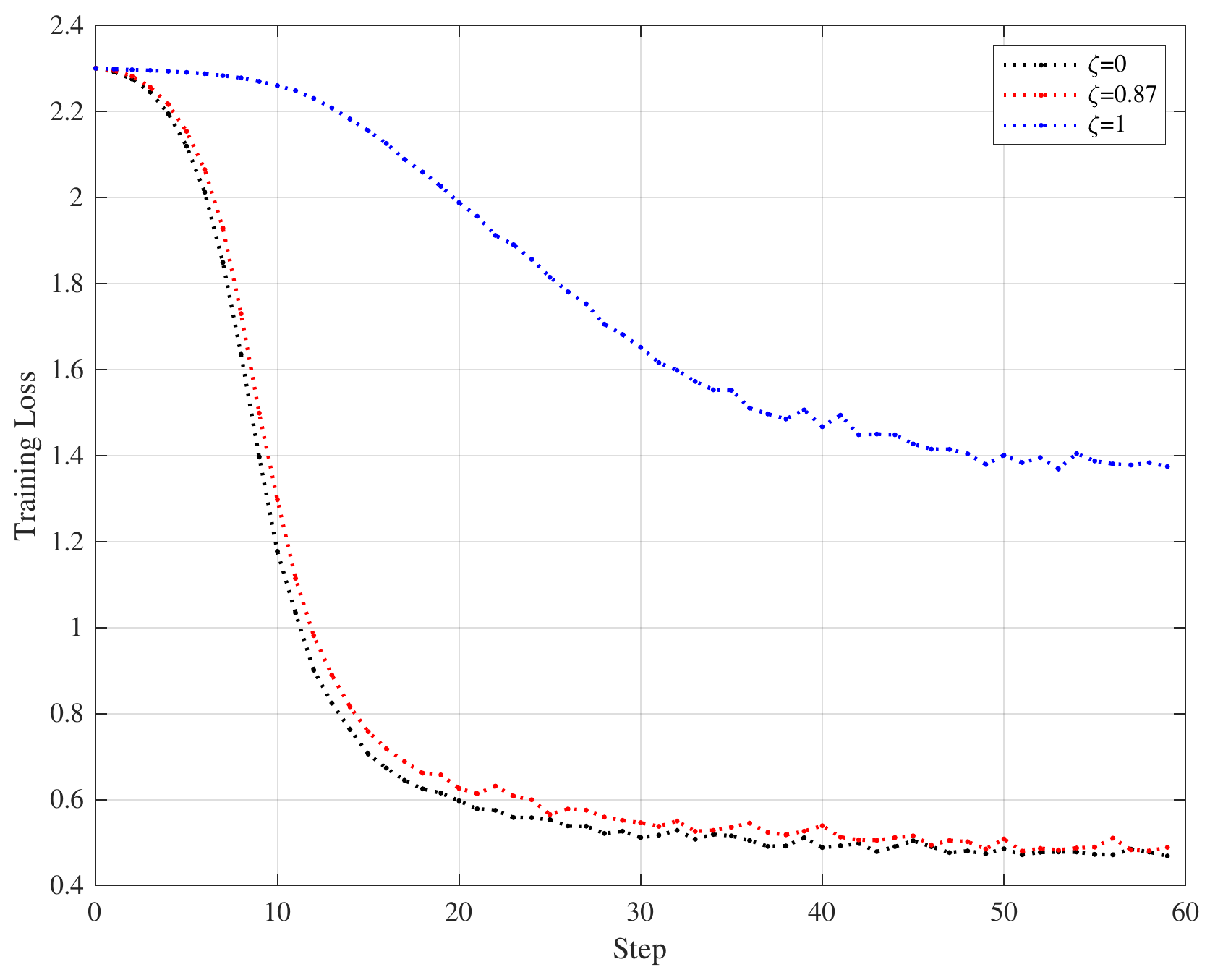}        
	\caption{Training loss curves with different $\zeta$. The experiment is based on MNIST dataset. There are 10 edge nodes. We set the learning rate $\eta=0.002$, the computation frequency $\tau_1=2$ and the communication frequency $\tau_2=4$.}
	\label{zeta}
\end{figure}
\begin{figure}[!t]
	\centering
	\subfloat[Training loss with wall-clock time]{\includegraphics[scale=0.31]{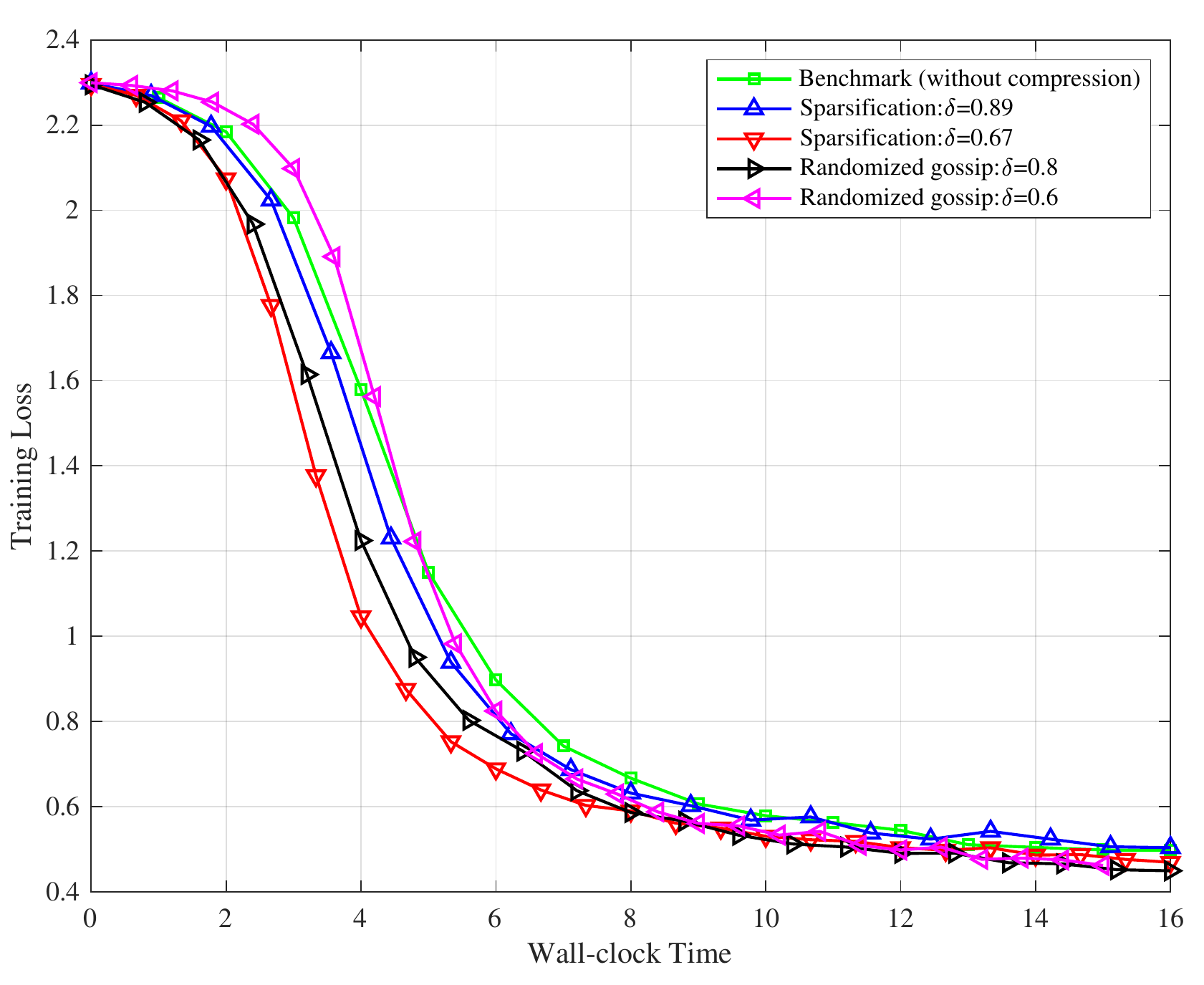}\label{time}}
	\\
	\subfloat[Training loss with step]{\includegraphics[scale=0.31]{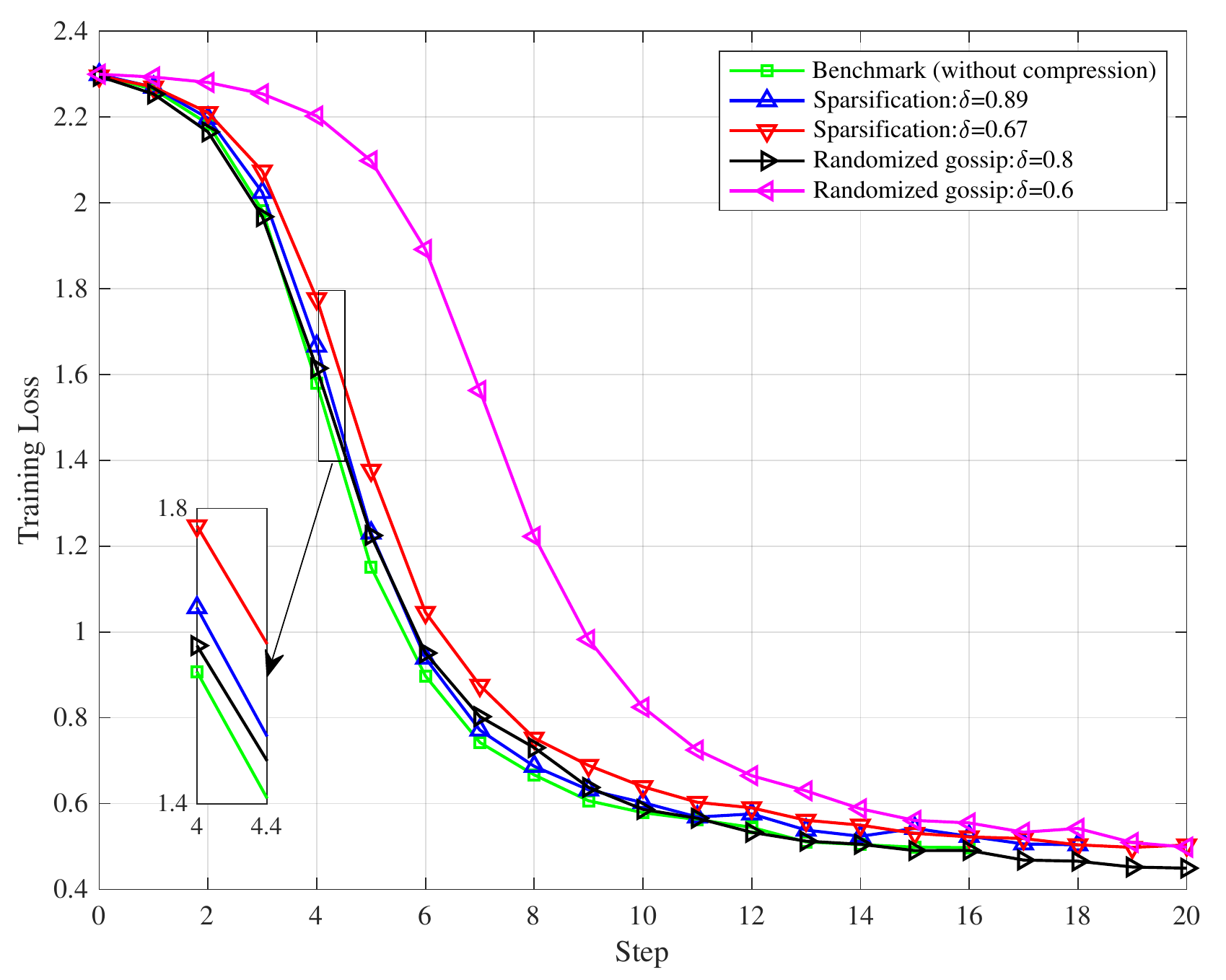}\label{epoch}}
	\caption{C-DFL training on MNIST with CNN. The ring topology is used with $10$ nodes. The computation frequency $\tau_1$ and communication frequency $\tau_2$ are set as $4$ and $4$, respectively. The consensus step size is $\gamma=1$ and the learning rate $\eta$ is $0.002$. }
	\label{compressionfigure}
\end{figure}

\subsubsection{Effect of $\zeta$}
We also explore the impact of $\zeta$ on DFL convergence based on MNIST dataset. In this experiment, we set the computation frequency $\tau_1=2$ and the communication frequency $\tau_2=4$. We use DFL with $\zeta=0$ as the benchmark.
From Fig. \ref{zeta}, we can see that DFL with $\zeta=0$ exhibits the best convergence performance compared with DFL with $\zeta>0$. From the perspective of network topology, condition $\zeta=0$ corresponds to the confusion matrix $\textbf{C}=\textbf{J}$ where the network has the densest inter-node connection. So model averaging is achieved, meaning that there is no local drift caused by insufficient inter-node communication and the convergence of synchronous SGD outperforms that of DFL with $\zeta>0$. We can further find out that a larger $\zeta$ results in a worse convergence of DFL. This is because a larger $\zeta$ corresponds to a sparser confusion matrix or network topology. Then less model averaging is accomplished in the sparser topology. Therefore, the simulation of Fig. \ref{zeta}
verifies the impact of $\zeta$ in Remark \ref{remark2}.

\subsubsection{C-DFL}
In this simulation, we use the sparsification and randomized gossip as compression operators which are introduced in Section \ref{con-csdfl}. To show the improved communication efficiency of C-DFL, we use DFL (without compression) as the benchmark. The training is based on MNIST with CNN and the decentralized network is a ring topology. The computation frequency $\tau_1$ and communication frequency $\tau_2$ are set as $4$ and $4$, respectively. We set the consensus step size $\gamma=1$ and the learning rate $\eta=0.002$.

From Fig. \ref{compressionfigure}, we can see that C-DFL converges under the two compression operators, which are Sparsification and Randomized gossip, respectively. Focusing on Fig. \ref{compressionfigure}(a), we find that C-DFL with Sparsification and Randomized gossip converges at a higher rate than DFL. For example, when wall-clock time is $4$, we obtain the convergences of Sparsification with $\delta=0.89$ and $\delta=0.67$ are increased by $16.9\%$ and $74.6\%$ than DFL, respectively. The convergence performance of Randomized gossip with $\delta=0.8$ rises by $50.7\%$ compared with DFL. This shows the communication efficiency of C-DFL because C-DFL transmits fewer bits and accelerates inter-node communication practically.
Note that when wall-clock time is smaller than $5$, the convergence of Randomized gossip with $\delta=0.6$ is worse than DFL.  We can find that Randomized gossip with $\delta=0.6$ shows the worst convergence in Fig. \ref{compressionfigure}(b). The reason is because the compression ratio $\delta$ is relatively small and too much information is lost in compression so that the benefits of reduced communication overhead brought by compression can not offset the inferior convergence performance due to information loss. Therefore, a smaller compression ratio could result in inferior convergence performance.
Focusing on Fig. \ref{compressionfigure}(b), we can see that convergence of C-DFL is inferior to that of DFL and a smaller compression ratio lets the convergence get worse. This is consistent with the convergence analysis of C-DFL in Proposition \ref{convergenceofCSDFL}.

\section{Conclusion}
In this paper, we have proposed a general decentralized machine learning framework named DFL to jointly consider model consensus and communication efficiency. DFL performs multiple local updates and
multiple inter-node communications. We have established the strong convergence guarantees for DFL without the assumption of convex objective.
The analysis about the convergence upper bound of DFL shows that DFL possesses superior convergence properties compared with C-SGD. Then, we have proposed C-DFL with compressed communication to further improve communication efficiency of DFL. The convergence analysis of C-DFL has indicated a linear convergence behavior.
Finally, based on CNN with MNIST and CIFAR-10 datasets, experimental simulations have been made. The results have validated the theoretical analysis of DFL and C-DFL. 

\small
\bibliographystyle{unsrt} 
\bibliography{bare_jrnl}
\ifCLASSOPTIONcaptionsoff
  \newpage
\fi

\appendices
\newtheorem{lemma}{Lemma}
\section{Proof of Proposition 1}\label{appendix2}
In order to prove Proposition 1, we first present the preliminaries of some notations for facilitating the proof, and then
give a supporting lemma of DFL convergence where the error convergence bound of the synchronous SGD and the local drift brought by DFL are presented. Lastly, we provide the proof of Proposition 1 based on the supporting lemma.
\subsection{Preliminaries of Proof}
For convenience, we first define $\Xi_t$ to denote the set $\{\xi_t^{(1)},\xi_t^{(2)},\dots,\xi_t^{(N)}\}$ of mini-batches at $N$ nodes in the $t$-th iteration step. We use $\textbf{E}_t$ to denote the conditional expectation $\mathbb{E}_{\Xi_t|\textbf{X}_t}$.
In addition, for $t\in[k]_1$, we define the averaged stochastic gradient and the averaged global gradient as the following:
\begin{align}
\label{GGt}\mathcal{G}_t&=\frac{1}{N}\sum_{i=1}^{N}g(\textbf{w}_t^{(i)}),\\
\label{HHt}\mathcal{H}_t&=\frac{1}{N}\sum_{i=1}^{N}\nabla F(\textbf{w}_t^{(i)}).
\end{align}
And when $t\in[k]_2$, DFL performs inter-node communication without gradient descent so that $\mathcal{G}_t=0,\mathcal{H}_t=0$ for $t\in[k]_2$.
Similar to the definition of $\textbf{X}_t,\textbf{G}_t$, we define the global gradient matrix $\nabla F(\textbf{X}_t)$ as follows:
\begin{align}
\nabla F(\textbf{X}_t) \triangleq [\nabla F(\textbf{w}_t^{(1)}),\nabla F(\textbf{w}_t^{(2)}),\dots,\nabla F(\textbf{w}_t^{(N)})].
\end{align}
We further define the Frobenius norm and the operator norm of matrix as follows to make analysis easier.
\begin{definition}[\textbf{The Frobenius norm} \cite{horn2012matrix}]\label{fronorm}
  \rm The Frobenius norm defined for matrix $\textbf{\textup{A}}\in M_n$ by
  \begin{align}
  \|\textbf{\textup{A}}\|_{\textup{F}}^2=|\textup{Tr}(\textbf{A}\textbf{A}^\top)|=\sum_{i,j=1}^{n}|a_{ij}|^2.
  \end{align}
\end{definition}
\begin{definition}[\textbf{The operator norm} \cite{horn2012matrix}]\label{op-norm}
  \rm The operator norm defined for matrix $\textbf{\textup{A}}\in M_n$ by
  \begin{align}
  \|\textbf{A}\|_{\textup{op}}=\max_{\|\textbf{x}\|=1}\|\textbf{A}\textbf{x}\|=\sqrt{\lambda_{\max}(\textbf{A}^\top \textbf{A})}.
  \end{align}
\end{definition}
According to the definition \ref{fronorm}, the Frobenius norm of the global gradient matrix is 
$$\|\nabla F(\textbf{X}_t)\|^2_{\textup{F}}=\sum_{i=1}^{N}\|\nabla F(\textbf{w}_t^{(i)})\|^2.$$

\subsection{The Supporting Lemma}
We provide the supporting lemma for the proof of Proposition 1. The lemma establishes the basic convergence of DFL and the error convergence bound consists of two parts which are caused by the synchronous SGD and the local drift, respectively. We present the supporting lemma as follows:
\begin{lemma}\label{lem1}
  Under Assumption 1, if the learning rate satisfies the condition $\eta L(1+\frac{\beta}{N})\leq 1$ and all local models have the same initial point $\textbf{\textup{w}}_1$, then for Algorithm 1 of DFL, the average-squared gradient of $T$ steps is bounded as 
  \begin{align}
  \mathbb{E}\left[\frac{1}{T}\sum_{t=1}^{T}\|\nabla F(\textbf{\textup{u}}_t)\|^2\right]\leq \underbrace{\frac{2[F(\textbf{\textup{w}}_1)-F_{\textup{inf}}]}{\eta T}+\frac{\eta L \sigma^2}{N}}_{\mbox{\footnotesize synchronous SGD}}+\notag\\
  \underbrace{\frac{L^2}{T}\sum_{t=1}^{T}\frac{\mathbb{E}\|\textbf{\textup{X}}_t(\textbf{\textup{I}}-\textbf{\textup{J}})\|^2_{\textup{F}}}{N}}_{\mbox{\footnotesize local drift}},
  \end{align} 
  where $\textbf{\textup{u}}_t$ is defined in \textup{(17)} and matrices $\textbf{\textup{I}},\textbf{\textup{J}}\in \mathbb{R}^{N\times N}$. 
\end{lemma}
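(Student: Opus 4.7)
The plan is to run the standard $L$-smoothness + SGD descent argument on the trajectory of the \emph{average} iterate $\textbf{u}_t = \textbf{X}_t \textbf{1}/N$, and to decouple the resulting error into a synchronous-SGD part and a consensus (local drift) part. First I would recall from Section \ref{model_averaging} that the averaged dynamics takes the form $\textbf{u}_{t+1} = \textbf{u}_t - \eta\,\mathcal{G}_t$ for $t\in[k]_1$ and $\textbf{u}_{t+1} = \textbf{u}_t$ for $t\in[k]_2$, where $\mathcal{G}_t$ and $\mathcal{H}_t$ are defined in \eqref{GGt}--\eqref{HHt}. Applying $L$-smoothness of $F$ and the update rule gives, for $t \in [k]_1$,
\begin{equation*}
F(\textbf{u}_{t+1}) \leq F(\textbf{u}_t) - \eta\,\langle \nabla F(\textbf{u}_t), \mathcal{G}_t\rangle + \tfrac{\eta^2 L}{2}\|\mathcal{G}_t\|^2,
\end{equation*}
while for $t \in [k]_2$ the inequality degenerates to $F(\textbf{u}_{t+1}) = F(\textbf{u}_t)$ and contributes nothing to the telescoping.

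Next I would take the conditional expectation $\textbf{E}_t$, invoke Assumption~1(4)--(5) to obtain $\textbf{E}_t\mathcal{G}_t = \mathcal{H}_t$ and $\textbf{E}_t\|\mathcal{G}_t - \mathcal{H}_t\|^2 \leq \sigma^2/N$ (using independence of mini-batches across the $N$ nodes), and split the cross term via the polarization identity
\begin{equation*}
-\eta\,\langle \nabla F(\textbf{u}_t), \mathcal{H}_t\rangle = -\tfrac{\eta}{2}\|\nabla F(\textbf{u}_t)\|^2 - \tfrac{\eta}{2}\|\mathcal{H}_t\|^2 + \tfrac{\eta}{2}\|\nabla F(\textbf{u}_t) - \mathcal{H}_t\|^2.
\end{equation*}
The learning-rate condition $\eta L(1+\beta/N)\leq 1$ is strong enough to guarantee $1 - \eta L \geq 0$, so the $\|\mathcal{H}_t\|^2$ term can be discarded.

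The consensus discrepancy is then controlled by Jensen's inequality together with the $L$-smoothness of $F$:
\begin{equation*}
\|\nabla F(\textbf{u}_t) - \mathcal{H}_t\|^2 \leq \tfrac{1}{N}\sum_{i=1}^{N}\|\nabla F(\textbf{u}_t) - \nabla F(\textbf{w}_t^{(i)})\|^2 \leq \tfrac{L^2}{N}\|\textbf{X}_t(\textbf{I} - \textbf{J})\|_{\text{F}}^2,
\end{equation*}
where the final step uses that the columns of $\textbf{X}_t\textbf{J}$ are all equal to $\textbf{u}_t$, so the columns of $\textbf{X}_t(\textbf{I}-\textbf{J})$ are exactly $\textbf{w}_t^{(i)}-\textbf{u}_t$. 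Plugging this in and rearranging yields, for each $t\in[k]_1$,
\begin{equation*}
\tfrac{\eta}{2}\|\nabla F(\textbf{u}_t)\|^2 \leq F(\textbf{u}_t) - \textbf{E}_t F(\textbf{u}_{t+1}) + \tfrac{\eta L^2}{2N}\|\textbf{X}_t(\textbf{I}-\textbf{J})\|_{\text{F}}^2 + \tfrac{\eta^2 L \sigma^2}{2N}.
\end{equation*}

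Finally I would take the total expectation, sum from $t=1$ to $T$, and telescope the $F(\textbf{u}_t)-F(\textbf{u}_{t+1})$ differences (the communication rounds contribute zero to both the telescoping and the variance accumulation), using $F(\textbf{u}_{T+1})\geq F_{\inf}$. Dividing through by $\eta T/2$ yields the claimed bound. The main subtlety, and the only real obstacle, is the bookkeeping across the two types of steps: during $t\in[k]_2$ the descent inequality is vacuous, so the summation really only accrues the variance term across $K\tau_1$ local-update steps; the lemma states the bound in the looser form $\eta L\sigma^2/N$ (using $\tau_1/\tau \leq 1$) and adds $\|\nabla F(\textbf{u}_t)\|^2$ for the communication rounds to the left-hand side by observing that $\textbf{u}_t$ is constant throughout $[k]_2$, so those terms can be absorbed. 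Once this bookkeeping is handled, the rest is essentially a direct calculation.
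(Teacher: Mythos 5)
Your proposal follows essentially the same route as the paper's proof: apply $L$-smoothness to the averaged iterate $\textbf{u}_t$, take the conditional expectation, split the inner product by polarization, bound the stochastic-gradient variance by $\sigma^2/N$ using independence across nodes, convert the consensus discrepancy to $\frac{L^2}{N}\|\textbf{X}_t(\textbf{I}-\textbf{J})\|^2_{\textup{F}}$ via $L$-smoothness and the identity $\sum_{i}\|\textbf{u}_t-\textbf{w}_t^{(i)}\|^2=\|\textbf{X}_t(\textbf{I}-\textbf{J})\|^2_{\textup{F}}$, and telescope. The only cosmetic difference is that you polarize $\langle\nabla F(\textbf{u}_t),\mathcal{H}_t\rangle$ as a whole (producing $-\tfrac{\eta}{2}\|\mathcal{H}_t\|^2$, discarded via $\eta L\le 1$), whereas the paper polarizes term by term (producing $-\tfrac{\eta}{2N}\sum_{i}\|\nabla F(\textbf{w}_t^{(i)})\|^2$, which it cancels against the bound $\textbf{E}_t\|\mathcal{G}_t\|^2\le(1+\beta/N)\tfrac{1}{N}\|\nabla F(\textbf{X}_t)\|^2_{\textup{F}}+\sigma^2/N$ of its Lemma 4 using the full condition $\eta L(1+\beta/N)\le 1$). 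Both variants yield the identical final bound.

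The one substantive point is your treatment of the communication steps $t\in[k]_2$, which you rightly flag as the main subtlety but then dispatch too quickly. For $t\in[k]_2$ the per-step inequality degenerates to $\|\nabla F(\textbf{u}_t)\|^2\le\frac{L^2}{N}\|\textbf{X}_t(\textbf{I}-\textbf{J})\|^2_{\textup{F}}+\frac{\eta L\sigma^2}{N}$ (the telescoping difference is zero there), which is false in general; and your alternative of ``absorbing'' the $\tau_2$ identical terms $\|\nabla F(\textbf{u}_{k\tau})\|^2$ into the bound for the first step of round $k+1$ does not deliver the stated constants either: duplicating that per-step bound $\tau_2$ times multiplies the function-decrease term at that index by $\tau_2+1$ and destroys the telescoping, so you would pick up an extra factor of order $\tau/\tau_1$ on the $\frac{2[F(\textbf{u}_1)-F_{\textup{inf}}]}{\eta T}$ term that is not present in the lemma. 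The paper instead asserts that its Lemmas 2--4 ``still hold'' on $[k]_2$ by formally setting $\nabla F(\textbf{w}_t^{(i)})=0$ there, which sidesteps the issue only at the cost of invalidating the subsequent $L$-smoothness bound $\|\nabla F(\textbf{u}_t)-\nabla F(\textbf{w}_t^{(i)})\|\le L\|\textbf{u}_t-\textbf{w}_t^{(i)}\|$ on those indices. Neither treatment is airtight, but you should be aware that your absorption argument, as written, would not recover the claimed bound.
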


In order to prove Lemma \ref{lem1}, we first present $3$ supporting lemmas as follows. Then based on the presented lemmas, we finish the proof of Lemma \ref{lem1}.

\begin{lemma}\label{lemma2}
  Under the condition 4 and 5 of Assumption 1, we have the following upper bound for the expectation of the discrepancy between the averaged stochastic gradient $\mathcal{G}_t$ and the averaged global gradient $\mathcal{H}_t$ as
  \begin{align}\label{38}
  \mathbb{E}_{\Xi_t|\textbf{\textup{X}}_t}[\|\mathcal{G}_t-\mathcal{H}_t\|^2]\leq\frac{\beta}{N^2}\|\nabla F(\textbf{\textup{X}}_t)\|^2_{\textup{F}}+\frac{\sigma^2}{N}.
  \end{align}
\end{lemma}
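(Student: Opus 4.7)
The proof will exploit two properties baked into Assumption~1: the unbiasedness of the stochastic gradient (condition~4) and the independence of mini-batch sampling across different nodes. The plan is first to rewrite the discrepancy as
\[
\mathcal{G}_t - \mathcal{H}_t \;=\; \frac{1}{N}\sum_{i=1}^{N}\bigl(g(\textbf{w}_t^{(i)}) - \nabla F(\textbf{w}_t^{(i)})\bigr),
\]
so that every summand is a zero-mean perturbation conditional on $\textbf{X}_t$. I would then square the norm and expand into a double sum of inner products indexed by $(i,j)$.

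Next, taking the conditional expectation $\mathbb{E}_{\Xi_t \mid \textbf{X}_t}$ inside the double sum, the key observation is that $\xi_t^{(i)}$ and $\xi_t^{(j)}$ for $i \neq j$ are drawn independently from disjoint local data sets, so every off-diagonal cross-term factors as a product whose first factor $\mathbb{E}_{\xi_t^{(i)}\mid \textbf{X}_t}\bigl[g(\textbf{w}_t^{(i)})-\nabla F(\textbf{w}_t^{(i)})\bigr]$ vanishes by Assumption~1(4). Only the diagonal contributions $(i=j)$ survive, and each is bounded by $\sigma^2$ via Assumption~1(5). This yields a clean bound of $\sigma^2/N$, which is already tighter than the stated inequality; one then simply appends the non-negative quantity $\tfrac{\beta}{N^2}\|\nabla F(\textbf{X}_t)\|_{\textup{F}}^2$, which will be convenient later when the lemma is substituted into the proof of Proposition~1, to reach the claimed form.

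The only real obstacle is careful bookkeeping of the conditional expectation: one must treat $\textbf{X}_t$, and hence every deterministic functional $\nabla F(\textbf{w}_t^{(i)})$, as frozen under $\mathbb{E}_{\Xi_t \mid \textbf{X}_t}$, and invoke the mutual independence of the draws $\xi_t^{(1)},\dots,\xi_t^{(N)}$ at step $t$ to decouple cross-terms. Once the conditioning and independence structure are spelled out, the remainder reduces to a routine variance decomposition and a term-by-term application of the bounded-variance assumption, yielding the stated upper bound.
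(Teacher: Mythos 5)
Your proposal is correct and follows essentially the same route as the paper: rewrite $\mathcal{G}_t-\mathcal{H}_t$ as the average of the per-node deviations $g(\textbf{w}_t^{(i)})-\nabla F(\textbf{w}_t^{(i)})$, expand the squared norm, annihilate the cross terms via the mutual independence of the $\xi_t^{(i)}$ together with unbiasedness (Assumption 1(4)), and bound the surviving diagonal terms by the variance assumption (Assumption 1(5)). The only cosmetic difference is in how the $\beta$ term appears: the paper bounds each diagonal term by $\beta\|\nabla F(\textbf{w}_t^{(i)})\|^2+\sigma^2$ before summing, whereas you obtain the tighter $\sigma^2/N$ and then append the non-negative quantity $\frac{\beta}{N^2}\|\nabla F(\textbf{X}_t)\|_{\textup{F}}^2$ — both yield the stated inequality.
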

\begin{proof}
  According to the definition of $\mathcal{G}_t$ and $\mathcal{H}_t$ in \eqref{GGt} and \eqref{HHt}, respectively, we have 
  
  \begin{align*}
  &\mathbb{E}_{\Xi_t|\textbf{\textup{X}}_t}[\|\mathcal{G}_t-\mathcal{H}_t\|^2]\\
  =&\mathbb{E}_{\Xi_t|\textbf{\textup{X}}_t} \left\|\frac{1}{N}\sum_{i=1}^{N}\left[g(\textbf{w}_t^{(i)})-\nabla F(\textbf{w}_t^{(i)})\right]\right\|^2\\
  =&\frac{1}{N^2}\mathbb{E}_{\Xi_t|\textbf{\textup{X}}_t}\left[\sum_{i=1}^{N}\left\|g(\textbf{w}_t^{(i)})-\nabla F(\textbf{w}_t^{(i)})\right\|^2\right]+\\
  &\frac{1}{N^2}\mathbb{E}_{\Xi_t|\textbf{\textup{X}}_t}\sum_{j\neq l}^{N}  \left \langle  g(\textbf{w}_t^{(j)})-\nabla F(\textbf{w}_t^{(j)}), g(\textbf{w}_t^{(l)})-\nabla F(\textbf{w}_t^{(l)}) \right\rangle\\
  =&\frac{1}{N^2}\sum_{i=1}^{N}\mathbb{E}_{\xi_t^{(i)}|\textbf{X}_t}\left\|g(\textbf{w}_t^{(i)})-\nabla F(\textbf{w}_t^{(i)})\right\|^2+\\
  &\frac{1}{N^2}\sum_{j\neq l}^{N} \left \langle \mathbb{E}_{\xi_t^{(j)}|\textbf{X}_t}\left[g(\textbf{w}_t^{(j)})-\nabla F(\textbf{w}_t^{(j)})\right],\right.
  \\
  \phantom{=\;\;}
  &\left.\mathbb{E}_{\xi_t^{(l)}|\textbf{X}_t}\left[g(\textbf{w}_t^{(l)})-\nabla F(\textbf{w}_t^{(l)})\right]\right\rangle,
  \end{align*}
  where the last equation is due to $\{\xi_t^{(i)}\}$ are independent random variables. Then, according to the condition 4 in Assumption 1, we can obtain that the inner product term of all across terms in the last equation is zero. Thus, we have 
  \begin{align*}
  \mathbb{E}_{\Xi_t|\textbf{\textup{X}}_t}[\|\mathcal{G}_t-\mathcal{H}_t\|^2]&\leq \frac{1}{N^2}\sum_{i=1}^{N}\left[\beta\left\|\nabla F(\textbf{w}_t^{(i)})\right\|^2+\sigma^2\right]\\
  &=\frac{\beta}{N}\frac{\|\nabla F(\textbf{X}_t)\|^2_{\textup{F}}}{N}+\frac{\sigma^2}{N},
  \end{align*}
  where the last equation is due to the condition 5 in Assumption 1 and the definition of the Frobenius norm of $\nabla F(\textbf{X}_t)$. 
\end{proof}

\begin{lemma}\label{lem3}
  Based on Assumption 1, the expectation of the inner product between stochastic gradient and global gradient can expanded as 
  \begin{align}\label{inner-pro-bet}
  \textbf{\textup{E}}_t[\langle\nabla F(\textbf{\textup{u}}_t),\mathcal{G}_t\rangle]=&\frac{1}{2}\|\nabla F(\textbf{\textup{u}}_t)\|^2 +\frac{1}{2N}\sum_{i=1}^{N}\left\|\nabla F(\textbf{\textup{w}}_t^{(i)})\right\|^2\notag\\
  &-\frac{1}{2N}\sum_{i=1}^{N}\left\|\nabla F(\textbf{\textup{u}}_t)-\nabla F(\textbf{\textup{w}}_t^{(i)})\right\|^2,
  \end{align} 
  where $\textbf{\textup{E}}_t$ is the conditional expectation $\mathbb{E}_{\Xi_t|\textbf{\textup{X}}_t}$.
\end{lemma}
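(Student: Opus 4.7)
The plan is to exploit the unbiasedness of the stochastic gradient to strip off the conditional expectation, and then apply the polarization identity $2\langle a,b\rangle=\|a\|^2+\|b\|^2-\|a-b\|^2$ term-wise.

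First, I would invoke condition 4 of Assumption 1, namely $\mathbb{E}_{\xi|\textbf{w}}[g(\textbf{w})]=\nabla F(\textbf{w})$. Combined with the independence of the $\xi_t^{(i)}$ across nodes, this gives
\begin{align*}
\textbf{E}_t[\mathcal{G}_t]=\frac{1}{N}\sum_{i=1}^{N}\textbf{E}_t[g(\textbf{w}_t^{(i)})]=\frac{1}{N}\sum_{i=1}^{N}\nabla F(\textbf{w}_t^{(i)})=\mathcal{H}_t.
\end{align*}
Since $\nabla F(\textbf{u}_t)$ is deterministic given $\textbf{X}_t$, I pull it outside the conditional expectation and conclude $\textbf{E}_t[\langle\nabla F(\textbf{u}_t),\mathcal{G}_t\rangle]=\langle\nabla F(\textbf{u}_t),\mathcal{H}_t\rangle$.

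Second, by linearity of the inner product,
\begin{align*}
\langle\nabla F(\textbf{u}_t),\mathcal{H}_t\rangle=\frac{1}{N}\sum_{i=1}^{N}\left\langle\nabla F(\textbf{u}_t),\nabla F(\textbf{w}_t^{(i)})\right\rangle.
\end{align*}
Now I apply the polarization identity to each summand with $a=\nabla F(\textbf{u}_t)$ and $b=\nabla F(\textbf{w}_t^{(i)})$, obtaining
\begin{align*}
\left\langle\nabla F(\textbf{u}_t),\nabla F(\textbf{w}_t^{(i)})\right\rangle=\tfrac{1}{2}\|\nabla F(\textbf{u}_t)\|^2+\tfrac{1}{2}\|\nabla F(\textbf{w}_t^{(i)})\|^2-\tfrac{1}{2}\|\nabla F(\textbf{u}_t)-\nabla F(\textbf{w}_t^{(i)})\|^2.
\end{align*}
Averaging over $i=1,\dots,N$ and noting that the first term does not depend on $i$ yields the claimed identity.

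There is no substantive obstacle here: the lemma is essentially algebraic, resting on (i) the unbiased-gradient assumption to convert the stochastic inner product into a deterministic one and (ii) a single application of the polarization identity applied uniformly across the $N$ nodes. The only detail worth verifying explicitly is the measurability step that justifies $\textbf{E}_t[\mathcal{G}_t]=\mathcal{H}_t$, which follows because conditioning on $\textbf{X}_t$ fixes each $\textbf{w}_t^{(i)}$ while the $\xi_t^{(i)}$ remain independent across $i$.
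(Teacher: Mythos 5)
Your proposal is correct and follows essentially the same route as the paper's proof: unbiasedness of the stochastic gradient to replace $\mathcal{G}_t$ by $\mathcal{H}_t$ inside the conditional expectation, followed by the polarization identity $\textbf{a}^\top\textbf{b}=\frac{1}{2}(\|\textbf{a}\|^2+\|\textbf{b}\|^2-\|\textbf{a}-\textbf{b}\|^2)$ applied to each summand and averaged over $i$. No gaps.
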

\begin{proof}
  We can directly prove equation \eqref{inner-pro-bet} as follows:
  \begin{align}
  &\textbf{\textup{E}}_t[\langle\nabla F(\textbf{u}_t),\mathcal{G}_t\rangle]\notag\\
  =&\textbf{\textup{E}}_t\left[\left\langle\nabla F(\textbf{u}_t),\frac{1}{N}\sum_{i=1}^{N}g(\textbf{w}_t^{(i)})\right\rangle\right]\notag\\
  =&\frac{1}{N}\sum_{i=1}^{N}\left\langle\nabla F(\textbf{u}_t),\nabla F(\textbf{w}_t^{(i)})\right\rangle\notag\\
  \label{44}=&\frac{1}{2N}\!\sum_{i=1}^{N}\!\left[\!\|\nabla F(\textbf{u}_t)\|^2\!+\!\left\|\nabla F(\textbf{w}_t^{(i)})\right\|^2\!-\!\left\|\nabla F(\textbf{u}_t)\!-\!\nabla F(\textbf{w}_t^{(i)})\right\|^2\!\right]\\
  =&\frac{1}{2}\|\nabla F(\textbf{u}_t)\|^2+\frac{1}{2N}\sum_{i=1}^{N}\left\|\nabla F(\textbf{w}_t^{(i)})\right\|^2\notag\\
  &-\frac{1}{2N}\sum_{i=1}^{N}\left\|\nabla F(\textbf{u}_t)\!-\!\nabla F(\textbf{w}_t^{(i)})\right\|^2,
  \end{align}
  where equation \eqref{44} is because $\textbf{a}^\top \textbf{b}=\frac{1}{2}(\|\textbf{a}\|^2+\|\textbf{b}\|^2-\|\textbf{a}-\textbf{b}\|^2)$.
\end{proof}

\begin{lemma}\label{lem4}
  Based on the Assumption 1, the squared norm of stochastic gradient can be bounded as
  \begin{align}\label{42}
  \textbf{\textup{E}}_t\left[\|\mathcal{G}_t\|^2\right]\leq\left(\frac{\beta}{N}+1\right)\frac{\|\nabla F(\textbf{\textup{X}}_t)\|_{\textup{F}}^2}{N}+\frac{\sigma^2}{N}.
  \end{align}
\end{lemma}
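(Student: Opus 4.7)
The plan is to decompose $\mathcal{G}_t$ into a mean part and a fluctuation part around its conditional expectation, and then bound each piece using tools already in hand.

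First, I would observe that by the unbiasedness assumption (Condition 4 of Assumption 1), the stochastic gradients at each node satisfy $\mathbb{E}_{\xi_t^{(i)}|\textbf{X}_t}[g(\textbf{w}_t^{(i)})] = \nabla F(\textbf{w}_t^{(i)})$, so averaging over the $N$ nodes yields $\textbf{E}_t[\mathcal{G}_t] = \mathcal{H}_t$. This lets me apply the classical variance-plus-mean-squared identity
\begin{align*}
\textbf{E}_t\!\left[\|\mathcal{G}_t\|^2\right] = \textbf{E}_t\!\left[\|\mathcal{G}_t - \mathcal{H}_t\|^2\right] + \|\mathcal{H}_t\|^2,
\end{align*}
which reduces the problem to controlling the two terms on the right separately.

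Next, I would invoke Lemma \ref{lemma2} directly for the first term to obtain
\begin{align*}
\textbf{E}_t\!\left[\|\mathcal{G}_t - \mathcal{H}_t\|^2\right] \leq \frac{\beta}{N^2}\|\nabla F(\textbf{X}_t)\|^2_{\textup{F}} + \frac{\sigma^2}{N}.
\end{align*}
For the second term, I would apply Jensen's inequality (or equivalently the convexity of $\|\cdot\|^2$) to the definition of $\mathcal{H}_t$ in \eqref{HHt}, giving
\begin{align*}
\|\mathcal{H}_t\|^2 = \left\|\frac{1}{N}\sum_{i=1}^{N}\nabla F(\textbf{w}_t^{(i)})\right\|^2 \leq \frac{1}{N}\sum_{i=1}^{N}\|\nabla F(\textbf{w}_t^{(i)})\|^2 = \frac{\|\nabla F(\textbf{X}_t)\|^2_{\textup{F}}}{N},
\end{align*}
where the last equality uses the Frobenius-norm identity from Definition \ref{fronorm} applied to the global gradient matrix.

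Summing the two bounds and factoring out $\|\nabla F(\textbf{X}_t)\|^2_{\textup{F}}/N$ yields exactly the claimed inequality \eqref{42}. There is no genuine obstacle here; the only subtlety is to remember that the cross term in the expansion $\|\mathcal{G}_t\|^2 = \|\mathcal{G}_t - \mathcal{H}_t + \mathcal{H}_t\|^2$ vanishes under $\textbf{E}_t$ because $\mathcal{H}_t$ is $\textbf{X}_t$-measurable and the fluctuation $\mathcal{G}_t - \mathcal{H}_t$ has zero conditional mean.
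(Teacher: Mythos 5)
Your proposal is correct and follows essentially the same route as the paper: both use the bias--variance decomposition $\textbf{E}_t[\|\mathcal{G}_t\|^2]=\textbf{E}_t[\|\mathcal{G}_t-\mathcal{H}_t\|^2]+\|\mathcal{H}_t\|^2$, bound the variance term via Lemma \ref{lemma2}, and bound $\|\mathcal{H}_t\|^2$ by Jensen's inequality together with the Frobenius-norm identity. No gaps.
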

\begin{proof}
  According to the condition 4 of Assumption 1, we have $\textbf{\textup{E}}_t[\mathcal{G}_t]=\mathcal{H}_t$. Then we obtain
  \begin{align}
  \textbf{\textup{E}}_t\left[\|\mathcal{G}_t\|^2\right]&=\textbf{\textup{E}}_t\left[\|\mathcal{G}_t-\textbf{E}_t[\mathcal{G}_t]\|^2\right]+\|\textbf{\textup{E}}_t[\mathcal{G}_t]\|^2\\
  &=\textbf{E}_t\left[\|\mathcal{G}_t-\mathcal{H}_t\|^2\right]+\|\mathcal{H}_t\|^2\\
  \label{45}&\leq\frac{\beta}{N}\frac{\|\nabla F(\textbf{X}_t)\|^2_{\textup{F}}}{N}+\frac{\sigma^2}{N}+\|\mathcal{H}_t\|^2\\
  \label{46}&\leq\frac{\beta}{N}\frac{\|\nabla F(\textbf{X}_t)\|^2_{\textup{F}}}{N}+\frac{\sigma^2}{N}+\frac{1}{N}\|\nabla F(\textbf{X}_t)\|^2_{\textup{F}}\\
  &=\left(\frac{\beta}{N}+1\right)\frac{\|\nabla F(\textbf{X}_t)\|^2_{\textup{F}}}{N}+\frac{\sigma^2}{N},
  \end{align}
  where \eqref{45} is from Lemma \ref{lemma2}, and \eqref{46} is because the convexity of vector norm and Jensen's inequality:
  \begin{align*}
  \|\mathcal{H}_t\|^2&=\left\|\frac{1}{N}\sum_{i=1}^{N}\nabla F(\textbf{w}_t^{(i)})\right\|^2\\
  &\leq\frac{1}{N}\sum_{i=1}^{N}\left\|\nabla F(\textbf{w}_t^{(i)})\right\|^2\\
  &=\frac{1}{N}\|\nabla F(\textbf{X}_t)\|^2_{\textup{F}}.
  \end{align*}
\end{proof}
Obviously, Lemmas \ref{lemma2}, \ref{lem3}, \ref{lem4} are established under the condition $t\in[k]_1$.
However, Lemmas \ref{lemma2}, \ref{lem3}, \ref{lem4} still hold under the communication stage for $t\in [k]_2$. We argue that as follows in detail.
For the equation \eqref{38} of Lemma \ref{lemma2}, when $t\in[k]_2$, we can get the left side of the inequality $\mathbb{E}_{\Xi_t|\textbf{\textup{X}}_t}[\|\mathcal{G}_t-\mathcal{H}_t\|^2]=0$, and the right side of the inequality is greater than $0$. Thus, Lemma \ref{lemma2} is true for $t\in [k]_2$. Considering equation \eqref{inner-pro-bet} of Lemma \ref{lem3}, for $t\in [k]_2$, we get the left side of the equality $\textbf{\textup{E}}_t[\langle\nabla F(\textbf{\textup{u}}_t),\mathcal{G}_t\rangle]=0$ and the right side is $\frac{1}{2}\|\nabla F(\textbf{\textup{u}}_t)\|^2-\frac{1}{2N}\sum_{i=1}^{N}\left\|\nabla F(\textbf{\textup{u}}_t)\right\|^2=0$ due to 
there exists no gradient at inter-node communication stage. Therefore, Lemma \ref{lem3} is established for $t\in [k]_2$. Focusing on Lemma \ref{lem4}, for $t\in[k]_2$, we can get the left side of \eqref{42} $\textbf{\textup{E}}_t\left[\|\mathcal{G}_t\|^2\right] =0$, and the right side 
$\left(\frac{\beta}{N}+1\right)\frac{\|\nabla F(\textbf{\textup{X}}_t)\|_{\textup{F}}^2}{N}+\frac{\sigma^2}{N}>0$. Thus for $t\in [k]_2$, Lemma \ref{lem4} holds.

After presenting the three lemmas, we begin to prove Lemma \ref{lem1}. 

\textbf{Proof of Lemma \ref{lem1}:} According to the $L$-smooth assumption from Assumption 1, we have
\begin{align}\label{48}
\textbf{E}_t[F(\textbf{u}_{t+1})]\!-\!F(\textbf{u}_t)\leq-\eta\textbf{E}_t[\langle\nabla F(\textbf{u}_t),\mathcal{G}_t\rangle]\!+\!\frac{\eta^2L}{2}\textbf{E}_t\left[\|\mathcal{G}_t\|^2\right].
\end{align}
Note that $\textbf{u}_{t+1}=\textbf{u}_{t}$ for $t\in [k]_2$ and $\textbf{E}_t[F(\textbf{u}_{t+1})]\!-\!F(\textbf{u}_t)=0$. Because $\mathcal{G}_t=0$ for $t\in [k]_2$, the right side of \eqref{48} is $0$. Thus \eqref{48} holds for $t\in [k]_2$.
Combining with Lemma \ref{lem3} and \ref{lem4}, one can get 
\begin{align*}
&\textbf{E}_t[F(\textbf{u}_{t+1})]-F(\textbf{u}_t)\\
\leq&-\frac{\eta}{2}\|\nabla F(\textbf{u}_t)\|^2-\frac{\eta}{2N}\sum_{i=1}^{N}\left\|\nabla F(\textbf{w}_t^{(i)})\right\|^2+\\
&\frac{\eta}{2N}\sum_{i=1}^{N}\left\|\nabla F(\textbf{u}_t)-\nabla F(\textbf{w}_t^{(i)})\right\|^2+\\
&\frac{\eta^2L}{2N}\sum_{i=1}^{N}\left\|\nabla F(\textbf{w}_t^{(i)})\right\|^2\cdot\left(\frac{\beta}{N}+1\right)+\frac{\eta^2L\sigma^2}{2N}\\
\leq&-\frac{\eta}{2}\|\nabla F(\textbf{u}_t)\|^2\!\!-\!\!\frac{\eta}{2}\left[1\!\!-\!\!\eta L\left(\frac{\beta}{N}\!+\!1\right)\right]\!\!\cdot\!\!\frac{1}{N}\sum_{i=1}^{N}\left\|\nabla F(\textbf{w}_t^{(i)})\right\|^2\\
&+\frac{\eta^2L\sigma^2}{2N}+\frac{\eta L^2}{2N}\sum_{i=1}^{N}\left\|\textbf{u}_t-\textbf{w}_t^{(i)}\right\|^2.
\end{align*}
By rearranging the above inequality and according to the definition of Frobenius norm, we further get
\begin{align}
&\|\nabla F(\textbf{u}_t)\|^2\leq\frac{2[F(\textbf{u}_t)-\textbf{E}_t[F(\textbf{u}_{t+1})]]}{\eta}+\frac{\eta L\sigma^2}{N}+\notag\\
&\frac{L^2}{N}\sum_{i=1}^{N}\left\|\textbf{u}_t-\textbf{w}_t^{(i)}\right\|^2-\left[1-\eta L\left(\frac{\beta}{N}+1\right)\right]\frac{1}{N}\|\nabla F(\textbf{X}_t)\|^2_{\textup{F}}.
\end{align}
Taking the total expectation and averaging over all iteration steps, we can get 
\begin{align}\label{50}
&\mathbb{E}\left[\frac{1}{T}\sum_{t=1}^{T}\|\nabla F(\textbf{u}_t)\|^2\right]\leq\frac{2[F(\textbf{u}_1)-F_{\textup{inf}}]}{\eta T}+\frac{\eta L\sigma^2}{N}\notag\\
&+\frac{L^2}{TN}\sum_{t=1}^{T}\sum_{i=1}^{N}\mathbb{E}\left\|\textbf{u}_t-\textbf{w}_t^{(i)}\right\|^2\notag\\
&-\frac{1}{NT}\left[1-\eta L\left(\frac{\beta}{N}+1\right)\right]\sum_{t=1}^{T}\mathbb{E}\|\nabla F(\textbf{X}_t)\|^2_{\textup{F}}.
\end{align}
Recalling the definition $\textbf{u}_t=\textbf{X}_t\frac{\textbf{1}}{N}$, we can obtain 
\begin{align}\label{51}
\sum_{i=1}^{N}\left\|\textbf{u}_t-\textbf{w}_t^{(i)}\right\|^2=&\left\|\textbf{u}_t \textbf{1}^\top-\textbf{X}_t\right\|^2_{\textup{F}}\notag\\
=&\left\|\textbf{X}_t\frac{\textbf{1}\textbf{1}^\top}{N}-\textbf{X}_t\right\|^2_{\textup{F}}\notag\\
=&\|\textbf{X}_t(\textbf{I}-\textbf{J})\|^2_{\textup{F}}.
\end{align}
By substituting \eqref{51} into \eqref{50} and according to the condition $\eta L\left(1+\frac{\beta}{N}\right)\leq1$, we complete the proof of Lemma \ref{lem1}.

\subsection{Proof of Proposition 1}
We first provide three lemmas for assisting proof as follows.
\begin{lemma}\label{lem5}
  For two real matrices $\textbf{\textup{A}}\in \mathbb{R}^{d\times m}$ and $\textbf{\textup{B}}^{m\times m}$, if $\textbf{\textup{B}}$ is  symmetric, then we have 
  \begin{align}
  \|\textbf{\textup{A}}\textbf{\textup{B}}\|_{\textup{F}}\leq \|\textbf{\textup{B}}\|_{\textup{op}}\|\textbf{\textup{A}}\|_{\textup{F}}.
  \end{align} 
\end{lemma}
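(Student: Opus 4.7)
The plan is to reduce the inequality to a computation on eigenvalues by exploiting the symmetry of $\textbf{B}$ via the spectral theorem, together with orthogonal invariance of the Frobenius norm.

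First, I would rewrite the Frobenius norm as a trace, using $\|\textbf{A}\textbf{B}\|_{\textup{F}}^2 = \textup{tr}(\textbf{A}\textbf{B}\textbf{B}^\top \textbf{A}^\top) = \textup{tr}(\textbf{A}\textbf{B}^2 \textbf{A}^\top)$, where the second equality uses $\textbf{B}^\top = \textbf{B}$. Next, applying the spectral theorem to the symmetric matrix $\textbf{B}$, I would write $\textbf{B} = \textbf{U}\boldsymbol{\Lambda}\textbf{U}^\top$ with $\textbf{U}$ orthogonal and $\boldsymbol{\Lambda} = \textup{diag}(\lambda_1, \ldots, \lambda_m)$, so that $\textbf{B}^2 = \textbf{U}\boldsymbol{\Lambda}^2\textbf{U}^\top$ and $\|\textbf{B}\|_{\textup{op}} = \max_i |\lambda_i|$ by Definition 2 (since the eigenvalues of $\textbf{B}^\top \textbf{B} = \textbf{B}^2$ are $\lambda_i^2$).

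Then I would introduce the auxiliary matrix $\textbf{C} \triangleq \textbf{A}\textbf{U} \in \mathbb{R}^{d \times m}$ and observe that $\|\textbf{C}\|_{\textup{F}}^2 = \textup{tr}(\textbf{A}\textbf{U}\textbf{U}^\top \textbf{A}^\top) = \textup{tr}(\textbf{A}\textbf{A}^\top) = \|\textbf{A}\|_{\textup{F}}^2$, i.e., the Frobenius norm is invariant under right multiplication by an orthogonal matrix. Substituting, one obtains
\begin{equation*}
\|\textbf{A}\textbf{B}\|_{\textup{F}}^2 = \textup{tr}(\textbf{C}\boldsymbol{\Lambda}^2 \textbf{C}^\top) = \sum_{i=1}^{d}\sum_{j=1}^{m} \lambda_j^2\, c_{ij}^2 \leq \Bigl(\max_{j} \lambda_j^2\Bigr)\sum_{i,j} c_{ij}^2 = \|\textbf{B}\|_{\textup{op}}^2\, \|\textbf{C}\|_{\textup{F}}^2 = \|\textbf{B}\|_{\textup{op}}^2\, \|\textbf{A}\|_{\textup{F}}^2.
\end{equation*}
Taking square roots yields the claim.

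There is no real obstacle here; the result is essentially the submultiplicativity of the operator norm against the Frobenius norm, and the only non-routine choice is whether to exploit symmetry (as above) or to proceed row-by-row using $\|\textbf{a}_i^\top \textbf{B}\| \leq \|\textbf{B}^\top\|_{\textup{op}}\|\textbf{a}_i\|$. The spectral-decomposition route is preferable because it makes transparent why the bound involves the largest \emph{absolute} eigenvalue of $\textbf{B}$, which is the relevant quantity $\zeta$ appearing throughout the convergence analysis of DFL in Proposition 1.
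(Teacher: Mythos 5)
Your proof is correct, but it takes a different route from the paper's. The paper argues row by row: writing $\|\textbf{A}\textbf{B}\|_{\textup{F}}^2=\sum_{i=1}^{d}\|\textbf{a}_i^\top\textbf{B}\|^2$ with $\textbf{a}_i^\top$ the rows of $\textbf{A}$, it uses symmetry only to rewrite $\textbf{a}_i^\top\textbf{B}$ as $(\textbf{B}\textbf{a}_i)^\top$ so that Definition 3 of the operator norm applies directly, and then bounds $\|\textbf{B}\textbf{a}_i\|^2\leq\|\textbf{B}\|_{\textup{op}}^2\|\textbf{a}_i\|^2$ for each nonzero row and sums. You instead diagonalize $\textbf{B}=\textbf{U}\boldsymbol{\Lambda}\textbf{U}^\top$, exploit orthogonal invariance of the Frobenius norm, and bound the resulting weighted sum of squares by the largest $\lambda_j^2$ — precisely the row-by-row alternative you mention at the end is what the paper actually does. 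The paper's argument is more elementary (no spectral theorem needed, only the variational definition of $\|\cdot\|_{\textup{op}}$) and in fact would go through for non-symmetric $\textbf{B}$ with $\|\textbf{B}^\top\|_{\textup{op}}$ in place of $\|\textbf{B}\|_{\textup{op}}$; your version makes explicit that the bound is governed by the largest absolute eigenvalue, which dovetails with how the paper later computes $\|\textbf{C}^{j}-\textbf{J}\|_{\textup{op}}=\zeta^{j}$ in Lemma 7 by the same spectral decomposition. Either way the lemma is established.
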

\begin{proof}
  Assuming that $\textbf{a}_1^\top,\dots,\textbf{a}_d^\top$ denote the rows of matrix $\textbf{A}$ and $\mathcal{I}=\{i\in[1,d]:\|\textbf{a}_i\|\neq 0\}$. Then we obtain
  \begin{align*}
  \|\textbf{A}\textbf{B}\|^2_{\textup{F}}&=\sum_{i=1}^{d}\|\textbf{a}_i^\top \textbf{B}\|^2=\sum_{i\in\mathcal{I}}^{d}\|\textbf{B}\textbf{a}_i\|^2\\
  &=\sum_{i\in\mathcal{I}}^{d}\frac{\|\textbf{B}\textbf{a}_i\|^2}{\|\textbf{a}_i\|^2}\|\textbf{a}_i\|^2\\
  &\leq\sum_{i\in\mathcal{I}}^{d}\|\textbf{B}\|^2_{\textup{op}}\|\textbf{a}_i\|^2=\|\textbf{B}\|^2_{\textup{op}}\sum_{i\in\mathcal{I}}^{d}\|\textbf{a}_i\|^2=\|\textbf{B}\|^2_{\textup{op}}\|\textbf{A}\|^2_{\textup{F}},
  \end{align*}
  where the last inequality is due to Definition \ref{op-norm} about matrix operator norm. 
\end{proof}
\begin{lemma}\label{lem6}
  Consider two matrices $\textbf{\textup{A}}\in\mathbb{R}^{m\times n}$ and $\textbf{\textup{B}}\in \mathbb{R}^{n\times m}$. We have 
  \begin{align}
  |\textup{Tr}(\textbf{\textup{A}}\textbf{\textup{B}})|\leq\|\textbf{\textup{A}}\|_{\textup{F}}\|\textbf{\textup{B}}\|_{\textup{F}}.
  \end{align}
\end{lemma}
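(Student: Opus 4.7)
The plan is to prove the trace Cauchy--Schwarz inequality by reducing it to the ordinary Cauchy--Schwarz inequality on $mn$-dimensional vectors. First I would expand the trace entrywise: since $(\textbf{A}\textbf{B})_{ii} = \sum_{j=1}^{n} A_{ij} B_{ji}$, one has
\begin{equation*}
\mathrm{Tr}(\textbf{A}\textbf{B}) \;=\; \sum_{i=1}^{m}\sum_{j=1}^{n} A_{ij} B_{ji}.
\end{equation*}
Next I would view the two doubly-indexed families $\{A_{ij}\}$ and $\{B_{ji}\}$ as vectors in $\mathbb{R}^{mn}$ and apply the standard Cauchy--Schwarz inequality to obtain
\begin{equation*}
\Bigl|\sum_{i,j} A_{ij} B_{ji}\Bigr| \;\leq\; \Bigl(\sum_{i,j} A_{ij}^{2}\Bigr)^{1/2} \Bigl(\sum_{i,j} B_{ji}^{2}\Bigr)^{1/2}.
\end{equation*}

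To conclude, I would recognize both factors on the right as Frobenius norms via Definition~\ref{fronorm}: the first equals $\|\textbf{A}\|_{\mathrm{F}}$ directly, and the second equals $\|\textbf{B}\|_{\mathrm{F}}$ after noting that reindexing $(i,j)\mapsto(j,i)$ does not change the sum of squares, so $\sum_{i,j} B_{ji}^{2} = \sum_{j,i} B_{ji}^{2} = \|\textbf{B}\|_{\mathrm{F}}^{2}$. Combining these steps yields the desired bound $|\mathrm{Tr}(\textbf{A}\textbf{B})| \leq \|\textbf{A}\|_{\mathrm{F}} \|\textbf{B}\|_{\mathrm{F}}$.

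Alternatively, one can present the same argument more abstractly through the Frobenius inner product $\langle \textbf{X}, \textbf{Y}\rangle_{\mathrm{F}} \triangleq \mathrm{Tr}(\textbf{X}^{\top}\textbf{Y})$, observing that $\mathrm{Tr}(\textbf{A}\textbf{B}) = \langle \textbf{A}^{\top}, \textbf{B}\rangle_{\mathrm{F}}$, and then invoking Cauchy--Schwarz in this inner product space together with $\|\textbf{A}^{\top}\|_{\mathrm{F}} = \|\textbf{A}\|_{\mathrm{F}}$. There is no real obstacle here since this is a textbook fact; the only minor care needed is the index bookkeeping to confirm that the entrywise sum produced by the trace matches the Frobenius inner product of the correct pair of matrices. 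I would keep the proof short and entrywise to make it self-contained within the appendix.
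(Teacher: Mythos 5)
Your proposal is correct and follows essentially the same route as the paper's proof: the paper also writes $\textup{Tr}(\textbf{A}\textbf{B})=\sum_{i}\textbf{a}_i^\top\textbf{b}_i$ (rows of $\textbf{A}$ against columns of $\textbf{B}$) and applies Cauchy--Schwarz to the concatenated vectors, which is exactly your entrywise flattening to $\mathbb{R}^{mn}$. No gaps.
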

\begin{proof}
  Assume $\textbf{a}_{i}^\top\in \mathbb{R}^{n}$ is the $i$-th row of matrix $\textbf{A}$ and $\textbf{b}_{i}^\top\in \mathbb{R}^{n}$ is the $i$-th column of matrix $\textbf{B}$. According to the definition of matrix trace, we have
  \begin{align}
  \textup{Tr}(\textbf{A}\textbf{B})&=\sum_{i=1}^{m}\sum_{j=1}^{n}\textbf{A}_{ij}\textbf{B}_{ji}\notag\\
  \label{54}&=\sum_{i=1}^{m}\textbf{a}_{i}^\top\textbf{b}_{i}.
  \end{align}
  Then according to Cauchy-Schwartz inequality, we further obtain
  \begin{align}
  |\sum_{i=1}^{m}\textbf{a}_{i}^\top\textbf{b}_{i}|^2&\leq\left(\sum_{i=1}^{m}\|\textbf{a}_{i}\|^2\right)\left(\sum_{i=1}^{m}\|\textbf{b}_{i}\|^2\right)\notag\\
  \label{55}&=\|\textbf{\textup{A}}\|_{\textup{F}}^2\|\textbf{\textup{B}}\|_{\textup{F}}^2.
  \end{align}
  Then combining \eqref{54} and \eqref{55}, we finish the proof.
\end{proof}
\begin{lemma}\label{lem7}
  Consider a matrix $\textbf{\textup{C}}\in \mathbb{R}^{m\times m}$ which satisfies condition 6 of Assumption 1. Then we have 
  \begin{align}
  \|\textbf{\textup{C}}^j-\textbf{\textup{J}}\|_{\textup{op}}=\zeta^j,
  \end{align}
  where $\zeta=\max\{|\lambda_2(\textbf{\textup{C}})|,|\lambda_m|(\textbf{\textup{C}})\}$.
\end{lemma}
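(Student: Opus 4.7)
The plan is to exploit the spectral decomposition of $\textbf{C}$. By condition 6 of Assumption 1, $\textbf{C}$ is symmetric and doubly stochastic, so by the spectral theorem it admits an orthonormal eigenbasis: $\textbf{C} = \sum_{i=1}^{m} \lambda_i \textbf{v}_i \textbf{v}_i^{\top}$ with real eigenvalues $\lambda_1 = 1 > |\lambda_2| \ge \cdots \ge |\lambda_m|$ and orthonormal eigenvectors $\textbf{v}_i$.

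First, I would identify the top eigenvector. Since $\textbf{C}\textbf{1} = \textbf{1}$, the eigenvector associated with $\lambda_1 = 1$ is $\textbf{v}_1 = \textbf{1}/\sqrt{m}$, and therefore $\textbf{v}_1 \textbf{v}_1^{\top} = \textbf{1}\textbf{1}^{\top}/m = \textbf{J}$. Next, iterating the spectral decomposition gives $\textbf{C}^j = \sum_{i=1}^{m} \lambda_i^j \textbf{v}_i \textbf{v}_i^{\top}$, so subtracting off the $i=1$ term yields
\begin{equation*}
\textbf{C}^j - \textbf{J} \;=\; \sum_{i=2}^{m} \lambda_i^j \textbf{v}_i \textbf{v}_i^{\top}.
\end{equation*}
This is a symmetric matrix whose spectrum is exactly $\{\lambda_2^j, \ldots, \lambda_m^j\} \cup \{0\}$ (with the zero eigenvalue corresponding to the direction $\textbf{v}_1$).

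Finally, I would invoke Definition 2: for a symmetric matrix, the operator norm equals the largest absolute value of its eigenvalues. Hence
\begin{equation*}
\|\textbf{C}^j - \textbf{J}\|_{\text{op}} \;=\; \max_{i \ge 2} |\lambda_i|^j \;=\; \bigl(\max\{|\lambda_2(\textbf{C})|, |\lambda_m(\textbf{C})|\}\bigr)^j \;=\; \zeta^j,
\end{equation*}
which is the claim. There is no substantial obstacle here; the only subtlety is making sure that one is allowed to identify $\textbf{v}_1 \textbf{v}_1^{\top}$ with $\textbf{J}$, which hinges on the double stochasticity together with symmetry guaranteeing that $\textbf{1}/\sqrt{m}$ is genuinely the unit eigenvector for the simple eigenvalue $1$ (simplicity follows from the strict inequality $|\lambda_2(\textbf{C})| < 1$ in Assumption 1).
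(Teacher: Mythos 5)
Your proposal is correct and follows essentially the same route as the paper: both diagonalize $\textbf{C}$ (and hence $\textbf{J}$) in a common orthonormal eigenbasis with $\textbf{1}/\sqrt{m}$ as the top eigenvector, observe that $\textbf{C}^j-\textbf{J}$ has spectrum $\{0,\lambda_2^j,\dots,\lambda_m^j\}$, and read off the operator norm. The only cosmetic difference is that the paper passes through $\|\textbf{A}\|_{\textup{op}}=\sqrt{\lambda_{\max}(\textbf{A}^\top\textbf{A})}$ and the identity $(\textbf{C}^j-\textbf{J})^\top(\textbf{C}^j-\textbf{J})=\textbf{C}^{2j}-\textbf{J}$, whereas you invoke directly that the operator norm of a symmetric matrix is its spectral radius.
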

\newcommand{\bm}[1]{\mbox{\boldmath{$#1$}}}
\begin{proof}
  Because $\textbf{C}$ is a real symmetric matrix, it can be decomposed as $\textbf{C}=\textbf{Q}\bm{\Lambda}\textbf{Q}^\top$, where $\textbf{Q}$ is an orthogonal matrix and $\textbf{\bm{\Lambda}}=\textup{diag}\{\lambda_1(\textbf{C}),\lambda_2(\textbf{C}),\dots,\lambda_m(\textbf{C})\}$. Similarly, matrix $\textbf{J}$ can be decomposed as $\textbf{J}=\textbf{Q}\bm{\Lambda}_0\textbf{Q}^\top$ where $\bm{\Lambda}_0=\textup{diag}\{1,0,\dots,0\}$. Then we have 
  \begin{align}\label{57}
  \textbf{C}^j-\textbf{J}=(\textbf{Q}\bm{\Lambda}\textbf{Q}^\top)^j-\textbf{J}=\textbf{Q}(\bm{\Lambda}^j-\bm{\Lambda}_0)\textbf{Q}^\top.
  \end{align} 
  According to the definition of the matrix operator norm, we further obtain
  \begin{align*}
  \|\textbf{C}^j-\textbf{J}\|_{\textup{op}}&=\sqrt{\lambda_{\max}((\textbf{C}^j-\textbf{J})^\top(\textbf{C}^j-\textbf{J}))}\\
  &=\sqrt{\lambda_{\max}(\textbf{C}^{2j}-\textbf{J})},
  \end{align*}
  where the last equality is due to
  \begin{align*}
  (\textbf{C}^j-\textbf{J})^\top(\textbf{C}^j-\textbf{J})&=(\textbf{C}^j-\textbf{J})(\textbf{C}^j-\textbf{J})\\
  &=\textbf{C}^{2j}+\textbf{J}^2-\textbf{C}^j\textbf{J}-\textbf{J}\textbf{C}^j\\
  &=\textbf{C}^{2j}+\textbf{J}-2\textbf{J}=\textbf{C}^{2j}-\textbf{J}.
  \end{align*}
  Because $\textbf{C}^{2j}-\textbf{J}=\textbf{Q}(\bm{\Lambda}^{2j}-\bm{\Lambda}_0)\textbf{Q}^\top$ from \eqref{57}, the maximum eigenvalue is $\max\{0,\lambda_2(\textbf{C})^{2j},\dots,\lambda_m(\textbf{C})^{2j}\}=\zeta^{2j}$ where $\zeta$ is the second largest eigenvalue of $\textbf{C}$. Therefore, we have 
  $$\|\textbf{C}^j-\textbf{J}\|_{\textup{op}}=\sqrt{\lambda_{\max}(\textbf{C}^{2j}-\textbf{J})}=\zeta^j.$$
\end{proof}
\textbf{Proof of Proposition 1:} After introducing the three lemmas, we begin to prove Proposition 1.
We recall the result of Lemma \ref{lem1} as follows:
\begin{align}
\mathbb{E}\left[\frac{1}{T}\sum_{t=1}^{T}\|\nabla F(\textbf{\textup{u}}_t)\|^2\right]\leq \frac{2[F(\textbf{\textup{w}}_1)-F_{\textup{inf}}]}{\eta T}+\frac{\eta L \sigma^2}{N}+\notag\\
\frac{L^2}{TN}\sum_{t=1}^{T}\mathbb{E}\|\textbf{\textup{X}}_t(\textbf{\textup{I}}-\textbf{\textup{J}})\|^2_{\textup{F}}.
\end{align}
Our target is to find a upper bound of the local drift term $\frac{L^2}{TN}\sum_{t=1}^{T}\mathbb{E}\|\textbf{\textup{X}}_t(\textbf{\textup{I}}-\textbf{\textup{J}})\|^2_{\textup{F}}$. First of all, we consider the term $\mathbb{E}\|\textbf{\textup{X}}_t(\textbf{\textup{I}}-\textbf{\textup{J}})\|^2_{\textup{F}}$.

According to the learning strategy of DFL in (5), we have 
\begin{align*}
\textbf{\textup{X}}_t(\textbf{\textup{I}}-\textbf{\textup{J}})&=(\textbf{X}_{t-1}-\eta \textbf{G}^{\prime}_{t-1})\textbf{C}_{t-1}(\textbf{I}-\textbf{J})\\
&=\textbf{X}_{t-1}(\textbf{I}-\textbf{J})\textbf{C}_{t-1}-\eta\textbf{G}^{\prime}_{t-1}(\textbf{C}_{t-1}-\textbf{J}),
\end{align*}
where the last equality is due to $\textbf{C}_{t-1}\textbf{J}=\textbf{J}\textbf{C}_{t-1}=\textbf{J}$ and hence $(\textbf{I}-\textbf{J})\textbf{C}_{t-1}=\textbf{C}_{t-1}(\textbf{I}-\textbf{J})$. We further expand the expression of $\textbf{X}_{t-1}$ as follows:
\begin{align*}
\textbf{X}_{t}(\textbf{I}-\textbf{J})=&[\textbf{X}_{t-2}(\textbf{I}-\textbf{J})\textbf{C}_{t-2}-\eta\textbf{G}^{\prime}_{t-2}(\textbf{C}_{t-2}-\textbf{J})]\textbf{C}_{t-1}\\
&-\eta\textbf{G}^{\prime}_{t-1}(\textbf{C}_{t-1}-\textbf{J})\\
=&\textbf{X}_{t-2}(\textbf{I}-\textbf{J})\textbf{C}_{t-2}\textbf{C}_{t-1}-\eta\textbf{G}^{\prime}_{t-2}(\textbf{C}_{t-2}\textbf{C}_{t-1}-\textbf{J})\\&-\eta\textbf{G}^{\prime}_{t-1}(\textbf{C}_{t-1}-\textbf{J}).
\end{align*}
Repeating the same expansion for $\textbf{X}_{t-2},\textbf{X}_{t-3},\dots,\textbf{X}_{2}$, finally we obtain
\begin{align}
\textbf{X}_{t}(\textbf{I}-\textbf{J})=\textbf{X}_{1}(\textbf{I}-\textbf{J})\bm{\Phi}_{1,t-1}-\eta\sum_{s=1}^{t-1}\textbf{G}^{\prime}_s(\bm{\Phi}_{s,t-1}-\textbf{J}),
\end{align}
where $\bm{\Phi}_{s,t-1}=\prod_{l=s}^{t-1}\textbf{C}_l$. As parameters of all local models are initialized at the same point $\textbf{X}_1(\textbf{I}-\textbf{J})=0$, the squared norm of the local drift term can be rewritten as 
\begin{align}
\mathbb{E}\|\textbf{X}_{t}(\textbf{I}-\textbf{J})\|^2_{\textup{F}}=\eta^2\mathbb{E}\left\|\sum_{s=1}^{t-1}\textbf{G}^{\prime}_s(\bm{\Phi}_{s,t-1}-\textbf{J})\right\|^2_{\textup{F}}.
\end{align}
Then we list the expression of $\bm{\Phi}_{s,t-1}$ for any $s\leq t-1$. Without loss of generality, we assume $t=j\tau+i$ where $i\leq\tau_1$ according to Algorithm 1. Note that $j$ denotes the index of iteration round and $i$ denote the index of local updates. The expression of matrix $\bm{\Phi}_{s,t-1}$ is presented as follows:
\begin{align}
\bm{\Phi}_{s,t-1}=
\begin{cases}
\textbf{I},&j\tau< s\leq j\tau+i \\
\textbf{C}^{j\tau-s},&(j-1)\tau+\tau_1<s\leq j\tau\\
\textbf{C}^{\tau_2},&(j-1)\tau<s\leq(j-1)\tau+\tau_1\\
\textbf{C}^{\tau_2}\cdot\textbf{C}^{(j-1)\tau-s},&(j-2)\tau+\tau_1<s\leq(j-1)\tau\\
\textbf{C}^{2\tau_2},&(j-2)\tau<s\leq(j-2)\tau+\tau_1\\
\vdots\\
\textbf{C}^{(j-1)\tau_2}\cdot\textbf{C}^{\tau-s},&\tau_1<s\leq\tau\\
\textbf{C}^{j\tau_2},&0<s\leq\tau_1.
\end{cases}
\end{align}
For the simplification of writing, we define the sum of stochastic gradient within one round as $\textbf{Y}_r=\sum_{s=r\tau+1}^{r\tau+\tau_1}\textbf{G}^{\prime}_s=\sum_{s=r\tau+1}^{r\tau+\tau_1}\textbf{G}_s$ for $0\leq r<j$ and $\textbf{Y}_j=\sum_{s=j\tau+1}^{j\tau+i-1}\textbf{G}_s$. Similarly, we define accumulated global gradient $\textbf{Q}_r=\sum_{s=r\tau+1}^{r\tau+\tau_1}\nabla F(\textbf{X}_s)$ for $0\leq r<j$ and $\textbf{Q}_j=\sum_{s=j\tau+1}^{j\tau+i-1}\nabla F(\textbf{X}_s)$. Thus, we have 
\begin{align*}
&\sum_{s=1}^{\tau}\textbf{G}^{\prime}_s(\bm{\Phi}_{s,t-1}-\textbf{J})=\sum_{s=1}^{\tau_1}\textbf{G}_s(\textbf{C}^{j\tau_2}-\textbf{J})=\textbf{Y}_0(\textbf{C}^{j\tau_2}-\textbf{J})\\
&\sum_{s=\tau+1}^{2\tau}\textbf{G}^{\prime}_s(\bm{\Phi}_{s,t-1}-\textbf{J})=\textbf{Y}_1(\textbf{C}^{(j-1)\tau_2}-\textbf{J})\\
&\vdots\\
&\sum_{s=j\tau+1}^{j\tau+i-1}\textbf{G}^{\prime}_s(\bm{\Phi}_{s,t-1}-\textbf{J})=\textbf{Y}_j(\textbf{I}-\textbf{J}).
\end{align*}
Hence, we sum all these terms and obtain
\begin{align}
\sum_{s=1}^{t-1}\textbf{G}^{\prime}_s(\bm{\Phi}_{s,t-1}-\textbf{J})=\sum_{r=0}^{j}\textbf{Y}_r(\textbf{C}^{(j-r)\tau_2}-\textbf{J}).
\end{align}
We further decompose the local drift term into two parts as follows:
\begin{align}
&\mathbb{E}\|\textbf{X}_t(\textbf{I}-\textbf{J})\|^2_{\textup{F}}\notag\\
=&\eta^2\mathbb{E}\left\|\sum_{r=0}^{j}\textbf{Y}_r(\textbf{C}^{(j-r)\tau_2}-\textbf{J})\right\|^2_{\textup{F}}\notag\\
=&\eta^2\mathbb{E}\left\|\sum_{r=0}^{j}(\textbf{Y}_r\!-\!\textbf{Q}_r)(\textbf{C}^{(j-r)\tau_2}\!-\!\textbf{J})\!+\!\sum_{r=0}^{j}\textbf{Q}_r(\textbf{C}^{(j-r)\tau_2}\!-\!\textbf{J})\right\|^2_{\textup{F}}\notag\\
\leq&\underbrace{2\eta^2\mathbb{E}\left\|\sum_{r=0}^{j}(\textbf{Y}_r\!-\!\textbf{Q}_r)(\textbf{C}^{(j-r)\tau_2}\!-\!\textbf{J})\right\|^2_{\textup{F}}}_{A_1}\notag\\
\label{63}&+\underbrace{2\eta^2\mathbb{E}\left\|\sum_{r=0}^{j}\textbf{Q}_r(\textbf{C}^{(j-r)\tau_2}\!-\!\textbf{J})\right\|^2_{\textup{F}}}_{A_2},
\end{align}
where \eqref{63} is due to $\|\textbf{a}+\textbf{b}\|^2\leq2\|\textbf{a}\|^2+2\|\textbf{b}\|^2$. Based on \eqref{63}, we aim to separately derive the upper bounds for $A_1$ and $A_2$ before completing the proof of Proposition 1. 

\textbf{The Bound of $A_1$:} For the term $A_1$ in \eqref{63}, we have 
\begin{align}
&A_1\notag\\
=&2\eta^2\mathbb{E}\left\|\sum_{r=0}^{j}(\textbf{Y}_r\!-\!\textbf{Q}_r)(\textbf{C}^{(j-r)\tau_2}\!-\!\textbf{J})\right\|^2_{\textup{F}}\notag\\
=&2\eta^2\sum_{r=0}^{j}\mathbb{E}\|(\textbf{Y}_r\!-\!\textbf{Q}_r)(\textbf{C}^{(j-r)\tau_2}\!-\!\textbf{J})\|^2_{\textup{F}}\notag\\
\!+\!&2\eta^2\!\!\sum_{n=0}^{j}\!\sum_{l=0\!,\!l\neq n}^{j}\!\!\!\!\!\mathbb{E}\![\textup{Tr}\!(\!(\!\textbf{C}^{(j-n)\tau_2}\!-\!\textbf{J})(\textbf{Y}_n\!\!\!-\!\!\!\textbf{Q}_n)\!\!^\top\!\!(\textbf{Y}_l\!\!-\!\!\textbf{Q}_l)(\textbf{C}^{(j\!-\!l)\tau_2}\!\!-\!\!\textbf{J}\!)\!)\!]\notag\\
\label{64}=&2\eta^2\sum_{r=0}^{j}\mathbb{E}\|(\textbf{Y}_r\!-\!\textbf{Q}_r)(\textbf{C}^{(j-r)\tau_2}\!-\!\textbf{J})\|^2_{\textup{F}}\\
\label{65}\leq&2\eta^2\sum_{r=0}^{j}\mathbb{E}\|\textbf{Y}_r\!-\!\textbf{Q}_r\|^2_{\textup{F}}\|\textbf{C}^{(j-r)\tau_2}\!-\!\textbf{J}\|^2_{\textup{op}}\\
\label{66}=&2\eta^2\sum_{r=0}^{j}\mathbb{E}\|\textbf{Y}_r\!-\!\textbf{Q}_r\|^2_{\textup{F}}\zeta^{2\tau_2(j-r)}\\
\label{67}=&2\eta^2\sum_{r=0}^{j-1}\mathbb{E}\|\textbf{Y}_r\!-\!\textbf{Q}_r\|^2_{\textup{F}}\zeta^{2\tau_2(j-r)}+2\eta^2\mathbb{E}\|\textbf{Y}_j\!-\!\textbf{Q}_j\|^2_{\textup{F}},
\end{align}
where \eqref{65} comes from Lemma \ref{lem6} and \eqref{66} is due to Lemma \ref{lem7}. 
Consider the inner product of the cross terms $g(\textbf{w}_s^{(i)})-\nabla F(\textbf{w}_s^{(i)})$ and $g(\textbf{w}_l^{(j)})-\nabla F(\textbf{w}_l^{(j)})$ for any $s\neq l$ and any $i,j$, we can obtain
\begin{align}
&\mathbb{E}\left[\left\langle g(\textbf{w}_s^{(i)})-\nabla F(\textbf{w}_s^{(i)}),g(\textbf{w}_l^{(j)})-\nabla F(\textbf{w}_l^{(j)})\right\rangle\right]\notag\\
=&\mathbb{E}_{\textbf{w}_s^{(i)}\!,\!\xi_s^{(i)}\!,\!\textbf{w}_l^{(j)}}\mathbb{E}_{\xi_l^{(j)}\!|\!\textbf{w}_s^{(i)}\!,\!\xi_s^{(i)}\!,\!\textbf{w}_l^{(j)}}\!\!\left[\!\!\left\langle\!g(\textbf{w}_s^{(i)})\!\!-\!\!\nabla F(\textbf{w}_s^{(i)})\!,\!g(\textbf{w}_l^{(j)})\!\!-\!\!\nabla F(\textbf{w}_l^{(j)})\!\right\rangle\!\!\right]\notag\\
=&\mathbb{E}\left[\left\langle g(\textbf{w}_s^{(i)})\!\!-\!\!\nabla F(\textbf{w}_s^{(i)}),\mathbb{E}_{\xi_l^{(j)}\!|\!\textbf{w}_s^{(i)}\!,\!\xi_s^{(i)}\!,\!\textbf{w}_l^{(j)}}\left[g(\textbf{w}_l^{(j)})\!\!-\!\!\nabla F(\textbf{w}_l^{(j)})\right]\right\rangle\right]\notag\\
\label{68}=&\mathbb{E}\left[\left\langle g(\textbf{w}_s^{(i)})\!\!-\!\!\nabla F(\textbf{w}_s^{(i)}),0\right\rangle\right]\\
\label{69}=&0,
\end{align}
where \eqref{68} comes from condition 4 of Assumption 1. Therefore, according to \eqref{69}, we can easily obtain  $\mathbb{E}[(\textbf{Y}_n-\textbf{Q}_n)^\top(\textbf{Y}_l-\textbf{Q}_l)]=0$. Thus \eqref{64} holds. 

Following the result of \eqref{67}, for and $0\leq r<j$, we have 
\begin{align}
&\mathbb{E}\left[\|\textbf{Y}_r-\textbf{Q}_r\|^2_{\textup{F}}\right]\notag\\
=&\mathbb{E}\left[\left\|\sum_{s=r\tau+1}^{r\tau+\tau_1}[\textbf{G}_s-\nabla F(\textbf{X}_s)]\right\|^2_{\textup{F}}\right]\\
=&\sum_{i=1}^{N}\mathbb{E}\left[\left\|\sum_{s=r\tau+1}^{r\tau+\tau_1}[g(\textbf{w}_s^{(i)})-\nabla F(\textbf{w}_s^{(i)})]\right\|^2\right]\\
=&\sum_{i=1}^{N}\mathbb{E}\left[\sum_{s=r\tau+1}^{r\tau+\tau_1}\left\|g(\textbf{w}_s^{(i)})-\nabla F(\textbf{w}_s^{(i)})\right\|^2+\right.\notag
\\
\phantom{=\;\;}
&\left.\sum_{s\neq l}\left\langle g(\textbf{w}_s^{(i)})-\nabla F(\textbf{w}_s^{(i)}),g(\textbf{w}_l^{(i)})-\nabla F(\textbf{w}_l^{(i)})\right\rangle\right]\\
\label{73}=&\sum_{i=1}^{N}\mathbb{E}\left[\sum_{s=r\tau+1}^{r\tau+\tau_1}\left\|g(\textbf{w}_s^{(i)})-\nabla F(\textbf{w}_s^{(i)})\right\|^2\right]\\
=&\mathbb{E}\left[\sum_{s=r\tau+1}^{r\tau+\tau_1}\sum_{i=1}^{N}\left\|g(\textbf{w}_s^{(i)})-\nabla F(\textbf{w}_s^{(i)})\right\|^2\right]\\
\label{75}\leq&\beta\sum_{s=r\tau+1}^{r\tau+\tau_1}\sum_{i=1}^{N}\mathbb{E}\left[\left\|\nabla F(\textbf{w}_s^{(i)}) \right\|^2\right]+\tau_1 N\sigma^2\\
\label{76}=&\beta\sum_{s=r\tau+1}^{r\tau+\tau_1}\|\nabla F(\textbf{X}_s)\|^2_{\textup{F}}+\tau_1 N\sigma^2,
\end{align}
where \eqref{73} comes from \eqref{69} and \eqref{75} is according to condition 5 of Assumption 1. Similarly,  we can obtain 
\begin{align}
\label{77}\mathbb{E}\left[\|\textbf{Y}_j-\textbf{Q}_j\|^2_{\textup{F}}\right]\leq \beta\sum_{s=j\tau+1}^{j\tau+i-1}\|\nabla F(\textbf{X}_s)\|^2_{\textup{F}}+(i-1)N\sigma^2.
\end{align}
Substituting \eqref{76} and \eqref{77} into \eqref{67}, we obtain 
\begin{align}
A_1&\leq 2\eta^2 \sum_{r=0}^{j-1}\left[\zeta^{2\tau_2(j-r)}\left(\beta\sum_{s=r\tau+1}^{r\tau+\tau_1}\|\nabla F(\textbf{X}_s)\|^2_{\textup{F}}+\tau_1 N\sigma^2\right)\right]\notag\\
&+2\eta^2\beta \sum_{s=j\tau+1}^{j\tau+i-1}\|\nabla F(\textbf{X}_s)\|^2_{\textup{F}}+2\eta^2(i-1)N\sigma^2\\
\label{79}&\leq 2\eta^2 N\sigma^2\left[\frac{\tau_1\zeta^{2\tau_2}}{1-\zeta^{2\tau_2}}+i-1\right]+\notag\\
&2\eta^2\beta\sum_{r=0}^{j-1}\left[\zeta^{2\tau_2(j-r)}\left(\sum_{s=r\tau+1}^{r\tau+\tau_1}\|\nabla F(\textbf{X}_s)\|^2_{\textup{F}}\right)\right]+\notag\\
&2\eta^2 \beta\sum_{s=j\tau+1}^{j\tau+i-1}\|\nabla F(\textbf{X}_s)\|^2_{\textup{F}}\\
&\leq 2\eta^2 N\sigma^2\left[\frac{\tau_1\zeta^{2\tau_2}}{1-\zeta^{2\tau_2}}+\tau_1-1\right]+\notag\\
&2\eta^2\beta\sum_{r=0}^{j-1}\left[\zeta^{2\tau_2(j-r)}\left(\sum_{s=r\tau+1}^{r\tau+\tau_1}\|\nabla F(\textbf{X}_s)\|^2_{\textup{F}}\right)\right]+\notag\\
\label{80}&2\eta^2 \beta\sum_{s=j\tau+1}^{j\tau+\tau_1}\|\nabla F(\textbf{X}_s)\|^2_{\textup{F}},
\end{align}
where \eqref{79} is because 
\begin{align*}
\sum_{r=0}^{j-1}\zeta^{2\tau_2(j-r)}=\frac{\zeta^{2\tau_2}(1-(\zeta^{2\tau_2})^j)}{1-\zeta^{2\tau_2}}\leq\frac{\zeta^{2\tau_2}}{1-\zeta^{2\tau_2}},
\end{align*} 
and \eqref{80} comes from $i\leq\tau_1$. 
Next, we sum over all iteration steps in one round from $i=1$ to $i=\tau$ and obtain
\begin{align}
\sum_{i=1}^{\tau}A_1&\leq 2\eta^2 N\sigma^2\tau\left[\frac{\tau_1\zeta^{2\tau_2}}{1-\zeta^{2\tau_2}}+\tau_1-1\right]\notag\\
&+2\eta^2\beta\tau\sum_{r=0}^{j-1}\left[\zeta^{2\tau_2(j-r)}\left(\sum_{s=r\tau+1}^{r\tau+\tau_1}\|\nabla F(\textbf{X}_s)\|^2_{\textup{F}}\right)\right]\notag\\
&+2\eta^2\beta\tau\sum_{s=j\tau+1}^{j\tau+\tau_1}\|\nabla F(\textbf{X}_s)\|^2_{\textup{F}}\\
&=2\eta^2 N\sigma^2\tau\left[\frac{\tau_1\zeta^{2\tau_2}}{1-\zeta^{2\tau_2}}+\tau_1-1\right]\notag\\
&+2\eta^2\beta\tau\sum_{r=0}^{j}\left[\zeta^{2\tau_2(j-r)}\left(\sum_{s=r\tau+1}^{r\tau+\tau_1}\|\nabla F(\textbf{X}_s)\|^2_{\textup{F}}\right)\right].
\end{align}
Then we sum over all rounds from $j=0$ to $j=\frac{T}{\tau}-1$, where $T$ is the total iteration steps and we get
\begin{align}
&\sum_{j=0}^{\frac{T}{\tau}-1}\sum_{i=1}^{\tau}A_1\notag\\
&\leq 2\eta^2 N\sigma^2 T\left[\frac{\tau_1\zeta^{2\tau_2}}{1-\zeta^{2\tau_2}}+\tau_1-1\right]+\notag\\
&2\eta^2\beta\tau\sum_{j=0}^{\frac{T}{\tau}-1}\sum_{r=0}^{j}\left[\zeta^{2\tau_2(j-r)}\left(\sum_{s=r\tau+1}^{r\tau+\tau_1}\|\nabla F(\textbf{X}_s)\|^2_{\textup{F}}\right)\right]\notag\\
&=2\eta^2 N\sigma^2 T\left[\frac{\tau_1\zeta^{2\tau_2}}{1-\zeta^{2\tau_2}}+\tau_1-1\right]+\notag\\
&2\eta^2\beta\tau\sum_{r=0}^{\frac{T}{\tau}-1}\left[\left(\sum_{s=r\tau+1}^{r\tau+\tau_1}\|\nabla F(\textbf{X}_s)\|^2_{\textup{F}}\right)\left(\sum_{j=r}^{\frac{T}{\tau}-1}\zeta^{2\tau_2(j-r)}\right)\right]\notag\\
&\leq 2\eta^2 N\sigma^2 T\left[\frac{\tau_1\zeta^{2\tau_2}}{1-\zeta^{2\tau_2}}+\tau_1-1\right]+\notag\\
&\frac{2\eta^2\beta\tau}{1-\zeta^{2\tau_2}}\sum_{r=0}^{\frac{T}{\tau}-1}\sum_{s=r\tau+1}^{r\tau+\tau_1}\|\nabla F(\textbf{X}_s)\|^2_{\textup{F}}\notag\\
&=2\eta^2 N\sigma^2 T\left(\frac{\tau_1}{1-\zeta^{2\tau_2}}-1 \right)+\notag\\
\label{83}&\frac{2\eta^2\beta\tau}{1-\zeta^{2\tau_2}}\sum_{r=0}^{\frac{T}{\tau}-1}\sum_{s=r\tau+1}^{r\tau+\tau_1}\|\nabla F(\textbf{X}_s)\|^2_{\textup{F}}.
\end{align}
Thus we finish the first part for deriving the bound of $A_1$. 

\textbf{The bound of $A_2$:}
From the second term in \eqref{63}, we have 
\begin{align}
A_2&=2\eta^2 \mathbb{E}\left\|\sum_{r=0}^{j}\textbf{Q}_r(\textbf{C}^{(j-r)\tau_2}-\textbf{J})\right\|^2_{\textup{F}}\notag\\
&=2\eta^2 \sum_{r=0}^{j}\mathbb{E}\|\textbf{Q}_r(\textbf{C}^{(j-r)\tau_2}-\textbf{J})\|^2_{\textup{F}}+\notag\\
\label{84}&2\eta^2 \sum_{n=0}^{j}\sum_{l=0,l\neq n}^{j}\mathbb{E}[\textup{Tr}((\textbf{C}^{\tau_2(j-n)}-\textbf{J})\textbf{Q}_n^\top \textbf{Q}_l(\textbf{C}^{\tau_2(j-l)}-\textbf{J}))].
\end{align}
Considering the second term in equality \eqref{84}, we can find the bound of the trace as follows:
\begin{align}
&|\textup{Tr}((\textbf{C}^{\tau_2(j-n)}-\textbf{J})\textbf{Q}_n^\top \textbf{Q}_l(\textbf{C}^{\tau_2(j-l)}-\textbf{J}))|\notag\\
\label{85}\leq&\|(\textbf{C}^{\tau_2(j-n)}-\textbf{J})\textbf{Q}_n^\top\|_{\textup{F}}\|\textbf{Q}_l(\textbf{C}^{\tau_2(j-l)}-\textbf{J})\|_{\textup{F}}\\
\label{86}\leq&\|\textbf{C}^{\tau_2(j-n)}-\textbf{J}\|_{\textup{op}}\|\textbf{Q}_n\|_{\textup{F}}\|\textbf{Q}_l\|_{\textup{F}}\|\textbf{C}^{\tau_2(j-l)}-\textbf{J}\|_{\textup{op}}\\
\label{87}\leq&\frac{1}{2}\zeta^{\tau_2(2j-n-l)}\left[\|\textbf{Q}_n\|_{\textup{F}}^2+\|\textbf{Q}_l\|_{\textup{F}}^2\right],
\end{align}
where \eqref{85} is because of Lemma \ref{lem6}, \eqref{86} comes from Lemma \ref{lem5} and \eqref{87} is due to Lemma \ref{lem7} and $2ab\leq a^2+b^2$. Then we have 
\begin{align}
&A_2\leq 2\eta^2 \sum_{r=0}^{j}\mathbb{E}\|\textbf{Q}_r\|_{\textup{F}}^2\|\textbf{C}^{\tau_2(j-r)}-\textbf{J}\|^2_{\textup{op}}+\notag\\
&\eta^2 \sum_{n=0}^{j}\sum_{l=0,l\neq n}^{j}\zeta^{\tau_2(2j-n-l)}\mathbb{E}\left[\|\textbf{Q}_n\|_{\textup{F}}^2+\|\textbf{Q}_l\|_{\textup{F}}^2\right]\\
\label{89}=&\!2 \eta^2\!\! \sum_{r=0}^{j}\!\!\zeta^{2\tau_2 (j-r)}\mathbb{E}\|\textbf{Q}_r\|^2_{\textup{F}}\!\!+\!\!2\eta^2\sum_{n=0}^{j}\sum_{l=0,l\neq n}^{j}\!\!\!\!\!\!\zeta^{\tau_2(2j-n-l)}\mathbb{E}\|\textbf{Q}_n\|^2_{\textup{F}}\\
=&\!2 \eta^2\!\! \sum_{r=0}^{j}\!\!\zeta^{2\tau_2 (j-r)}\mathbb{E}\|\textbf{Q}_r\|^2_{\textup{F}}\!+\!2\eta^2\!\!\sum_{n=0}^{j}\zeta^{\tau_2(j-n)}\mathbb{E}\|\textbf{Q}_n\|^2_{\textup{F}}\!\!\!\!\!\sum_{l=0,l\neq n}^{j}\!\!\!\!\!\!\zeta^{\tau_2(j-l)}\\
=&2\eta^2\!\!\!\left[\sum_{r=0}^{j-1}\zeta^{2\tau_2 (j-r)}\mathbb{E}\|\textbf{Q}_r\|^2_{\textup{F}}\!\!+\!\!\sum_{n=0}^{j-1}\zeta^{\tau_2(j-n)}\mathbb{E}\|\textbf{Q}_n\|^2_{\textup{F}}\!\!\!\!\!\sum_{l=0,l\neq n}^{j}\!\!\!\!\!\!\zeta^{\tau_2(j-l)}\right.\notag
\\
\phantom{=\;\;}
&\left.+\mathbb{E}\|\textbf{Q}_j\|^2_{\textup{F}}+\mathbb{E}\|\textbf{Q}_j\|^2_{\textup{F}}\sum_{l=0}^{j-1}\zeta^{\tau_2(j-l)}\right]\\
\label{92}\leq& 2\eta^2\left[\sum_{r=0}^{j-1}\zeta^{2\tau_2 (j-r)}\mathbb{E}\|\textbf{Q}_r\|^2_{\textup{F}}+\sum_{n=0}^{j-1}\frac{\zeta^{\tau_2(j-n)}}{1-\zeta^{\tau_2}}\mathbb{E}\|\textbf{Q}_n\|^2_{\textup{F}}\right.\notag
\\
\phantom{=\;\;}
&\left.+\mathbb{E}\|\textbf{Q}_j\|^2_{\textup{F}}+\mathbb{E}\|\textbf{Q}_j\|^2_{\textup{F}}\frac{\zeta^{\tau_2}}{1-\zeta^{\tau_2}}\right],
\end{align}
where \eqref{89} is because indices $n$ and $l$ are symmetric and \eqref{92} is due to 
\begin{align*}
&\sum_{l=0,l\neq n}^{j}\zeta^{\tau_2(j-l)}\leq \sum_{l=0}^{j}\zeta^{\tau_2 l}=\frac{1-\zeta^{\tau_2(j+1)}}{1-\zeta^{\tau_2}}<\frac{1}{1-\zeta^{\tau_2}}\\
&\sum_{l=0}^{j-1}\zeta^{\tau_2(j-l)}=\sum_{l=1}^{j}\zeta^{\tau_2 l}=\frac{\zeta^{\tau_2}(1-\zeta^{\tau_2 j})}{1-\zeta^{\tau_2}}<\frac{\zeta^{\tau_2}}{1-\zeta^{\tau_2}}.
\end{align*}
By rearranging \eqref{92}, we obtain 
\begin{align}
A_2&\leq 2\eta^2\sum_{r=0}^{j-1}\left[\left(\zeta^{2\tau_2 (j-r)}+\frac{\zeta^{\tau_2(j-r)}}{1-\zeta^{\tau_2}}\right) \mathbb{E}\|\textbf{Q}_r\|^2_{\textup{F}}\right]+\notag\\
&\frac{2\eta^2}{1-\zeta^{\tau_2}}\mathbb{E}\|\textbf{Q}_j\|^2_{\textup{F}}\\
&=2\eta^2\!\!\sum_{r=0}^{j-1}\left[\left(\zeta^{2\tau_2 (j-r)}\!\!+\!\!\frac{\zeta^{\tau_2(j-r)}}{1-\zeta^{\tau_2}}\right)\!\!\mathbb{E}\left\| \sum_{s=1}^{\tau_1}\nabla F(\textbf{X}_{r\tau+s})\right\|^2_{\textup{F}}\right]\!\!\notag\\
&+\frac{2\eta^2}{1-\zeta^{\tau_2}}\mathbb{E}\left\|\sum_{s=1}^{i-1}\nabla F(\textbf{X}_{j\tau+s})\right\|^2_{\textup{F}}\\
&\label{95}\leq 2\eta^2\tau_1\sum_{r=0}^{j-1}\left[\left(\zeta^{2\tau_2 (j-r)}\!\!+\!\!\frac{\zeta^{\tau_2(j-r)}}{1-\zeta^{\tau_2}}\right)\!\!\sum_{s=1}^{\tau_1}\mathbb{E}\|\nabla F(\textbf{X}_{r\tau+s})\|^2_{\textup{F}}\right]\notag\\
&+\frac{2\eta^2(i-1)}{1-\zeta^{\tau_2}}\sum_{s=1}^{i-1}\mathbb{E}\|\nabla F(\textbf{X}_{j\tau+s})\|^2_{\textup{F}},
\end{align}
where \eqref{95} is because of the convexity of Frobenius norm and Jensen's inequality. Then by summing over all iteration steps in $j$-th round from $i=1$ to $i=\tau$, we have 
\begin{align}
&\sum_{i=1}^{\tau}A_2\notag\\
&\leq 2\eta^2\tau_1\tau\sum_{r=0}^{j-1}\left[\left(\zeta^{2\tau_2 (j-r)}\!\!+\!\!\frac{\zeta^{\tau_2(j-r)}}{1-\zeta^{\tau_2}}\right)\sum_{s=1}^{\tau_1}\mathbb{E}\|\nabla F(\textbf{X}_{r\tau+s})\|^2_{\textup{F}}\right]\notag\\
&+\frac{\eta^2\tau(\tau-1)}{1-\zeta^{\tau_2}}\sum_{s=1}^{\tau_1-1}\mathbb{E}\|\nabla F(\textbf{X}_{j\tau+s})\|^2_{\textup{F}}.
\end{align}
Then, summing over all rounds from $j=0$ to $j=\frac{T}{\tau}-1$, where $T$ is the total iteration steps, we can obtain
\begin{align}
&\sum_{j=0}^{\frac{T}{\tau}-1}\sum_{i=1}^{\tau}A_2\notag\\
\leq&2\eta^2\tau_1\tau \sum_{j=0}^{\frac{T}{\tau}-1}\sum_{r=0}^{j-1}\left[\left(\zeta^{2\tau_2 (j-r)}\!\!+\!\!\frac{\zeta^{\tau_2(j-r)}}{1-\zeta^{\tau_2}}\right)\sum_{s=1}^{\tau_1}\mathbb{E}\|\nabla F(\textbf{X}_{r\tau+s})\|^2_{\textup{F}}\right]\notag\\
&+\frac{\eta^2\tau(\tau-1)}{1-\zeta^{\tau_2}}\sum_{j=0}^{\frac{T}{\tau}-1}\sum_{s=1}^{\tau_1-1}\mathbb{E}\|\nabla F(\textbf{X}_{j\tau+s})\|^2_{\textup{F}}\\
=&2\eta^2\tau_1\tau\!\!\sum_{r=0}^{\frac{T}{\tau}-2}\!\!\left[\!\!\left(\sum_{s=1}^{\tau_1}\mathbb{E}\|\nabla F(\textbf{X}_{r\tau+s})\|^2_{\textup{F}}\!\!\right)\!\!\!\!\sum_{j=1}^{\frac{T}{\tau}-1-r}\!\!\!\!\left(\zeta^{2\tau_2 j}\!\!+\!\!\frac{\zeta^{\tau_2j}}{1-\zeta^{\tau_2}}\right)\!\!\right]\notag\\
&+\frac{\eta^2\tau(\tau-1)}{1-\zeta^{\tau_2}}\sum_{j=0}^{\frac{T}{\tau}-1}\sum_{s=1}^{\tau_1-1}\mathbb{E}\|\nabla F(\textbf{X}_{j\tau+s})\|^2_{\textup{F}}\\
\leq&2\eta^2\tau_1\tau\left(\frac{\zeta^{2\tau_2}}{1-\zeta^{2\tau_2}}+\frac{\zeta^{\tau_2}}{(1-\zeta^{\tau_2})^2}\right)\sum_{r=0}^{\frac{T}{\tau}-2}
\sum_{s=1}^{\tau_1}\mathbb{E}\|\nabla F(\textbf{X}_{r\tau+s})\|^2_{\textup{F}}\notag\\
&+\frac{\eta^2\tau(\tau-1)}{1-\zeta^{\tau_2}}\sum_{j=0}^{\frac{T}{\tau}-1}\sum_{s=1}^{\tau_1-1}\mathbb{E}\|\nabla F(\textbf{X}_{j\tau+s})\|^2_{\textup{F}}\\
\leq&\frac{\eta^2\tau}{1-\zeta^{\tau_2}}\!\!\left(\frac{2\tau_1\zeta^{2\tau_2}}{1+\zeta^{\tau_2}}\!\!+\!\!\frac{2\tau_1\zeta^{\tau_2}}{1-\zeta^{\tau_2}}\!\!+\!\tau\!-\!1\!
\right)\!\!\sum_{r=0}^{\frac{T}{\tau}-1}\!\sum_{s=1}^{\tau_1}\mathbb{E}\|\nabla F(\textbf{X}_{r\tau+s})\|^2_{\textup{F}}\\
\label{101}=&\frac{\eta^2\tau}{1-\zeta^{\tau_2}}\!\!\left(\frac{2\tau_1\zeta^{2\tau_2}}{1+\zeta^{\tau_2}}\!\!+\!\!\frac{2\tau_1\zeta^{\tau_2}}{1-\zeta^{\tau_2}}\!\!+\!\tau\!-\!1\!
\right)\!\!\sum_{r=0}^{\frac{T}{\tau}-1}\!\sum_{s=r\tau+1}^{r\tau+\tau_1}\!\!\mathbb{E}\|\nabla F(\textbf{X}_{s})\|^2_{\textup{F}}.
\end{align}
Therefore, we finish the second part for deriving the bound of $A_2$.

\textbf{The final proof of Proposition 1:} Based on inequalities \eqref{63}, \eqref{83} and \eqref{101}, local drift has an upper bound as follows:
\begin{align}
&\frac{L^2}{TN}\sum_{t=1}^{T}\mathbb{E}\|\textbf{\textup{X}}_t(\textbf{\textup{I}}-\textbf{\textup{J}})\|^2_{\textup{F}}\notag\\
\leq&\frac{L^2}{TN}\sum_{j=0}^{\frac{T}{\tau}-1}\sum_{i=1}^{\tau}(A_1+A_2)\\
\label{103}\leq& 2 L^2\eta^2\sigma^2\!\!\left(\frac{\tau_1}{1-\zeta^{2\tau_2}}\!\!-\!\!1\right)\!\!+\!\!\frac{L^2}{TN}\frac{2\eta^2\beta\tau}{1-\zeta^{2\tau_2}}\!\!\sum_{r=0}^{\frac{T}{\tau}-1}\!\!\sum_{s=r\tau+1}^{r\tau+\tau_1}\!\!\!\!\|\nabla F(\textbf{X}_s)\|^2_{\textup{F}}\notag\\
\!+\!&\frac{L^2}{TN}\!\frac{\eta^2\tau}{1-\zeta^{\tau_2}}\!\!\left(\frac{2\tau_1\zeta^{2\tau_2}}{1+\zeta^{\tau_2}}\!\!+\!\!\frac{2\tau_1\zeta^{\tau_2}}{1-\zeta^{\tau_2}}\!\!+\!\tau\!-\!1\!
\right)\!\!\sum_{r=0}^{\frac{T}{\tau}-1}\!\sum_{s=r\tau+1}^{r\tau+\tau_1}\!\!\mathbb{E}\|\!\nabla \!F\!(\textbf{X}_{s})\!\|^2_{\textup{F}}.
\end{align} 
Substituting \eqref{103} into \eqref{50}, we have
\begin{align}
&\mathbb{E}\left[\frac{1}{T}\sum_{t=1}^{T}\|\nabla F(\textbf{u}_t)\|^2\right]\leq\frac{2[F(\textbf{u}_1)-F_{\textup{inf}}]}{\eta T}+\frac{\eta L\sigma^2}{N}\notag\\
+&2 L^2\eta^2\sigma^2\!\!\left(\frac{\tau_1}{1-\zeta^{2\tau_2}}\!\!-\!\!1\right)\!\!+\!\!\frac{L^2}{TN}\frac{2\eta^2\beta\tau}{1-\zeta^{2\tau_2}}\!\!\sum_{r=0}^{\frac{T}{\tau}-1}\!\!\sum_{s=r\tau+1}^{r\tau+\tau_1}\!\!\!\!\|\nabla F(\textbf{X}_s)\|^2_{\textup{F}}\notag\\
\!+\!&\frac{L^2}{TN}\!\frac{\eta^2\tau}{1-\zeta^{\tau_2}}\!\!\left(\frac{2\tau_1\zeta^{2\tau_2}}{1+\zeta^{\tau_2}}\!\!+\!\!\frac{2\tau_1\zeta^{\tau_2}}{1-\zeta^{\tau_2}}\!\!+\!\tau\!-\!1\!
\right)\!\!\sum_{r=0}^{\frac{T}{\tau}-1}\!\sum_{s=r\tau+1}^{r\tau+\tau_1}\!\!\mathbb{E}\|\!\nabla \!F\!(\textbf{X}_{s})\!\|^2_{\textup{F}}\notag\\
-&\frac{1}{NT}\left[1-\eta L\left(\frac{\beta}{N}+1\right)\right]\sum_{t=1}^{T}\mathbb{E}\|\nabla F(\textbf{X}_t)\|^2_{\textup{F}}.
\end{align}
Because there exists no gradient during inter-node communication period, i.e., $t\in[k]_2$, we have $\sum_{t=1}^{T}\mathbb{E}\|\nabla F(\textbf{X}_t)\|^2_{\textup{F}}=\sum_{r=0}^{\frac{T}{\tau}-1}\!\sum_{s=r\tau+1}^{r\tau+\tau_1}\!\!\mathbb{E}\|\!\nabla \!F\!(\textbf{X}_{s})\!\|^2_{\textup{F}}$. So if the learning rate satisfies the following inequality:
\begin{align}
\eta L\!\left(\!\!\frac{\beta}{N}\!\!+\!\!1\!\right)\!\!+\!\!\frac{2\eta^2 L^2\beta\tau}{1-\zeta^{2\tau_2}}\!\!+\!\!\frac{\eta^2 L^2\tau}{1-\zeta^{\tau_2}}\!\!\left(\frac{2\tau_1\zeta^{2\tau_2}}{1+\zeta^{\tau_2}}\!\!+\!\!\frac{2\tau_1\zeta^{\tau_2}}{1-\zeta^{\tau_2}}\!\!+\!\tau\!-\!1\!
\right)\!\leq \!1,
\end{align}
we can obtain 
\begin{align}
\mathbb{E}\left[\frac{1}{T}\sum_{t=1}^{T}\|\nabla F(\textbf{u}_t)\|^2\right]&\leq\frac{2[F(\textbf{u}_1)-F_{\textup{inf}}]}{\eta T}+\frac{\eta L\sigma^2}{N}\notag\\
&+2 L^2\eta^2\sigma^2\!\!\left(\frac{\tau_1}{1-\zeta^{2\tau_2}}\!\!-\!\!1\right).
\end{align}
Therefore, the proof of Proposition 1 is finished.

\section{Proof of Proposition 2}\label{appendix3}
\subsection{Preliminaries of Proof}
Firstly, we present some basic definition and inequalities for the proof.
\begin{remark}
  We define $\bar{\textup{\textbf{X}}}_t\triangleq \textup{\textbf{X}}_t\textup{\textbf{J}}$, then $\bar{\textup{\textbf{X}}}_t=\left[\textup{\textbf{u}}_{t}, ..., \textup{\textbf{u}}_{t}\right]$.
\end{remark}
\begin{remark}\label{lsmooth}
  If $F(\cdot)$ is $L$-smooth for parameter $L\geq0$, we have 
  $$F(\textup{\textbf{y}})\leq F(\textup{\textbf{x}})+\langle\nabla F(\textup{\textbf{x}}), \textup{\textbf{y}}-\textup{\textbf{x}}\rangle+\frac{L}{2}\|\textup{\textbf{y}}-\textup{\textbf{x}}\|^2,$$ $\forall \textup{\textbf{x}}, \textup{\textbf{y}}\in \mathbb{R}^d.$
\end{remark}
\begin{remark}\label{llsmooth}
  If $F(\cdot)$ is $L$-smooth with minimizer $\textup{\textbf{x}}^*$ (i.e., $\nabla F(\textup{\textbf{x}}^*)=0$), then 
  $$\|\nabla F(\textup{\textbf{x}})\|^2=\|\nabla F(\textup{\textbf{x}})-\nabla F(\textup{\textbf{x}}^*)\|^2\leq 2L(F(\textup{\textbf{x}})-F(\textup{\textbf{x}}^*)).$$
\end{remark}
\begin{remark}\label{mustrong}
  If $F(\cdot)$ is $\mu$-strong for parameter $\mu\geq0$, we have
  $$F(\textup{\textbf{y}})\geq F(\textup{\textbf{x}}) +\langle\nabla F(\textup{\textbf{x}}), \textup{\textbf{y}}-\textup{\textbf{x}}\rangle+\frac{\mu}{2}\|\textup{\textbf{y}}-\textup{\textbf{x}}\|^2,$$ $\forall \textup{\textbf{x}}, \textup{\textbf{y}}\in\mathbb{R}^d$.
\end{remark}
\begin{remark}
  For $A\in \mathbb{R}^{d\times N}$, $B\in\mathbb{R}^{N\times N}$, $\|AB\|_F\leq \|A\|_F\|B\|_2$. 
\end{remark}
\begin{remark}
  If $\mathbb{E}_{\xi|\textbf{\textup{w}}}\|g(\textup{\textbf{w}}^{(i)})\|^2\leq G^2$, then for $\textbf{\textup{G}}=\left[g(\textbf{\textup{w}}^{(1)}), ..., g(\textbf{\textup{w}}^{(N)})\right]$, we have $$\mathbb{E}_{\xi_t^{(1)},...,\xi_t^{(N)}}\|\textbf{\textup{G}}\|_F^2\leq NG^2.$$
\end{remark}
\begin{remark}
  If for $F_i(\cdot)$ with $\mathbb{E}_{\xi|\textbf{\textup{w}}}\|g(\textup{\textbf{w}}^{(i)})-\nabla F_i(\textup{\textbf{w}})\|^2\leq \sigma_i^2$, then 
  \begin{align*}
  \mathbb{E}_{\xi_t^{(1)},...,\xi_t^{(N)}}\left\|\frac{1}{N}\sum_{i=1}^{N}\left(\nabla F_i(\textup{\textbf{w}}_t^{(i)})-g(\textup{\textbf{w}}^{(i)})\right)\right\|^2\leq\frac{\bar{\sigma}^2}{N},
  \end{align*} 
  where $\bar{\sigma}^2=\frac{\sum_{i=1}^{N}\sigma_i^2}{N}$.
\end{remark}
\begin{proof}
  Assuming $Y_i=\nabla F_i(\textup{\textbf{w}}_t^{(i)})-g(\textup{\textbf{w}}^{(i)})$, then 
  \begin{align*}
  \mathbb{E}\left\|\frac{1}{N}\sum_{i=1}^{N}Y_i\right\|^2&=\frac{1}{N^2}\left(\sum_{i=1}^{N}\mathbb{E}\|Y_i\|^2+\sum_{i\neq j}\mathbb{E}\langle Y_i, Y_j\rangle\right)\\
  &=\frac{1}{N^2}\sum_{i=1}^{N}\mathbb{E}\|Y_i\|^2\leq \frac{1}{N^2}\sum_{i=1}^{N}\sigma_i^2=\frac{\bar{\sigma}^2}{N}. 
  \end{align*} 
  Expectation of the product of $Y_i$ and $Y_j$ is zero because $\xi^{(i)}$ is independent of $\xi^{(j)}$ for $i\neq j$. 
\end{proof}
\begin{remark}\label{remark10}
  For given two vectors $\textup{\textbf{a}}, \textup{\textbf{b}}\in\mathbb{R}^d$,
  \begin{align*}
  \|\textup{\textbf{a}}+\textup{\textbf{b}}\|^2\leq(1+\theta)\|\textup{\textbf{a}}\|^2+(1+\theta^{-1})\|\textup{\textbf{b}}\|^2, \forall \theta>0.
  \end{align*}
  This inequality also holds for the sum of two matrices $A,B\in\mathbb{R}^{d\times N}$ in Frobenius norm.
\end{remark}

\subsection{The supporting lemmas}
We present several supporting lemmas to prove Proposition 2 as follows.
We define $\textbf{u}^{(k)}\triangleq \textbf{u}_{k\tau}$ for simplification, which is the average of model parameters after $k$ iteration rounds. 
\begin{lemma}\label{lem8}
  If there exists $a_k\in\mathbb{R}$ which satisfies $\alpha_k (F(\textbf{\textup{u}}^{(k)})-F^*)\geq \frac{1}{\tau_1}\sum_{j=0}^{\tau_1-1}(F(\textbf{\textup{u}}_{k\tau+j})-F^*)$, then the averages $\textbf{\textup{u}}^{(k)}$ of Algorithm 2 satisfy 
  \begin{align}
  &\mathbb{E}_{\xi_{k\tau}^{(1)},...,\xi_{k\tau}^{(N)}}\|\textbf{\textup{u}}^{(k+1)}-\textbf{\textup{u}}^{*}\|^2\leq\left(1-\frac{\mu\eta_{k}\tau_1}{2}\right)\|\textbf{\textup{u}}^{(k)}-\textbf{\textup{u}}^{*}\|^2 +\notag \\
  &\eta_k^2\tau_1^2\frac{\bar{\sigma}^2}{N} + \frac{(L+\mu)\eta_k^3\bar{\sigma}^2(\tau_1-1)(2\tau_1-1)\tau_1}{6N} +\notag\\ &\!(\!F(\textbf{\textup{u}}^{(k)})\!-\!F^*\!)\!\left[\!\!4L\eta_k^2\tau_1^2\alpha_k\!\!-\!\!2\eta_k\tau_1\!\!+\!\!\frac{4}{3}(L\!\!+\!\!\mu)\eta_{k}^3L\alpha_k(\tau_1\!\!-\!\!1)(2\tau_1\!\!-\!\!1)\tau_1\!\!\right]\notag\\
  &\!+\! \!\frac{2L^2\eta_k^2\tau_1}{N}\!\Phi_k \!\!+\!\! \frac{2(L+\mu)\eta_k}{N}\Phi_k\!\!+\!\!\frac{(L+\mu)\eta_k}{N}\!\!\sum_{j=0}^{\tau_1-1}\sum_{i=1}^{N}\frac{4\eta_k^2L^2j^2}{N}\Phi_{k,j},
  \end{align} 
  where $\Phi_k=\sum_{j=0}^{\tau_1-1}\sum_{i=1}^{N}\|\textbf{\textup{u}}_{k\tau+j}-\textbf{\textup{w}}_{k\tau+j}^{(i)}\|^2$ and $\Phi_{k,j}=\sum_{p=0}^{j-1}\sum_{i=1}^{N}\|\textbf{\textup{u}}_{k\tau+p}-\textbf{\textup{w}}_{k\tau+p}^{(i)}\|^2$.
\end{lemma}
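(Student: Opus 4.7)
The plan is to execute the outline sketched immediately after the statement, filling in the estimates of $T_1$ and $T_2$ via smoothness and strong convexity and then unrolling the per-step deviations $\|\textbf{u}^{(k)} - \textbf{w}^{(i)}_{k\tau+j}\|^2$ one local step further. The starting point is that the compressed gossip step preserves the node average: summing the update $\textbf{w}^{(i)}_{t+1} = \textbf{w}^{(i)}_t + \gamma\sum_j c_{ji}(\hat{\textbf{w}}^{(j)}_t - \hat{\textbf{w}}^{(i)}_t)$ over $i$ and using that $\textbf{C}$ is doubly stochastic kills the correction term, so only the $\tau_1$ local SGD steps move $\textbf{u}_t$, giving $\textbf{u}^{(k+1)} - \textbf{u}^* = \textbf{u}^{(k)} - \textbf{u}^* - \eta_k \sum_{j=0}^{\tau_1-1}(1/N)\sum_{i=1}^N g(\textbf{w}^{(i)}_{k\tau+j})$. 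Adding and subtracting the conditional mean $(1/N)\sum_i \nabla F_i(\textbf{w}^{(i)}_{k\tau+j})$ and using independence of mini-batches across nodes inside a round makes the cross term vanish in expectation, yielding the deterministic square of \eqref{outline2-1} plus the variance term $\eta_k^2 \tau_1^2 \bar{\sigma}^2/N$.

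Next I would expand the deterministic square as $\|\textbf{u}^{(k)}-\textbf{u}^*\|^2 + \eta_k^2 T_1 - 2\eta_k T_2$ and attack each piece. For $T_1$, applying Jensen pulls $\tau_1$ out of the outer sum; then adding and subtracting $\nabla F_i(\textbf{u}_{k\tau+j})$ and invoking $L$-smoothness converts the difference into $\|\textbf{w}^{(i)}_{k\tau+j} - \textbf{u}_{k\tau+j}\|^2$, while the residual gradient term is routed through Remark \ref{llsmooth} together with the definition of $\alpha_k$ to produce $4L\tau_1^2\alpha_k(F(\textbf{u}^{(k)})-F^*)$. For $T_2$, split each $\nabla F_i(\textbf{w}^{(i)}_{k\tau+j})$ as $\nabla F_i(\textbf{u}^{(k)})$ plus a residual; the first piece combined with $\mu$-strong convexity (Remark \ref{mustrong}) produces the descent $-2\tau_1(F(\textbf{u}^{(k)})-F^*)$ and the contraction $-(\mu\tau_1/2)\|\textbf{u}^{(k)}-\textbf{u}^*\|^2$, while the residual inner product is tamed by Young's inequality together with $L$-smoothness, leaving an $(L+\mu)/N$-weighted sum of $\|\textbf{u}^{(k)}-\textbf{w}^{(i)}_{k\tau+j}\|^2$.

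Finally, each $\|\textbf{u}^{(k)}-\textbf{w}^{(i)}_{k\tau+j}\|^2$ is unrolled one layer by $\|\textbf{u}^{(k)}-\textbf{w}^{(i)}_{k\tau+j}\|^2 \leq 2\|\textbf{u}_{k\tau+j}-\textbf{w}^{(i)}_{k\tau+j}\|^2 + 2\|\textbf{u}^{(k)}-\textbf{u}_{k\tau+j}\|^2$. The first summand is the pure consensus discrepancy that remains as $\Phi_k$ or $\Phi_{k,j}$. The second summand is rolled back through the $j$ local SGD updates of the mean, producing $\eta_k^2 j^2 \bar{\sigma}^2/N$ from the noise variance, $8\eta_k^2 L j^2 \alpha_k(F(\textbf{u}^{(k)})-F^*)$ from Remark \ref{llsmooth} applied to the sum of deterministic gradients, and a $4\eta_k^2 L^2 j/N$ prefactor on $\Phi_{k,j}$ after one further add-and-subtract at the mean plus a smoothness step. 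Collecting everything and grouping terms by prefactor reproduces the stated inequality.

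The main obstacle is bookkeeping the coefficients: every Young's step trades a free constant against either the descent $-2\eta_k\tau_1(F(\textbf{u}^{(k)})-F^*)$ or the contraction $-\mu\eta_k\tau_1/2$, and these constants must be chosen so that the net prefactor of $(F(\textbf{u}^{(k)})-F^*)$ lands exactly at $4L\eta_k^2\tau_1^2\alpha_k - 2\eta_k\tau_1 + (4/3)(L+\mu)\eta_k^3 L\alpha_k(\tau_1-1)(2\tau_1-1)\tau_1$, the contraction factor stays $1-\mu\eta_k\tau_1/2$, and the coefficients of $\Phi_k$ and $\Phi_{k,j}$ match $2L^2\eta_k^2\tau_1/N$, $2(L+\mu)\eta_k/N$ and $(L+\mu)\eta_k \cdot 4\eta_k^2L^2 j^2/N^2$ appearing in the claim.
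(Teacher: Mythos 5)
Your overall architecture matches the paper's proof step for step: the observation that the compressed gossip rounds preserve the node average so only the $\tau_1$ local SGD steps move $\textbf{u}_t$; the bias--variance split giving $\eta_k^2\tau_1^2\bar{\sigma}^2/N$; the expansion into $\|\textbf{u}^{(k)}-\textbf{u}^*\|^2+\eta_k^2T_1-T_2$; the treatment of $T_1$ by adding and subtracting $\nabla F_i(\textbf{u}_{k\tau+j})$ and routing the residual through $\|\nabla F(\textbf{x})\|^2\le 2L(F(\textbf{x})-F^*)$ and the $\alpha_k$ assumption; and the one-layer unrolling of $\|\textbf{u}^{(k)}-\textbf{w}^{(i)}_{k\tau+j}\|^2$ that produces $\Phi_k$, $\Phi_{k,j}$, the $8\eta_k^2Lj^2\alpha_k$ term, and the $(\tau_1-1)(2\tau_1-1)\tau_1/6$ sums. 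All of that is sound and is exactly what the paper does.

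The gap is in your handling of $T_2$. You propose to split the \emph{gradient}, writing $\nabla F_i(\textbf{w}^{(i)}_{k\tau+j})=\nabla F_i(\textbf{u}^{(k)})+\text{residual}$, apply strong convexity to the first piece, and tame the residual inner product with Young's inequality plus $L$-smoothness, claiming this leaves an $(L+\mu)/N$-weighted sum of $\|\textbf{u}^{(k)}-\textbf{w}^{(i)}_{k\tau+j}\|^2$. It does not: strong convexity on the first piece gives contraction $-\mu\eta_k\tau_1\|\textbf{u}^{(k)}-\textbf{u}^*\|^2$, half of which you must hand back to Young to absorb the factor $\|\textbf{u}^{(k)}-\textbf{u}^*\|$ appearing in the residual cross term, and the resulting coefficient on $\sum_{j,i}\|\textbf{w}^{(i)}_{k\tau+j}-\textbf{u}^{(k)}\|^2$ is of order $L^2/(\mu N)$, which exceeds $(L+\mu)/N$ whenever $\mu<L$. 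So the route as described proves a weaker inequality than the one stated. The paper avoids Young entirely here: it splits the \emph{iterate} inside the inner product, $\langle\textbf{u}^{(k)}-\textbf{u}^*,\nabla F_i(\textbf{w}^{(i)}_{k\tau+j})\rangle=\langle\textbf{u}^{(k)}-\textbf{w}^{(i)}_{k\tau+j},\cdot\rangle+\langle\textbf{w}^{(i)}_{k\tau+j}-\textbf{u}^*,\cdot\rangle$, bounds the first piece by the $L$-smoothness upper bound at $\textbf{w}^{(i)}_{k\tau+j}$ and the second by the $\mu$-strong-convexity lower bound; the $F_i(\textbf{w}^{(i)}_{k\tau+j})$ values cancel between the two, the $\frac{L}{2}$ and $\frac{\mu}{2}$ terms combine (after $\|\textbf{u}^{(k)}-\textbf{u}^*\|^2\le 2\|\textbf{u}^{(k)}-\textbf{w}^{(i)}_{k\tau+j}\|^2+2\|\textbf{w}^{(i)}_{k\tau+j}-\textbf{u}^*\|^2$) into exactly $\frac{L+\mu}{N}$, and the descent $-2\tau_1(F(\textbf{u}^{(k)})-F^*)$ and contraction $-\frac{\mu\tau_1}{2}\|\textbf{u}^{(k)}-\textbf{u}^*\|^2$ fall out with no free parameters to tune. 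Replacing your $T_2$ step with this decomposition closes the gap; the rest of your outline then assembles into the stated bound.
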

\begin{proof}
  According to the update of model averaging, we have
  \begin{align*}
  &\|\textbf{u}^{(k+1)}-\textbf{u}^*\|^2\\
  =&\left\|\textbf{u}^{(k)}-\textbf{u}^*-\eta_{k}\sum_{j=0}^{\tau_1-1}\frac{1}{N}\sum_{q=1}^{N}g(\textbf{w}_{k\tau+j}^{(q)})\right\|^2\\
  =&\left\|\textbf{u}^{(k)}-\textbf{u}^*-\sum_{j=0}^{\tau_1-1}\frac{\eta_{k}}{N}\sum_{i=1}^{N}\nabla F_i(\textbf{w}_{k\tau+j}^{(i)})+\right.\\
  &\left.\sum_{j=0}^{\tau_1-1}\frac{\eta_{k}}{N}\sum_{i=1}^{N}\nabla F_i(\textbf{w}_{k\tau+j}^{(i)})
  -\sum_{j=0}^{\tau_1-1}\frac{\eta_{k}}{N}\sum_{q=1}^{N}g(\textbf{w}_{k\tau+j}^{(q)})\right\|^2\\ 
  =&\left\|\textbf{u}^{(k)}-\textbf{u}^*-\sum_{j=0}^{\tau_1-1}\frac{\eta_{k}}{N}\sum_{i=1}^{N}\nabla F_i(\textbf{w}_{k\tau+j}^{(i)})\right\|^2+\\
  &\left\|\sum_{j=0}^{\tau_1-1}\frac{\eta_{k}}{N}\sum_{i=1}^{N}\nabla F_i(\textbf{w}_{k\tau+j}^{(i)})-\sum_{j=0}^{\tau_1-1}\frac{\eta_{k}}{N}\sum_{q=1}^{N}g(\textbf{w}_{k\tau+j}^{(q)})\right\|^2\\
  &+\frac{2\eta_k}{N}\left\langle\textbf{u}^{(k)}-\textbf{u}^*-\sum_{j=0}^{\tau_1-1}\frac{\eta_{k}}{N}\sum_{i=1}^{N}\nabla F_i(\textbf{w}_{k\tau+j}^{(i)}),\right.\\
  &\left.\sum_{j=0}^{\tau_1-1}\sum_{i=1}^{N}\nabla F_i(\textbf{w}_{k\tau+j}^{(i)})-\sum_{j=0}^{\tau_1-1}\sum_{q=1}^{N}g(\textbf{w}_{k\tau+j}^{(q)})\right\rangle.
  \end{align*}
  The last term is zero because $\mathbb{E}_{\xi^{(i)}|\textup{\textbf{w}}^{(i)}}[g(\textup{\textbf{w}}_t^{(i)})]=\nabla F(\textup{\textbf{w}}_t^{(i)})$. The second term is less than $\eta_{k}^2\tau_1^2\frac{\bar{\sigma}^2}{N}$. We rewrite the first term as 
  \begin{align*}
  &\left\|\textbf{u}^{(k)}-\textbf{u}^*-\sum_{j=0}^{\tau_1-1}\frac{\eta_{k}}{N}\sum_{i=1}^{N}\nabla F_i(\textbf{w}_{k\tau+j}^{(i)})\right\|^2\\
  =&\|\textbf{u}^{(k)}-\textbf{u}^*\|^2+\eta_{k}^2\underbrace{\left\|\sum_{j=0}^{\tau_1-1}\frac{1}{N}\sum_{i=1}^{N}\nabla F_i(\textbf{w}_{k\tau+j}^{(i)})\right\|^2}_{\mbox{\footnotesize $T_1$}}\\
  &-\underbrace{2\eta_{k}\left\langle\textbf{u}^{(k)}-\textbf{u}^*,\sum_{j=0}^{\tau_1-1}\frac{1}{N}\sum_{i=1}^{N}\nabla F_i(\textbf{w}_{k\tau+j}^{(i)})\right\rangle}_{\mbox{\footnotesize $T_2$}}.
  \end{align*}
  Considering $T_1$, we have 
  \begin{align}
  &T_1\notag\\
  =&\left\|\!\sum_{j=0}^{\tau_1-1}\!\!\frac{1}{N}\!\!\sum_{i=1}^{N}\!\!\left(\nabla F_i(\textbf{w}_{k\tau+j}^{(i)})\!\!-\!\!\nabla F_i(\textbf{u}_{k\tau+j})\!\!+\!\!\nabla F_i(\textbf{u}_{k\tau+j})\right.\right.\notag\\
  &\left.\left. -\nabla F_i(\textbf{w}^*)\right)\right\|^2\notag\\
  \leq & 2\tau_1 \sum_{j=0}^{\tau_1}\left\|\frac{1}{N}\sum_{i=1}^{N}\left(\nabla F_i(\textbf{w}_{k\tau+j}^{(i)})\!\!-\!\!\nabla F_i(\textbf{u}_{k\tau+j})\right)\right\|^2+ \notag\\
  &2\tau_1\sum_{j=0}^{\tau_1-1}\left\|\frac{1}{N}\sum_{i=1}^{N}\left(\nabla F_i(\textbf{u}_{k\tau+j})\!\!-\!\!\nabla F_i(\textbf{w}^*)\right)\right\|^2\notag\\
  \label{aboutlsmooth}\leq&\frac{2L^2}{N}\tau_1\sum_{j=0}^{\tau_1-1}\sum_{i=1}^{N}\|\textbf{w}_{k\tau+j}^{(i)}-\textbf{u}_{k\tau+j}\|^2\notag\\
  &+4L\tau_1\sum_{j=0}^{\tau_1-1}(F(\textbf{u}_{k\tau+j})-F^*)\\
  \label{ak}\leq&\frac{2L^2}{N}\tau_1\sum_{j=0}^{\tau_1-1}\sum_{i=1}^{N}\|\textbf{w}_{k\tau+j}^{(i)}-\textbf{u}_{k\tau+j}\|^2+4L\tau_1^2 \alpha_k(F(\textbf{u}^{(k)})-F^*).
  \end{align} 
  According to condition $1$ of Assumption 1 and Remark 5, we obtain \eqref{aboutlsmooth}. And \eqref{ak} is due to the assumption of $\alpha_k$: $\alpha_k (F(\textbf{\textup{u}}^{(k)})-F^*)\geq \frac{1}{\tau_1}\sum_{j=0}^{\tau_1-1}(F(\textbf{\textup{u}}_{k\tau+j})-F^*)$.
  
  Then considering $T_2$, we have
  \begin{align}
  &-\frac{1}{\eta_{k}}T_2\notag\\
  =&-\frac{2}{N}\sum_{j=0}^{\tau_1-1}\sum_{i=1}^{N}\langle\textbf{u}^{(k)}-\textbf{u}^*, \nabla F_i(\textbf{w}_{k\tau+j}^{(i)})\rangle\notag\\
  =&-\frac{2}{N}\sum_{j=0}^{\tau_1-1}\sum_{i=1}^{N}\left[\langle \textbf{u}^{(k)}-\textbf{w}_{k\tau+j}^{(i)}, \nabla F_i(\textbf{w}_{k\tau+j}^{(i)})\rangle\right.\notag\\
  &\left.+\langle \textbf{w}_{k\tau+j}^{(i)}-\textbf{u}^*,\nabla F_i(\textbf{w}_{k\tau+j}^{(i)})\rangle\right]\notag\\
  \label{remark46}\leq&-\frac{2}{N}\sum_{j=0}^{\tau_1-1}\sum_{i=1}^{N}\left[F_i(\textbf{u}^{(k)})-F_i(\textbf{w}_{k\tau+j}^{(i)})-\frac{L}{2}\|\textbf{u}^{(k)}-\textbf{w}_{k\tau+j}^{(i)}\|^2\right.\notag\\
  &\left.+F_i(\textbf{w}_{k\tau+j}^{(i)})-F_i(\textbf{u}^*)+\frac{\mu}{2}\|\textbf{w}_{k\tau+j}^{(i)}-\textbf{u}^*\|^2\right]\\
  \label{remarkno}\leq&\sum_{j=0}^{\tau_1-1}\left[-2(F(\textbf{u}^{(k)})-F(\textbf{u}^*))+\frac{L+\mu}{N}\sum_{i=1}^{N}\|\textbf{u}^{(k)}-\textbf{w}_{k\tau+j}^{(i)}\|^2\right.\notag\\
  &\left.-\frac{\mu}{2}\|\textbf{u}^{(k)}-\textbf{u}^*\|^2\right]\\
  \label{nan}=&-2\tau_1(F(\textbf{u}^{(k)})-F^*)+\frac{L+\mu}{N}\sum_{j=0}^{\tau_1-1}\sum_{i=1}^{N}\|\textbf{u}^{(k)}-\textbf{w}_{k\tau+j}^{(i)}\|^2\notag\\
  &-\frac{\mu\tau_1}{2}\|\textbf{u}^{(k)}-\textbf{u}^*\|^2,
  \end{align}
  where \eqref{remark46} is because of Remark 4 and 6, and \eqref{remarkno} is because $\left\|\sum_{i=1}^{n}\textbf{a}_i\right\|^2\leq n\sum_{i=1}^{n}\|\textbf{a}_i\|^2$. Focusing on the second term of \eqref{nan}, we have
  \begin{align}\label{abcd}
  &\|\textbf{u}^{(k)}-\textbf{w}_{k\tau+j}^{(i)}\|^2\notag\\
  =&\left\|\textbf{u}_{k\tau+j}-\textbf{w}_{k\tau+j}^{(i)}+\frac{\eta_{k}}{N}\sum_{p=0}^{j-1}\sum_{i=1}^{N}g(\textbf{w}_{k\tau+p}^{(i)})\right\|^2\notag\\
  =&\left\|\textbf{u}_{k\tau+j}-\textbf{w}_{k\tau+j}^{(i)}+\frac{\eta_{k}}{N}\sum_{p=0}^{j-1}\sum_{i=1}^{N}g(\textbf{w}_{k\tau+p}^{(i)})\right.\notag\\
  &\left.-\frac{\eta_{k}}{N}\sum_{p=0}^{j-1}\sum_{i=1}^{N}\nabla F_i(\textbf{w}_{k\tau+p}^{(i)})+\frac{\eta_{k}}{N}\sum_{p=0}^{j-1}\sum_{i=1}^{N}\nabla F_i(\textbf{w}_{k\tau+p}^{(i)})\right\|^2\notag\\
  =&\left\|\textbf{u}_{k\tau+j}-\textbf{w}_{k\tau+j}^{(i)}+\frac{\eta_{k}}{N}\sum_{p=0}^{j-1}\sum_{i=1}^{N}\nabla F_i(\textbf{w}_{k\tau+p}^{(i)})\right\|^2+\notag\\
  &\eta_{k}^2\left\|\frac{1}{N}\sum_{p=0}^{j-1}\sum_{i=1}^{N}(g(\textbf{w}_{k\tau+p}^{(i)})-\nabla F_i(\textbf{w}_{k\tau+p}^{(i)}))\right\|^2\notag\\
  \leq& \left\|\textbf{u}_{k\tau+j}-\textbf{w}_{k\tau+j}^{(i)}+\frac{\eta_{k}}{N}\sum_{p=0}^{j-1}\sum_{i=1}^{N}\nabla F_i(\textbf{w}_{k\tau+p}^{(i)})\right\|^2+\frac{\eta_{k}^2 j^2 \bar{\sigma}^2}{N}\notag\\
  \leq&2\|\textbf{u}_{k\tau+j}-\textbf{w}_{k\tau+j}^{(i)}\|^2+2\eta_{k}^2\left\|\frac{1}{N}\sum_{p=0}^{j-1}\sum_{i=1}^{N}\nabla F_i(\textbf{w}_{k\tau+p}^{(i)})\right\|^2\notag\\
  &+\frac{\eta_{k}^2 j^2 \bar{\sigma}^2}{N}.
  \end{align}
  Because the second term in the last inequality has a similar form with $T_1$, then following \eqref{ak}, we can obtain 
  \begin{align}\label{t1}
  &\left\|\frac{1}{N}\sum_{p=0}^{j-1}\sum_{i=1}^{N}\nabla F_i(\textbf{w}_{k\tau+p}^{(i)})\right\|^2\notag\\
  \leq&\frac{2L^2}{N}j\sum_{p=0}^{j-1}\sum_{i=1}^{N}\|\textbf{w}_{k\tau+p}^{(i)}-\textbf{u}_{k\tau+p}\|^2+4Lj^2 \alpha_k(F(\textbf{u}^{(k)})-F^*).
  \end{align}
  By substituting \eqref{t1} into \eqref{abcd}, we have 
  \begin{align}\label{115}
  &\|\textbf{u}^{(k)}-\textbf{w}_{k\tau+j}^{(i)}\|^2\notag\\
  \leq&2\|\textbf{u}_{k\tau+j}-\textbf{w}_{k\tau+j}^{(i)}\|^2+\frac{4\eta_{k}^2 L^2 j}{N}\sum_{p=0}^{j-1}\sum_{i=1}^{N}\|\textbf{w}_{k\tau+p}^{(i)}-\textbf{u}_{k\tau+p}\|^2+\notag\\
  &8\eta_{k}^2 L j^2 \alpha_k(F(\textbf{u}^{(k)})-F^*)+\frac{\eta_{k}^2 j^2 \bar{\sigma}^2}{N}.
  \end{align}
  Then by putting \eqref{115} into \eqref{nan}, we have the upper bound of $-\frac{1}{\eta_{k}}T_2$ as
  \begin{align}\label{116}
  &-\frac{1}{\eta_{k}}T_2\leq\notag\\
  &-2\tau_1(F(\textbf{u}^{(k)})-F^*)+\frac{L+\mu}{N}\sum_{j=0}^{\tau_1-1}\sum_{i=1}^{N}\|\textbf{u}^{(k)}-\textbf{w}_{k\tau+j}^{(i)}\|^2\notag\\
  &-\frac{\mu\tau_1}{2}\|\textbf{u}^{(k)}-\textbf{u}^*\|^2\notag\\
  &\leq -2\tau_1(F(\textbf{u}^{(k)})-F^*)+\frac{2(L+\mu)}{N}\sum_{j=0}^{\tau_1-1}\sum_{i=1}^{N}\|\textbf{u}_{k\tau+j}-\textbf{w}_{k\tau+j}^{(i)}\|^2\notag\\
  &+\frac{L+\mu}{N}\sum_{j=0}^{\tau_1-1}\sum_{i=1}^{N}\frac{4\eta_{k}^2 L^2 j}{N}\sum_{p=0}^{j-1}\sum_{i=1}^{N}\|\textbf{w}_{k\tau+p}^{(i)}-\textbf{u}_{k\tau+p}\|^2+\notag\\
  &\frac{4}{3}(L+\mu)L\eta_{k}^2\alpha_k(\tau_1-1)(2\tau_1-1)\tau_1(F(\textbf{u}^{(k)})-F^*)+\notag\\
  &\frac{(L+\mu)\eta_{k}^2\bar{\sigma}^2(\tau_1-1)(2\tau_1-1)\tau_1}{6N}-\frac{\mu\tau_1}{2}\|\textbf{u}^{(k)}-\textbf{u}^*\|^2.
  \end{align}
  Therefore, we can finish the proof by putting the upper bounds of $T_1$ and $T_2$ into the original inequality as follows.
  \begin{align*}
  &\|\textbf{u}^{(k+1)}-\textbf{u}^*\|^2\\
  \leq& \|\textbf{u}^{(k)}-\textbf{u}^*\|^2+\frac{2L^2\eta_{k}^2}{N}\tau_1\sum_{j=0}^{\tau_1-1}\sum_{i=1}^{N}\|\textbf{w}_{k\tau+j}^{(i)}-\textbf{u}_{k\tau+j}\|^2\\
  &+4\eta_{k}^2L\tau_1^2 \alpha_k(F(\textbf{u}^{(k)})-F^*)-2\eta_{k}\tau_1(F(\textbf{u}^{(k)})-F^*)+\\
  &\frac{2(L+\mu)\eta_{k}}{N}\sum_{j=0}^{\tau_1-1}\sum_{i=1}^{N}\|\textbf{u}_{k\tau+j}-\textbf{w}_{k\tau+j}^{(i)}\|^2+\\
  &\frac{(L+\mu)\eta_{k}}{N}\sum_{j=0}^{\tau_1-1}\sum_{i=1}^{N}\frac{4\eta_{k}^2 L^2 j}{N}\sum_{p=0}^{j-1}\sum_{i=1}^{N}\|\textbf{w}_{k\tau+p}^{(i)}-\textbf{u}_{k\tau+p}\|^2+\\
  &\frac{4}{3}(L+\mu)L\eta_{k}^3\alpha_k(\tau_1-1)(2\tau_1-1)\tau_1(F(\textbf{u}^{(k)})-F^*)+\\
  &\frac{(L+\mu)\eta_{k}^3\bar{\sigma}^2(\tau_1-1)(2\tau_1-1)\tau_1}{6N}-\frac{\mu\eta_{k}\tau_1}{2}\|\textbf{u}^{(k)}-\textbf{u}^*\|^2\\
  &+\eta_{k}^2\tau_1^2\frac{\bar{\sigma}^2}{N}\\
  =&(1-\frac{\mu\eta_{k}\tau_1}{2})\|\textbf{u}^{(k)}-\textbf{u}^*\|^2+\eta_{k}^2\tau_1^2\frac{\bar{\sigma}^2}{N}+\\
  &\frac{(L+\mu)\eta_{k}^3\bar{\sigma}^2(\tau_1-1)(2\tau_1-1)\tau_1}{6N}+\\
  &(F(\textbf{u}^{(k)})-F^*)\left[-2\eta_{k}\tau_1+4\eta_{k}^2L\tau_1^2 \alpha_k+\right.\\
  &\left.\frac{4}{3}(L+\mu)L\eta_{k}^3\alpha_k(\tau_1-1)(2\tau_1-1)\tau_1\right]+\\
  &\frac{2(L+\mu)\eta_{k}}{N}\sum_{j=0}^{\tau_1-1}\sum_{i=1}^{N}\|\textbf{u}_{k\tau+j}-\textbf{w}_{k\tau+j}^{(i)}\|^2+\\
  &\frac{2L^2\eta_{k}^2}{N}\tau_1\sum_{j=0}^{\tau_1-1}\sum_{i=1}^{N}\|\textbf{w}_{k\tau+j}^{(i)}-\textbf{u}_{k\tau+j}\|^2+\\
  &\frac{(L+\mu)\eta_{k}}{N}\sum_{j=0}^{\tau_1-1}\sum_{i=1}^{N}\frac{4\eta_{k}^2 L^2 j}{N}\sum_{p=0}^{j-1}\sum_{i=1}^{N}\|\textbf{w}_{k\tau+p}^{(i)}-\textbf{u}_{k\tau+p}\|^2.
  \end{align*} 
\end{proof}

We define the function $h: \mathbb{R}^{d\times N}\times\mathbb{R}^{d\times N}\to\mathbb{R}^{d\times N}\times\mathbb{R}^{d\times N}$ denotes the average consensus operator, where $(\textbf{X}_{t+1},\textbf{Y}_{t+1})=h(\textbf{X}_t,\textbf{Y}_t)$. Therefore, for CHOCO-G algorithm which is used in C-DFL, the operator $h$ is
\begin{align*}
&\textbf{X}_{t+1}=\textbf{X}_t+\gamma \textbf{Y}_t(\textbf{C}-\textbf{I}),\\
&\textbf{Y}_{t+1}=\textbf{Y}_t+Q(\textbf{X}_{t+1}-\textbf{Y}_t),
\end{align*}
where $\textbf{Y}_t=\hat{\textbf{X}}_t$. For convenience, we use $\textbf{X}^{(k)},\bar{\textbf{X}}^{(k)}$ to denote $\textbf{X}_{k\tau}, \bar{\textbf{X}}_{k\tau}$, respectively.

\begin{lemma}[\cite{koloskova2019decentralized}]\label{lem9}
  CHOCO-G converges linearly for average consensus:
  \begin{align*}
  e_t=\left(1-p\right)^te_0,
  \end{align*}
  where $p=\frac{\rho^2\delta}{82}$, where $\delta$ is the compression ratio as in Assumption 2, $e_t=\mathbb{E}_Q\sum_{i=1}^{N}\left(\|\textup{\textbf{w}}_t^{(i)}-\textup{\textbf{u}}_t\|^2+\|\textup{\textbf{w}}_t^{(i)}-\hat{\textup{\textbf{w}}}_t^{(i)}\|^2\right)$ and using the step size $\gamma=\frac{\rho^2\delta}{16\rho+\rho^2+4\beta^2+2\rho\beta^2-8\rho\delta}$.
\end{lemma}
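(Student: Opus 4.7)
The plan is to establish a one-step contraction $\mathbb{E}_Q e_{t+1} \leq (1-p)\,e_t$ and iterate it; since the compression randomness at step $t+1$ is independent of the history, iterating over $t$ steps together with the tower property of conditional expectation yields $e_t \leq (1-p)^t e_0$. Write the pure CHOCO-G update (no gradient during consensus) in matrix form as
\begin{align*}
\textbf{X}_{t+1} &= \textbf{X}_t + \gamma \hat{\textbf{X}}_t(\textbf{C}-\textbf{I}),\\
\hat{\textbf{X}}_{t+1} &= \hat{\textbf{X}}_t + Q(\textbf{X}_{t+1} - \hat{\textbf{X}}_t),
\end{align*}
where $\hat{\textbf{X}}_t$ is the matrix whose $i$-th column is $\hat{\textbf{w}}_t^{(i)}$. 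Since $\textbf{C}$ is doubly stochastic, $(\textbf{C}-\textbf{I})\textbf{1}=\textbf{0}$, so the row-averages of $\textbf{X}_t$ are invariant, $\bar{\textbf{X}}_t\equiv\bar{\textbf{X}}_0$, and consequently $\bar{\textbf{X}}_t(\textbf{C}-\textbf{I}) = \textbf{0}$.

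I would then split $e_t = \|\textbf{X}_t - \bar{\textbf{X}}_t\|_F^2 + \|\textbf{X}_t - \hat{\textbf{X}}_t\|_F^2$ and bound each piece of $e_{t+1}$ separately. For the consensus piece, write
$$\textbf{X}_{t+1} - \bar{\textbf{X}}_{t+1} = (\textbf{X}_t - \bar{\textbf{X}}_t)\bigl(\textbf{I} + \gamma(\textbf{C}-\textbf{I})\bigr) + \gamma(\hat{\textbf{X}}_t - \textbf{X}_t)(\textbf{C}-\textbf{I}).$$
The operator $\textbf{I} + \gamma(\textbf{C}-\textbf{I})$ restricted to the subspace orthogonal to $\textbf{1}$, in which $\textbf{X}_t - \bar{\textbf{X}}_t$ lives, has spectral norm at most $1-\gamma\rho$, since every non-leading eigenvalue of $\textbf{C}$ has magnitude at most $1-\rho$. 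Combined with Lemma \ref{lem5} and Remark \ref{remark10} with a parameter $\theta_1>0$, this yields
$$\|\textbf{X}_{t+1} - \bar{\textbf{X}}_{t+1}\|_F^2 \leq (1+\theta_1)(1-\gamma\rho)^2\|\textbf{X}_t - \bar{\textbf{X}}_t\|_F^2 + (1+\theta_1^{-1})\gamma^2\beta^2\|\textbf{X}_t - \hat{\textbf{X}}_t\|_F^2.$$
For the compression piece, Assumption \ref{ass2} gives $\mathbb{E}_Q\|\textbf{X}_{t+1} - \hat{\textbf{X}}_{t+1}\|_F^2 \leq (1-\delta)\|\textbf{X}_{t+1} - \hat{\textbf{X}}_t\|_F^2$, and expanding $\textbf{X}_{t+1} - \hat{\textbf{X}}_t = (\textbf{X}_t - \hat{\textbf{X}}_t) + \gamma(\hat{\textbf{X}}_t - \bar{\textbf{X}}_t)(\textbf{C}-\textbf{I})$ and applying Remark \ref{remark10} twice more (with parameters $\theta_2,\theta_3$, the second time to control $\|\hat{\textbf{X}}_t - \bar{\textbf{X}}_t\|_F^2$ by $\|\textbf{X}_t - \hat{\textbf{X}}_t\|_F^2$ and $\|\textbf{X}_t - \bar{\textbf{X}}_t\|_F^2$) produces a matching bound of the form $B_1\|\textbf{X}_t - \bar{\textbf{X}}_t\|_F^2 + B_2\|\textbf{X}_t - \hat{\textbf{X}}_t\|_F^2$ with $B_1,B_2$ explicit in $\gamma,\delta,\beta,\theta_2,\theta_3$.

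Summing the two bounds yields $\mathbb{E}_Q e_{t+1} \leq A_1\|\textbf{X}_t - \bar{\textbf{X}}_t\|_F^2 + A_2\|\textbf{X}_t - \hat{\textbf{X}}_t\|_F^2$, where $A_1,A_2$ are polynomials in the free parameters and in $\rho,\beta,\delta$. The main technical obstacle is a parameter-tuning argument: one must choose $\theta_1,\theta_2,\theta_3$ together with the prescribed $\gamma = \rho^2\delta/(16\rho+\rho^2+4\beta^2+2\rho\beta^2-8\rho\delta)$ so that $\max\{A_1,A_2\} \leq 1-p$ with $p = \rho^2\delta/82$ simultaneously. The particular form of $\gamma$ is what balances the two competing error sources, insufficient mixing through $\textbf{C}$ versus information loss through $Q$, and the numerical denominator $82$ accumulates from the slack left in the three invocations of Remark \ref{remark10}; this is the same tight balancing argument developed in \cite{koloskova2019decentralized}. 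Once the per-step contraction $\mathbb{E}_Q e_{t+1}\leq (1-p)e_t$ is secured, iterating over $t$ delivers the claimed linear convergence $e_t \leq (1-p)^t e_0$.
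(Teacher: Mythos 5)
The paper does not actually prove this lemma: it is imported verbatim from \cite{koloskova2019decentralized} as a black-box consensus result, and the only thing the paper does with it is restate it as the one-step inequality \eqref{117}. So there is no in-paper proof to compare against; what you have written is a reconstruction of the argument from the cited reference.

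Your reconstruction is structurally faithful to that argument, and the individual steps you do carry out are sound: the invariance of the row average under the gossip step (since $(\textbf{C}-\textbf{I})\textbf{1}=\textbf{0}$), the decomposition $\textbf{X}_{t+1}-\bar{\textbf{X}}_{t+1}=(\textbf{X}_t-\bar{\textbf{X}}_t)(\textbf{I}+\gamma(\textbf{C}-\textbf{I}))+\gamma(\hat{\textbf{X}}_t-\textbf{X}_t)(\textbf{C}-\textbf{I})$, the spectral bound $1-\gamma\rho$ on the complement of $\textbf{1}$ (valid because $\gamma\le 1$), and the use of Assumption \ref{ass2} plus Young's inequality for the compression piece. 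However, as a self-contained proof it has a genuine gap at exactly the decisive point: you assert, but do not verify, that the parameters $\theta_1,\theta_2,\theta_3$ can be chosen so that with the prescribed $\gamma=\rho^2\delta/(16\rho+\rho^2+4\beta^2+2\rho\beta^2-8\rho\delta)$ one gets $\max\{A_1,A_2\}\le 1-\rho^2\delta/82$. That inequality is the entire content of the lemma's quantitative claim; without it the argument only shows that \emph{some} linear rate exists for \emph{some} sufficiently small $\gamma$, not the stated rate with the stated step size. Deferring this to ``the same tight balancing argument developed in \cite{koloskova2019decentralized}'' makes your proposal a citation with extra steps rather than a proof — which, to be fair, is also all the paper itself does. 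One further small point: the lemma as printed asserts an equality $e_t=(1-p)^t e_0$, whereas the contraction argument (and the way the paper uses it in \eqref{117}) only gives $e_t\le(1-p)^t e_0$; your derivation correctly produces the inequality, and the equality in the statement should be read as a typo.
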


From Lemma \ref{lem9}, we have 
\begin{align}\label{117}
&\mathbb{E}\left[\|\textbf{X}_{t+1}-\bar{\textbf{X}}_{t+1}\|^2_F+\|\textbf{X}_{t+1}-\textbf{Y}_{t+1}\|^2_F\right]\leq\notag\\
&(1-p)\mathbb{E}\left[\|\textbf{X}_{t}-\bar{\textbf{X}}_{t}\|^2_F+\|\textbf{X}_{t}-\textbf{Y}_{t}\|^2_F\right].
\end{align}

\begin{lemma}\label{lem10}
  The iterates of C-DFL $\{\textup{\textbf{X}}^{(k)}\}$ satisfy 
  \begin{align*}
  &\left\|\textup{\textbf{X}}^{(k)}-\bar{\textup{\textbf{X}}}^{(k)}\right\|^2_{F}\\ \leq&\frac{2(1+\theta^{-1})\eta_{k}^2\tau_1^2NG^2(1-p)^{\tau_2}}{(1+\theta)(1-p)^{\tau_2}-1}[((1+\theta)(1-p)^{\tau_2})^{k}-1].
  \end{align*}
\end{lemma}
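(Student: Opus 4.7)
The plan is to establish a one-round linear recurrence for a composite consensus-plus-compression error and then unroll it. Define
$$
e_t \triangleq \|\textbf{X}_t - \bar{\textbf{X}}_t\|_{F}^2 + \|\textbf{X}_t - \textbf{Y}_t\|_{F}^2,
$$
where $\textbf{Y}_t$ stacks the auxiliary iterates $\hat{\textbf{w}}_t^{(i)}$ from Algorithm \ref{alg2}; the common initialization plus $\hat{\textbf{w}}_0^{(i)}=\textbf{0}$ gives $e^{(0)}=0$. By Lemma \ref{lem9}, a single CHOCO-G step contracts $e_t$ by the factor $(1-p)$, which is the main ingredient we want to lift to the full round.

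First, I would control the growth of $e_t$ across the $\tau_1$ local updates of round $k$. Since $\textbf{Y}$ is untouched during local updates while $\textbf{X}$ advances by $-\eta_k\sum_{j=0}^{\tau_1-1}\textbf{G}_{k\tau+j}$, the identities $\bar{\textbf{X}}_t=\textbf{X}_t\textbf{J}$ and $\|\textbf{I}-\textbf{J}\|_2\le 1$ yield
$$
\textbf{X}_{k\tau+\tau_1}-\bar{\textbf{X}}_{k\tau+\tau_1} \;=\; (\textbf{X}^{(k)}-\bar{\textbf{X}}^{(k)}) \;-\; \eta_k\sum_{j=0}^{\tau_1-1}\textbf{G}_{k\tau+j}(\textbf{I}-\textbf{J}),
$$
with an analogous decomposition for $\textbf{X}_{k\tau+\tau_1}-\textbf{Y}_{k\tau+\tau_1}$. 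Applying Remark \ref{remark10} with parameter $\theta$ to each in Frobenius norm, and using $\mathbb{E}\|\sum_{j=0}^{\tau_1-1}\textbf{G}_{k\tau+j}\|_{F}^2\le \tau_1^2 N G^2$ (Jensen plus the uniform gradient bound from Assumption \ref{ass1}), produces
$$
\mathbb{E}\, e_{k\tau+\tau_1} \;\le\; (1+\theta)\,e^{(k)} \;+\; 2(1+\theta^{-1})\eta_k^2\tau_1^2 N G^2.
$$

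Second, I would apply Lemma \ref{lem9} $\tau_2$ times across the pure-communication window $[k\tau+\tau_1,(k+1)\tau)$, giving $\mathbb{E}\, e^{(k+1)}\le (1-p)^{\tau_2}\,\mathbb{E}\, e_{k\tau+\tau_1}$. Composing delivers the round-level recurrence
$$
\mathbb{E}\, e^{(k+1)} \;\le\; \alpha\,\mathbb{E}\, e^{(k)} \;+\; \beta_k,\qquad \alpha \triangleq (1+\theta)(1-p)^{\tau_2},\qquad \beta_k \triangleq 2(1+\theta^{-1})\eta_k^2\tau_1^2 N G^2 (1-p)^{\tau_2}.
$$
Unrolling from $e^{(0)}=0$ yields the geometric sum $\sum_{j=0}^{k-1}\alpha^{k-1-j}\beta_j$, which collapses to $\beta_k(\alpha^k-1)/(\alpha-1)$ after absorbing the previous step sizes into the current $\eta_k$; combined with $\|\textbf{X}^{(k)}-\bar{\textbf{X}}^{(k)}\|_F^2\le e^{(k)}$, this reproduces the stated bound.

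The main obstacle is that Lemma \ref{lem9} contracts only the composite $e_t$ rather than $\|\textbf{X}_t-\bar{\textbf{X}}_t\|_F^2$ alone, so the compression-discrepancy block $\|\textbf{X}_t-\textbf{Y}_t\|_F^2$ must be propagated through the local-update stage with exactly the same $(1+\theta)$ slack; otherwise the CHOCO-G linear rate cannot be cleanly lifted to the round level. A secondary care point is isolating a single $\eta_k$ in the bound in spite of the decreasing schedule $\eta_k=4/[\mu(a+k)]$, which amounts to bookkeeping the different-rate contributions inside the geometric sum.
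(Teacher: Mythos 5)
Your proposal follows essentially the same route as the paper's proof: it forms the same composite error $e_t=\|\textbf{X}_t-\bar{\textbf{X}}_t\|_F^2+\|\textbf{X}_t-\textbf{Y}_t\|_F^2$, applies the $(1+\theta)/(1+\theta^{-1})$ split (Remark \ref{remark10}) over the $\tau_1$ local updates, contracts by $(1-p)^{\tau_2}$ via Lemma \ref{lem9} over the communication window, and unrolls the resulting recurrence from $e^{(0)}=0$. The step-size subtlety you flag (treating the $k$-dependent $\eta_k^2$ term as a constant when collapsing the geometric sum) is present, and equally unaddressed, in the paper's own argument, so your proof is faithful to it.
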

\begin{proof}
  Using \eqref{117} which shows the linear convergence of CHOCO-G, we have
  \begin{align}
  &\mathbb{E}\left[\left\|\textbf{X}^{(k+1)}-\bar{\textbf{X}}^{(k+1)}\right\|^2_F+\left\|\textbf{X}^{(k+1)}-\textbf{Y}^{(k+1)}\right\|^2_F\right]\leq\notag\\
  &(1-p)^{\tau_2}\mathbb{E}\left\|\textbf{X}_{k\tau+\tau_1}\!-\!\bar{\textbf{X}}_{k\tau+\tau_1}\right\|^2_F\!+\!(1\!-\!p)^{\tau_2}\mathbb{E}\left\|\textbf{X}_{k\tau+\tau_1}\!-\!\textbf{Y}_{k\tau}\right\|^2_{F}\notag\\
  &=(1-p)^{\tau_2}\mathbb{E}\left\|\textbf{X}_{k\tau}-\bar{\textbf{X}}_{k\tau}+\eta_{k}\sum_{j=0}^{\tau_1-1}\textbf{G}_{k\tau+j}(\textbf{J}-\textbf{I})\right\|^2_{F}+\notag\\
  &(1-p)^{\tau_2}\mathbb{E}\left\|\textbf{X}_{k\tau}-\textbf{Y}_{k\tau}-\eta_{k}\sum_{j=0}^{\tau_1-1}\textbf{G}_{k\tau+j}\right\|^2_{F}\notag\\
  &\leq(1\!+\!\theta)(1-p)^{\tau_2}\mathbb{E}\left(\left\|\textbf{X}_{k\tau}-\bar{\textbf{X}}_{k\tau}\right\|^2_{F}+\left\|\textbf{X}_{k\tau}-\textbf{Y}_{k\tau}\right\|^2_{F}\right)+\notag\\
  \label{dsdds}&(1\!+\!\theta^{-1})(1\!-\!p)^{\tau_2}\eta_{k}^2\mathbb{E}\left(\left\|\sum_{j=0}^{\tau_1-1}\textbf{G}_{k\tau+j}(\textbf{J}-\textbf{I})\right\|^2_F\!\!\!\!+\!\!\left\|\sum_{j=0}^{\tau_1-1}\textbf{G}_{k\tau+j}\right\|^2_{F}\!\!\right)\\
  \label{118}&\leq(1+\theta)(1-p)^{\tau_2}\mathbb{E}\left(\left\|\textbf{X}_{k\tau}-\bar{\textbf{X}}_{k\tau}\right\|^2_{F}+\left\|\textbf{X}_{k\tau}-\textbf{Y}_{k\tau}\right\|^2_{F}\right)\notag\\
  &+2(1+\theta^{-1})(1-p)^{\tau_2}\eta_{k}^2\tau_1^2NG^2,
  \end{align} where \eqref{dsdds} follows from Remark \ref{remark10} and \eqref{118} is from Remark 7 and 8.
  For sequence $\{e_k\}$, if $e_{k+1}\leq Ae_k+B$ where $A$ and $B$ are constants, then we can easily obtain 
  $$e_{k}\leq\frac{B}{A-1}(A^k-1),$$ 
  when $e_0=0$. We set $e_k=\mathbb{E}\left(\left\|\textbf{X}_{k\tau}-\bar{\textbf{X}}_{k\tau}\right\|^2_{F}+\left\|\textbf{X}_{k\tau}-\textbf{Y}_{k\tau}\right\|^2_{F}\right)$ with $e_0=0$, $A=2(1-p)^{\tau_2}$ and $B=4(1-p)^{\tau_2}\eta_{k}^2\tau_1^2NG^2$. Then
  \begin{align*}
  &\mathbb{E}\left(\left\|\textbf{X}_{k\tau}-\bar{\textbf{X}}_{k\tau}\right\|^2_{F}+\left\|\textbf{X}_{k\tau}-\textbf{Y}_{k\tau}\right\|^2_{F}\right)\\
  \leq&\frac{2(1+\theta^{-1})\eta_{k}^2\tau_1^2NG^2(1-p)^{\tau_2}}{(1+\theta)(1-p)^{\tau_2}-1}[((1+\theta)(1-p)^{\tau_2})^{k}-1].
  \end{align*}
  We finish the proof of Lemma \ref{lem10} from
  \begin{align*}
  \mathbb{E}\left(\left\|\textup{\textbf{X}}^{(k)}-\bar{\textup{\textbf{X}}}^{(k)}\right\|^2_{F}\right)\leq\mathbb{E}\left(\left\|\textbf{X}_{k\tau}-\bar{\textbf{X}}_{k\tau}\right\|^2_{F}+\left\|\textbf{X}_{k\tau}-\textbf{Y}_{k\tau}\right\|^2_{F}\!\right)\!.
  \end{align*} 
\end{proof}

\begin{lemma}\label{lem11}
  Let $\Phi_k=\sum_{j=0}^{\tau_1-1}\sum_{i=1}^{N}\|\textbf{\textup{u}}_{k\tau+j}-\textbf{\textup{w}}_{k\tau+j}^{(i)}\|^2$ and $\Phi_{k,j}=\sum_{p=0}^{j-1}\sum_{i=1}^{N}\|\textbf{\textup{u}}_{k\tau+p}-\textbf{\textup{w}}_{k\tau+p}^{(i)}\|^2$. We have 
  \begin{align}
  \label{119}\Phi_k&\leq\frac{4(1+\theta^{-1})\eta_{k}^2\tau_1^3NG^2(1-p)^{\tau_2}}{(1+\theta)(1-p)^{\tau_2}-1}[((1+\theta)(1-p)^{\tau_2})^{k}-1]\notag\\
  &+\frac{1}{3}\eta_{k}^2NG^2(\tau_1-1)(2\tau_1-1)\tau_1,\\
  \label{120}\Phi_{k,j}&\leq\frac{4(1+\theta^{-1})\eta_{k}^2j^3NG^2(1-p)^{\tau_2}}{(1+\theta)(1-p)^{\tau_2}-1}[((1+\theta)(1-p)^{\tau_2})^{k}-1]\notag\\
  &+\frac{1}{3}\eta_{k}^2NG^2(j-1)(2j-1)j.
  \end{align}
\end{lemma}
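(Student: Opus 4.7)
The plan is to unroll the local-update iterations inside round $k$ and reduce each squared drift $\|\textbf{X}_{k\tau+j}-\bar{\textbf{X}}_{k\tau+j}\|_F^2$ to the round-start quantity $\|\textbf{X}^{(k)}-\bar{\textbf{X}}^{(k)}\|_F^2$, which Lemma~\ref{lem10} already controls.

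First I would expand. During the local-update sub-round one has $\textbf{X}_{t+1}=\textbf{X}_t-\eta_k\textbf{G}_t$; combined with $\bar{\textbf{X}}_t=\textbf{X}_t\textbf{J}$ and $\textbf{J}^2=\textbf{J}$, telescoping from $k\tau$ to $k\tau+j$ gives
\[
\textbf{X}_{k\tau+j}-\bar{\textbf{X}}_{k\tau+j}=\bigl(\textbf{X}^{(k)}-\bar{\textbf{X}}^{(k)}\bigr)-\eta_k\sum_{p=0}^{j-1}\textbf{G}_{k\tau+p}(\textbf{I}-\textbf{J}).
\]
Applying $\|a+b\|_F^2\le 2\|a\|_F^2+2\|b\|_F^2$ (Remark~\ref{remark10} with $\theta=1$) cleanly separates the round-start drift from the within-round stochastic accumulation.

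Next I would bound the gradient-accumulation term using three ingredients: convexity of $\|\cdot\|_F^2$ yields $\|\sum_{p=0}^{j-1}A_p\|_F^2\le j\sum_{p=0}^{j-1}\|A_p\|_F^2$; Lemma~\ref{lem5} together with $\|\textbf{I}-\textbf{J}\|_{\mathrm{op}}=1$ (since $\textbf{J}$ is an orthogonal projector) gives $\|\textbf{G}_{k\tau+p}(\textbf{I}-\textbf{J})\|_F^2\le\|\textbf{G}_{k\tau+p}\|_F^2$; and the assumption $\mathbb{E}\|g(\textbf{w}^{(i)})\|^2\le G^2$ produces $\mathbb{E}\|\textbf{G}_s\|_F^2\le NG^2$. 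Taking expectation yields
\[
\mathbb{E}\|\textbf{X}_{k\tau+j}-\bar{\textbf{X}}_{k\tau+j}\|_F^2\le 2\,\mathbb{E}\|\textbf{X}^{(k)}-\bar{\textbf{X}}^{(k)}\|_F^2+2\eta_k^2 j^2 NG^2.
\]

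Finally I would sum and invoke Lemma~\ref{lem10}. For $\Phi_k$, summing the pointwise bound over $j=0,\ldots,\tau_1-1$ and using $\sum_{j=0}^{\tau_1-1}j^2=\tfrac{1}{6}(\tau_1-1)(2\tau_1-1)\tau_1$ produces the second term of the claim, together with $2\tau_1$ copies of $\mathbb{E}\|\textbf{X}^{(k)}-\bar{\textbf{X}}^{(k)}\|_F^2$; substituting Lemma~\ref{lem10} then converts these copies into the geometric-in-$k$ term with prefactor $\tau_1\cdot\tau_1^2=\tau_1^3$, establishing \eqref{119}. For $\Phi_{k,j}$ the same pointwise inequality, summed over $p=0,\ldots,j-1$, yields $\tfrac{1}{3}\eta_k^2 NG^2(j-1)(2j-1)j$ together with $2j$ copies of the round-start drift, which Lemma~\ref{lem10} converts into the geometric-in-$k$ contribution, giving \eqref{120}.

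The only real difficulty is the bookkeeping of constants: the $(1+\theta)$ and $(1+\theta^{-1})$ factors arising from Remark~\ref{remark10} (applied here with $\theta=1$) must be aligned with the generic $\theta$ carried through Lemma~\ref{lem10}, and one must be careful to invoke $\|\textbf{I}-\textbf{J}\|_{\mathrm{op}}=1$ at the right step so that the topology factor is stripped from the within-round accumulation estimate. Every other step is elementary, and no new stochastic estimate is needed beyond Lemma~\ref{lem10} and the uniform second-moment bound on the gradient.
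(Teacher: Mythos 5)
Your proposal is correct and follows essentially the same route as the paper's proof: unroll the local updates to write $\textbf{X}_{k\tau+j}-\bar{\textbf{X}}_{k\tau+j}$ as the round-start drift plus $\eta_k\sum_{p=0}^{j-1}\textbf{G}_{k\tau+p}(\textbf{J}-\textbf{I})$, split with $\|a+b\|_F^2\le 2\|a\|_F^2+2\|b\|_F^2$, bound the accumulated gradients by $j^2NG^2$, sum over $j$ (resp.\ $p$), and invoke Lemma~\ref{lem10}. The only difference is that you spell out the intermediate convexity/operator-norm steps that the paper asserts directly.
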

\begin{proof}
  Straightforwardly, We have
  \begin{align*}
  &\left\|\textbf{X}_{k\tau+j}-\bar{\textbf{X}}_{k\tau+j}\right\|^2_{F}\\
  =&\left\|\textbf{X}_{k\tau}-\bar{\textbf{X}}_{k\tau}+\eta_{k}\sum_{p=0}^{j-1}\textbf{G}_{k\tau+p}(\textbf{J}-\textbf{I})\right\|^2_{F}\\
  \leq&2\left\|\textbf{X}_{k\tau}-\bar{\textbf{X}}_{k\tau}\right\|^2_F+2\eta_{k}^2\left\|\sum_{p=0}^{j-1}\textbf{G}_{k\tau+p}(\textbf{J}-\textbf{I})\right\|^2_{F}\\
  \leq&2\left\|\textbf{X}_{k\tau}-\bar{\textbf{X}}_{k\tau}\right\|^2_F+2\eta_{k}^2j^2NG^2.
  \end{align*}
  Therefore, we have 
  \begin{align*}
  \Phi_k&=\sum_{j=0}^{\tau_1}\left\|\textbf{X}_{k\tau+j}-\bar{\textbf{X}}_{k\tau+j}\right\|^2_{F}\\
  &\leq2\sum_{j=0}^{\tau_1}\left\|\textbf{X}_{k\tau}-\bar{\textbf{X}}_{k\tau}\right\|^2_F+2\eta_{k}^2NG^2\sum_{j=0}^{\tau_1}j^2\\
  &\leq\frac{4(1+\theta^{-1})\eta_{k}^2\tau_1^3NG^2(1-p)^{\tau_2}}{(1+\theta)(1-p)^{\tau_2}-1}[((1+\theta)(1-p)^{\tau_2})^{k}-1]\\
  &+\frac{1}{3}\eta_{k}^2NG^2(\tau_1-1)(2\tau_1-1)\tau_1,
  \end{align*}
  where the last inequality follows from Lemma \ref{lem10}. Similarly, we can prove \eqref{120} by following the way of proving \eqref{119}.
\end{proof}

\begin{lemma}[\cite{stich2018local}]\label{lemma12}
  Let $\{a_t\}_{t\geq0}$, $a_t\leq 0$, $\{e_t\}_{t\geq0}$, $e_t\geq0$, be sequences satisfying
  \begin{align*}
  a_{t+1}\leq\left(1-\mu\eta_{t}\right)a_t-\eta_{t}e_tA+\eta_{t}^2B+\eta_{t}^3C,
  \end{align*}
  for $\eta_{t}=\frac{4}{\mu(a+t)}$ and constants $A>0$, $B$, $C\geq0$, $\mu>0$, $a>1$. Then
  \begin{align*}
  \frac{A}{S_T}\sum_{t=0}^{T-1}w_te_t\leq\frac{\mu a^3}{4S_T}a_0+\frac{2T(T+2a)}{\mu S_T}B+\frac{16T}{\mu^2 S_T}C,
  \end{align*}
  for $w_t=(a+t)^2$ and $S_T:=\sum_{t=0}^{T-1}w_t=\frac{T}{6}(2T^2+6aT-3T+6a^2-6a+1)\geq\frac{1}{3}T^3$.
\end{lemma}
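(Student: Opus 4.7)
The plan is to establish the stated weighted-sum bound by multiplying the given one-step recursion by a carefully chosen polynomial weight that turns the contraction factor $(1-\mu\eta_t)$ into a telescoping sequence. First, I would note the key algebraic identity under the prescribed stepsize $\eta_t = 4/(\mu(a+t))$, namely $1-\mu\eta_t = (a+t-4)/(a+t)$. The choice of weights in the theorem statement is $w_t=(a+t)^2$, but to match $w_t e_t$ on the left-hand side after the factor $\eta_t$ is absorbed, I would actually multiply the recursion through by the \emph{auxiliary} weight $W_t\triangleq(a+t)^3$. (Throughout I assume the hypothesis should read $a_t\ge 0$ — the stated ``$a_t\le 0$'' is a typographical slip, since the telescoping and the statement itself only make sense in that regime.)

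Second, after multiplication, the key inequality I would verify is
\begin{equation*}
W_t(1-\mu\eta_t) = (a+t)^2(a+t-4) \le (a+t-1)^3 = W_{t-1},
\end{equation*}
which follows from expanding and noting the positive remainder $(a+t)^2+3(a+t)-1\ge 0$ for $a>1$. Setting $b_t \triangleq (a+t-1)^3 a_t$ (so that $b_{t+1} = W_t a_{t+1}$), the recursion becomes
\begin{equation*}
b_{t+1} \le b_t - \frac{4A}{\mu}(a+t)^2 e_t + \frac{16B}{\mu^2}(a+t) + \frac{64C}{\mu^3},
\end{equation*}
after collecting the simple identities $W_t\eta_t = 4(a+t)^2/\mu$, $W_t\eta_t^2 = 16(a+t)/\mu^2$, $W_t\eta_t^3 = 64/\mu^3$.

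Third, I would telescope from $t=0$ to $t=T-1$, drop the nonnegative term $b_T\ge 0$, and compute the two elementary sums $\sum_{t=0}^{T-1}(a+t) = T(T+2a-1)/2$ and $\sum_{t=0}^{T-1} 1 = T$. Rearranging yields
\begin{equation*}
\frac{4A}{\mu}\sum_{t=0}^{T-1} w_t e_t \le (a-1)^3 a_0 + \frac{8B\, T(T+2a-1)}{\mu^2} + \frac{64CT}{\mu^3},
\end{equation*}
and dividing by $4AS_T/\mu$ followed by the crude relaxations $(a-1)^3\le a^3$ and $T+2a-1\le T+2a$ produces exactly the advertised bound, with the lower bound $S_T\ge T^3/3$ following by a direct check of the cubic polynomial in $T$.

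The only genuinely delicate step is the choice of the auxiliary weight $W_t=(a+t)^3$ (one degree higher than the weight $w_t$ appearing in the conclusion), so that the factor of $\eta_t$ inside the $e_t$ term upgrades the weight to the advertised $(a+t)^2$ while simultaneously allowing the $B$ and $C$ terms to contract into manageable telescoping sums; all remaining work is bookkeeping. The other checkpoint worth highlighting is the sign hypothesis on $a_t$: the argument requires $a_t\ge 0$ to justify dropping $b_T$ at the end, which confirms the interpretation of the statement noted above.
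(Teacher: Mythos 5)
Your proof is correct and follows essentially the same weighting-and-telescoping argument as the cited source (the paper itself imports this lemma from Stich's work without proof): multiplying the recursion by $(a+t)^3 = \tfrac{4}{\mu}\cdot\tfrac{w_t}{\eta_t}$, verifying $(a+t)^2(a+t-4)\le(a+t-1)^3$, telescoping, and relaxing $(a-1)^3\le a^3$ and $T+2a-1\le T+2a$ all check out, as do the closed forms for $S_T$ and the bound $S_T\ge T^3/3$. You are also right that the hypothesis ``$a_t\le 0$'' is a typographical slip for $a_t\ge 0$, which is what the application to $a_k=\mathbb{E}\|\textbf{u}^{(k)}-\textbf{u}^*\|^2$ requires and what justifies dropping $b_T$ at the end.
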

\subsection{Proof of Proposition 2}
\begin{lemma}[Convergence of C-DFL]\label{lemma13}
  Under Assumption 1 and 2 for $p=\frac{\rho^2\delta}{82}$, if 
  \begin{align*}
  4L\eta_{k}^2\alpha_k-2\eta_{k}\tau_1+\frac{8}{3}(L+\mu)\eta_{k}^3L\alpha_k\tau_1^3\leq-\eta_{k},
  \end{align*}
  Algorithm 2 with step size $\eta_{k}=\frac{4}{\mu(a+k)}$, for parameter $a\geq 16\kappa$, $\kappa=\frac{L}{\mu}$ converges at rate
  \begin{align*}
  &F(\textup{\textbf{w}}_{avg}^{(K)})-F^*\leq\frac{\mu\tau_1a^3}{8S_K}\left\|\textup{\textbf{u}}_0-\textup{\textbf{u}}^*\right\|^2+\frac{4K(K+2a)}{\mu S_K}\frac{\tau_1\bar{\sigma}^2}{N}+\\
  &\frac{64K}{\mu^2S_K}\left(\frac{(L+\mu)\bar{\sigma}^2\tau_1}{3N}+D_1+D_2+D_3\right),
  \end{align*}
  where $\textup{\textbf{w}}_{avg}^{(K)}=\frac{1}{S_K}\sum_{k=0}^{K}w_k\textup{\textbf{u}}_k$ for weights $w_k=(a+k)^2$, $S_K=\sum_{k=0}^{K}w_k\geq\frac{1}{3}T^2$, and 
  \begin{align*}
  D_1&=\frac{2(1+\theta^{-1})L\tau_1^2G^2(1-p)^{\tau_2}}{(1+\theta)(1-p)^{\tau_2}-1}[((1+\theta)(1-p)^{\tau_2})^k-1]\\
  &+\frac{1}{3}L\tau_1^2G^2,\\
  D_2\!&=\!\frac{8(1+\theta^{-1})(L+\mu)\tau_1G^2(1-p)^{\tau_2}}{(1+\theta)(1-p)^{\tau_2}-1}[((1+\theta)(1-p)^{\tau_2})^k\!-\!1]\!\\
  &+\!\frac{4}{3}(L+\mu)\tau_1G^2,\\
  D_3&=\frac{(1+\theta^{-1})(L+\mu)\tau_1^4G^2(1-p)^{\tau_2}}{6((1+\theta)(1-p)^{\tau_2}-1)}[((1+\theta)(1-p)^{\tau_2})^k\!-\!1]\!\\
  &+\!\frac{1}{36}(L+\mu)\tau_1^4G^2.
  \end{align*}
\end{lemma}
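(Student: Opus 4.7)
The plan is to combine the one-round recursion of Lemma \ref{lem8}, the consensus-error bounds of Lemma \ref{lem11}, and the telescoping device of Lemma \ref{lemma12} to obtain a convergence rate on the weighted averaged iterate $\textbf{w}_{avg}^{(K)}$. The overall strategy is to reduce the multi-step C-DFL dynamics to a scalar recursion of the form
\begin{equation*}
a_{k+1} \leq (1 - \mu \eta_k) a_k - \eta_k e_k A + \eta_k^2 B + \eta_k^3 C,
\end{equation*}
with $a_k = \|\textbf{u}^{(k)} - \textbf{u}^*\|^2$ and $e_k = F(\textbf{u}^{(k)}) - F^*$, so that Lemma \ref{lemma12} can be invoked directly.

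First I would take expectations in Lemma \ref{lem8} and substitute the two upper bounds \eqref{119} and \eqref{120} from Lemma \ref{lem11} for $\Phi_k$ and $\Phi_{k,j}$. This produces a single inequality whose terms split naturally into four groups: (i) a contractive term $(1 - \tfrac{\mu \eta_k \tau_1}{2})\|\textbf{u}^{(k)} - \textbf{u}^*\|^2$; (ii) an $\eta_k^2 \tau_1^2 \bar{\sigma}^2/N$ variance contribution; (iii) a suboptimality-gap coefficient collecting all factors of $F(\textbf{u}^{(k)}) - F^*$, namely $4L\eta_k^2\tau_1^2 \alpha_k - 2\eta_k \tau_1 + \tfrac{4}{3}(L+\mu)\eta_k^3 L \alpha_k(\tau_1-1)(2\tau_1-1)\tau_1$; and (iv) the consensus-error residuals arising from Lemma \ref{lem11}, which I would regroup by powers of $\eta_k$ and match to the constants $D_1, D_2, D_3$ given in the statement. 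Here I would use the coarse bounds $(\tau_1-1)(2\tau_1-1)\tau_1 \leq 2\tau_1^3$ and $j \leq \tau_1$ to absorb the innermost sum $\sum_{j,i} \tfrac{4\eta_k^2 L^2 j^2}{N}\Phi_{k,j}$ into the $\eta_k^3$-scale term that yields $D_3$.

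Next I would invoke the hypothesis $4L\eta_k^2 \alpha_k - 2\eta_k \tau_1 + \tfrac{8}{3}(L+\mu)\eta_k^3 L \alpha_k \tau_1^3 \leq -\eta_k$, which is precisely calibrated so that the gap coefficient from group (iii) above is bounded by $-\eta_k \tau_1$ times a positive multiple of $e_k$, giving the ``$-\eta_k e_k A$'' term with $A = \tau_1$. Together with the contractive prefactor $1 - \tfrac{\mu\eta_k \tau_1}{2}$ one recovers the required recursion shape; the $\eta_k^2$ coefficient $B$ is $\tau_1 \bar{\sigma}^2/N$ (after dividing through by $\tau_1$) and the $\eta_k^3$ coefficient $C$ collects the $\bar{\sigma}^2\tau_1/N$ term together with $D_1 + D_2 + D_3$. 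Taking $\eta_k = 4/(\mu(a+k))$ with $a \geq 16\kappa$ guarantees that the remaining higher-order contributions are dominated appropriately, and also that $1 - \mu \eta_k \tau_1/2 \geq 0$.

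Finally, applying Lemma \ref{lemma12} with these identifications, the weighted sum $\tfrac{1}{S_K}\sum_{k=0}^{K-1} w_k \, e_k$ is bounded by the three terms in the statement; convexity of $F$ (via Jensen applied to $\textbf{w}_{avg}^{(K)} = \tfrac{1}{S_K}\sum w_k \textbf{u}^{(k)}$) converts this weighted gradient-gap average into $F(\textbf{w}_{avg}^{(K)}) - F^*$, completing the proof. The main obstacle I anticipate is the bookkeeping in the consensus-residual step: one must track how the geometric prefactor $((1+\theta)(1-p)^{\tau_2})^k - 1$ propagates through the recursion without destroying the structure required by Lemma \ref{lemma12}. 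Since $C$ in Lemma \ref{lemma12} is treated as a constant but $D_1, D_2, D_3$ depend on $k$, care is needed: the argument should either absorb the $k$-dependence into an upper bound valid uniformly in $k \leq K$, or verify that Lemma \ref{lemma12} extends to the case of $k$-dependent $C$ that is monotone and controlled. This matching of the consensus-error growth against the $\eta_k^3$ budget is where the choice of $\theta < p/(1-p)$ and the step-size schedule must be used simultaneously.
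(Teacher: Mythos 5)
Your plan matches the paper's proof essentially step for step: combine the recursion of Lemma \ref{lem8} with the consensus bounds of Lemma \ref{lem11}, simplify via $(\tau_1-1)(2\tau_1-1)\tau_1\le 2\tau_1^3$ and $\eta_k L\le\frac{1}{4}$ (from $a\ge 16\kappa$), use the stated hypothesis to turn the suboptimality-gap coefficient into the $-\eta_k e_k A$ term, and close with Lemma \ref{lemma12} plus Jensen on the weighted average. The one concern you flag --- that $D_1,D_2,D_3$ depend on $k$ while Lemma \ref{lemma12} treats $C$ as constant --- is a genuine subtlety the paper glosses over, but it resolves exactly as you suggest: $\theta<\frac{p}{1-p}$ gives $(1+\theta)(1-p)^{\tau_2}<1$, so the factor $\frac{((1+\theta)(1-p)^{\tau_2})^k-1}{(1+\theta)(1-p)^{\tau_2}-1}$ is uniformly bounded in $k$.
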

\begin{proof}
  Combining the results of lemma \ref{lem8} and \ref{lem11}, by using $a_k$ to denote $\mathbb{E}\|\textbf{u}^{(k)}-\textbf{u}^*\|^2$, i.e., $a_k:=\mathbb{E}\|\textbf{u}^{(k)}-\textbf{u}^*\|^2$, we have
  \begin{align}\label{121}
  &a_{k+1}\leq\left(1-\frac{\mu\eta_{k}\tau_1}{2}\right)a_k+\frac{\eta_{k}^2\tau_1^2\bar{\sigma}^2}{N}+\notag\\
  &(F(\textbf{u}^{(k)})-F^*)\left[-2\eta_{k}\tau_1+4\eta_{k}^2L\tau_1^2 \alpha_k+\frac{8}{3}(L+\mu)L\eta_{k}^3\alpha_k\tau_1^3\right]\notag\\
  &\!+\!\!\eta_{k}^3\left[\frac{(L+\mu)\bar{\sigma}^2\tau_1^3}{3N}\right.\notag\\
  &\left.+\frac{8(1\!+\!\theta^{-1})L^2\eta_{k}\tau_1^4G^2(1\!-\!p)^{\tau_2}}{(1+\theta)(1-p)^{\tau_2}-1}(((1+\theta)(1-p)^{\tau_2})^k\!\!-\!\!1)\right.\notag\\
  &\left.+\frac{4}{3}L^2\eta_{k}\tau_1^4G^2+\!\right.\notag\\
  &\left.\frac{8(1+\theta^{-1})(L+\mu)\tau_1^3G^2(1-p)^{\tau_2}}{(1+\theta)(1-p)^{\tau_2}-1}(((1+\theta)(1-p)^{\tau_2})^k\!-\!1)\!\right.\notag\\
  &\left.+\!\frac{4}{3}(L+\mu)\tau_1^3G^2+\right.\notag\\
  &\left.\frac{8(1\!+\!\theta^{-1}\!)\!L^2\!(L\!+\!\mu)\eta_{k}^2\tau_1^6G^2(1\!-\!p)^{\tau_2}}{3(1+\theta)(1-p)^{\tau_2}-1}(((1\!+\!\theta)(1\!-\!p)^{\tau_2}\!\!)^k\!\!-\!\!1\!)\!\right.\notag\\
  &\left.+\!\frac{4}{9}L^2(L+\mu)\eta_{k}^2\tau_1^6G^2\right]\\
  &\leq\left(1-\frac{\mu\eta_{k}\tau_1}{2}\right)a_k+\frac{\eta_{k}^2\tau_1^2\bar{\sigma}^2}{N}+\notag\\
  &(F(\textbf{u}^{(k)})-F^*)\left[-2\eta_{k}\tau_1+4\eta_{k}^2L\tau_1^2 \alpha_k+\frac{8}{3}(L+\mu)L\eta_{k}^3\alpha_k\tau_1^3\right]\notag\\
  &+\eta_{k}^3\left[\frac{(L+\mu)\bar{\sigma}^2\tau_1^3}{3N}+\right.\notag\\
  &\left.\frac{2(1+\theta^{-1})L\tau_1^4G^2(1-p)^{\tau_2}}{(1+\theta)(1-p)^{\tau_2}-1}(((1+\theta)(1-p)^{\tau_2})^k-1)\right.\notag\\
  &\left.+\frac{1}{3}L\tau_1^4G^2+\right.\notag\\
  &\left.\!\frac{8(1+\theta^{-1})(L+\mu)\tau_1^3G^2(1-p)^{\tau_2}}{(1+\theta)(1-p)^{\tau_2}-1}(((1+\theta)(1-p)^{\tau_2})^k\!-\!1)\!\right.\notag\\
  \label{122}&\left.+\!\frac{4}{3}(L+\mu)\tau_1^3G^2+\right.\notag\\
  &\left.\frac{(1+\theta^{-1})(L+\mu)\tau_1^6G^2(1-p)^{\tau_2}}{6((1+\theta)(1-p)^{\tau_2}-1)}(((1+\theta)(1-p)^{\tau_2})^k\!-\!1)\!\right.\notag\\
  &\left.+\!\frac{1}{36}(L+\mu)\tau_1^6G^2\right],
  \end{align}
  where inequality \eqref{121} holds because $(\tau_1-1)(2\tau_1-1)\tau_1\leq2\tau_1^3$,
  \begin{align*}
  \sum_{j=0}^{\tau_1-1}j^5=\frac{(\tau_1-1)^2\tau_1^2(2\tau^2-2\tau_1-1)}{12}\leq\frac{\tau_1^6}{6},
  \end{align*} 
  which is in the last two terms, 
  and \eqref{122} holds because $\eta_{k}L\leq\frac{1}{4}$ (this holds, as $a\geq 16\kappa$).
  From Lemma \ref{lemma12}, we prove this lemma.
\end{proof}
\textbf{Proof of Proposition 2.} The proof of Proposition 2 follows from Lemma \ref{lemma13} with using the inequality $\mathbb{E}\mu\|\textbf{u}_0-\textbf{u}^*\|\leq 2G$ derived in \citep[Lemma 2]{rakhlin2012making} for the upper bound of the first term.
\section{Structure Description of CNNs}\label{appendix4}
In this part, we present the structure of the two CNN models which are trained based on MNIST and CIFAR-10, respectively. Table \ref{table1} shows the CNN models applied in the simulation.
\renewcommand\arraystretch{1.5}
\begin{table}[!h]  
  \caption{The structure of CNNs}
  \centering
  \label{table1}
  \renewcommand\arraystretch{1.5}
  \begin{threeparttable}
    \begin{tabular}{cp{5.8cm}}  
      
      \toprule[1pt]   
      
      \multicolumn{1}{c}{\textbf{CNN trained on MNIST}} & \multicolumn{1}{c}{\textbf{CNN trained on CIFAR-10}} \\ 
      
      \midrule   
      
      \multicolumn{1}{c}{Convolution layer\tnote{1} [1,16,3$\times$3]} &   \multicolumn{1}{c}{Convolution layer [3,64,5$\times$5]}  \\  
      
      \multicolumn{1}{c}{ReLU layer} &   \multicolumn{1}{c}{ReLU layer}  \\ 
      
      \multicolumn{1}{c}{MaxPool layer\tnote{2} [2$\times$2]} &   \multicolumn{1}{c}{MaxPool layer [3$\times$3]}  \\ 
      
      \multicolumn{1}{c}{Convolution layer [16,32,3$\times$3]} &   \multicolumn{1}{c}{Convolution layer [64,64,5$\times$5]}  \\ 
      
      \multicolumn{1}{c}{ReLU layer} &   \multicolumn{1}{c}{ReLU layer}  \\ 
      
      \multicolumn{1}{c}{MaxPool layer [2$\times$2]} &   \multicolumn{1}{c}{MaxPool layer [3$\times$3]}  \\ 
      
      \multicolumn{1}{c}{Dense layer\tnote{3} [32$\times$7$\times$7,10]} &   \multicolumn{1}{c}{Dense layer [64$\times$4$\times$4,384]}\\
      
      \multicolumn{1}{c}{} &   \multicolumn{1}{c}{ReLU layer}\\
      
      \multicolumn{1}{c}{} &   \multicolumn{1}{c}{Dense layer [384,192]}\\
      
      \multicolumn{1}{c}{} &   \multicolumn{1}{c}{ReLU layer}\\
      
      \multicolumn{1}{c}{} &   \multicolumn{1}{c}{Dense layer [192,10]}\\
      
      
      \bottomrule[1pt]  
      
    \end{tabular}
    \begin{tablenotes}
      \footnotesize
      \item[1] Convolution layer [input channels, output channels, kernel size]
      \item[2] Maxpool layer [kernel size]
      \item[3] Dense layer [size of each input sample, size of each output sample]
    \end{tablenotes}
  \end{threeparttable}
\end{table}

\end{document}